\newtheorem{theorem}{Theorem}
\newtheorem{lemma}{Lemma}
\newcommand{\R}{\mathbb{R}}
\newcommand{\ol}{\mathcal{A}}
\newcommand{\w}{\mathring{w}}
\renewcommand{\v}{\mathring{v}}
\newcommand{\wealth}{\text{Wealth}}
\newcommand{\argmin}{\mathop{\text{argmin}}}
\newcommand{\clip}{\text{clip}}
\newcommand{\sign}{\text{sign}}
\icmltitlerunning{Matrix-Free Preconditioning in Online Learning}
\begin{document}

\twocolumn[
\icmltitle{Matrix-Free Preconditioning in Online Learning}



\icmlsetsymbol{equal}{*}

\begin{icmlauthorlist}
\icmlauthor{Ashok Cutkosky}{google}
\icmlauthor{Tamas Sarlos}{google}
\end{icmlauthorlist}

\icmlaffiliation{google}{Google Research, California, USA}

\icmlcorrespondingauthor{Ashok Cutkosky}{cutkosky@google.com}

\icmlkeywords{Online Learning, Stochastic Optimization}

\vskip 0.3in
]



\printAffiliationsAndNotice{}  

\begin{abstract}
We provide an online convex optimization algorithm with regret that interpolates between the regret of an algorithm using an optimal preconditioning matrix and one using a diagonal preconditioning matrix. Our regret bound is never worse than that obtained by diagonal preconditioning, and in certain setting even surpasses that of algorithms with full-matrix preconditioning. Importantly, our algorithm runs in the same time and space complexity as online gradient descent. Along the way we incorporate new techniques that mildly streamline and improve logarithmic factors in prior regret analyses. We conclude by benchmarking our algorithm on synthetic data and deep learning tasks.
\end{abstract}

\section{Online Learning}
This paper considers the online linear optimization (OLO) problem. An OLO algorithm chooses output vectors $w_t\in \R^d$ in response to linear losses $\ell_t(w)=g_t\cdot w$ for some $g_t\in \R^d$. Performance is measured by the \emph{regret} \citep{shalev2012online, zinkevich2003online}:
\[
R_T(\w) = \sum_{t=1}^T \ell_t(w_t)-\ell_t(\w) =\sum_{t=1}^T g_t\cdot (w_t - \w)
\]
OLO algorithms are important because they also solve solve online \emph{convex} optimization problems, in which the losses $\ell_t$ need be only convex by virtue of taking $g_t$ to be a gradient (or subgradient) of the $t$th convex loss. Even better, these algorithms also solve \emph{stochastic} convex optimization problems by setting $\ell_t$ to be the $t$th minibatch loss and $\w$ to be the global minimizer \citep{cesa2004generalization}. Due to both the simplicity of the linear setting and the power of the resulting algorithms, OLO has become a successful and popular framework for designing and analyzing many of the algorithms used to train machine learning models today.

Our goal is to obtain \emph{adaptive} regret bounds so that $R_T(\w)$ may be much smaller in easier problems while still maintaining optimal worst-case guarantees. One relevant prior result is the ``diagonal preconditioner'' approach of Adagrad-style algorithms \citep{duchi2011adaptive,mcmahan2010adaptive}:
\begin{align}
R_T(\w) \le \sum_{i=1}^d \|\w\|_\infty \sqrt{\sum_{t=1}^Tg_{t,i}^2}\label{eqn:diagada}
\end{align}
where $g_{t,i}$ indicates the $i$th coordinate of $g_t$. This bound can be achieved via gradient descent with learning rates properly tuned to the value of $\|\w\|_\infty$, and algorithms of this flavor have found much use in practice. Similar regret bounds that do not require tuning to the value of $\|\w\|_\infty$  can be obtained by making use of a Lipschitz assumption $\|g_t\|\le G$, leading to a bound of the form \citep{cutkosky2018black}:
\begin{align}
R_T(\w) \le O\left(\epsilon + \sum_{i=1}^d |\w_i|\sqrt{\log(\tfrac{d|\w_i|T}{\epsilon})\sum_{t=1}^T g_{t,i}^2}\right)\label{eqn:diagparamfree}
\end{align}
where $\epsilon$ is a free parameter representing the ``regret at the origin''. The extra logarithmic factor is an unavoidable penalty for this extra adaptivity \citep{mcmahan2012no}. This bound has the advantage that by Cauchy-Schwarz it is at most a logarithmic factor away from $\|\w\|_2\sqrt{\sum_{t=1}^T \|g_t\|_2^2}$, while the diagonal Adagrad bound may be a factor of $\sqrt{d}$ worse due to the $\|\w\|_\infty$ instead of $\|\w\|_2$.

Another type of bound is the ``full-matrix Adagrad'' bound \citep{duchi2011adaptive}
\begin{align}
R_T(\w) \le \|\w\|_2\text{Tr}\left(\sqrt{\sum_{t=1}^T g_tg_t^T}\right)\label{eqn:fullmatrixada}
\end{align}
or the more recent improved full-matrix bounds of \citep{cutkosky2018black, koren2017affine}:
\begin{align}
R_T(\w) \le \sqrt{d\sum_{t=1}^T \langle g_t, \w\rangle^2}\label{eqn:fullmatrixcoin}
\end{align}

The above bound (\ref{eqn:fullmatrixcoin}) may be much better than the diagonal bound, but unfortunately the algorithms involve manipulating a $d\times d$ matrix (often called a ``preconditioning matrix'') and so require $O(d^2)$ time per update. Our goal is to design an algorithm that maintains $O(d)$ time per update, but still manages to smoothly interpolate between the diagonal bound (\ref{eqn:diagparamfree}) and full-matrix bounds (\ref{eqn:fullmatrixcoin}).

Efficiently approximating the performance of full-matrix algorithms is an active area of research. Prior approaches include approximations based on sketching, low-rank approximations, and exploiting some assumed structure in the gradients \citep{luo2016efficient, gupta2018shampoo, agarwal2018case, gonen2015faster, martens2015optimizing}. The typical approach is to trade off computational complexity for approximation quality by maintaining some kind of lossy compressed representation of the preconditioning matrix. The properties of these tradeoffs vary: for some strategies one may obtain worse-regret than a non-full matrix algorithm (or even linear regret) if the data is particularly adversarial, while for others one may be unable to see nontrivial gains without significant complexity penalties. Our techniques are rather different, and so we make no complexity tradeoffs, never suffer worse regret than diagonal algorithms, and yet still obtain full-matrix bounds in favorable situations.

This paper is organized as follows: In Section \ref{sec:betting} we give some background in online learning analysis techniques that we will be using. In Sections \ref{sec:recursive}-\ref{sec:full_regret_bound} we state and analyze our algorithm that can interpolate between full-matrix and diagonal regret bounds efficiently. In Section \ref{sec:experiments} we provide an empirical evaluation of our algorithm.

\section{Betting Algorithms}\label{sec:betting}

A recent technique for designing online algorithms is via the wealth-regret duality approach \citep{mcmahan2014unconstrained} and betting algorithms \citep{orabona2016coin}. In betting algorithms, one keeps track of the ``wealth'':
\[
\wealth_T = \epsilon-\sum_{t=1}^T g_t\cdot w_t
\]
where $\epsilon>0$ is some user-defined hyperparameter. The goal is to make the wealth as big as possible, because
\[
R_T(\w) = \epsilon-\wealth_T - \sum_{t=1}^T g_t\cdot \w
\]
and in some sense the wealth is the only part of the above expression that the algorithm actually has control over.

Specifically, we want to obtain a statement like:
\[
\wealth_T \ge f\left(-\sum_{t=1}^T g_t\cdot \frac{\w}{\|\w\|}\right)
\]
for some function $f$, which exists only for analysis purposes here. Given this inequality, we can write:
\begin{align*}
    R_T(\w) &= \epsilon -\sum_{T=1}^T g_t\cdot \w - \wealth_T\\
    &\le \epsilon -\sum_{T=1}^T g_t\cdot \w - f\left(-\sum_{t=1}^T g_t\cdot \frac{\w}{\|\w\|}\right)\\
    &\le \epsilon + \sup_{X\in \R}  X \|\w\| - f(X)\\
    &= \epsilon + f^\star(\|\w\|)
\end{align*}
where $f^\star$ is the Fenchel conjugate, defined by $f^\star(u) = \sup_{x} x u - f(x)$. Formally, we have:

\begin{lemma}\label{thm:duality}
If $\wealth_T \ge f\left(-\sum_{t=1}^T \frac{g_t\cdot \w}{\|\w\|}\right) $ for some arbitrary norm $\|\cdot\|$ and function $f$, then $R_T(\w) \le \epsilon + f^\star(\|\w\|)$.
\end{lemma}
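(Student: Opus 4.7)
The proof is essentially spelled out in the chain of inequalities immediately preceding the lemma, so my plan is to justify each line carefully and then note one small edge case.

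First I would start from the definition of wealth, $\wealth_T = \epsilon - \sum_{t=1}^T g_t \cdot w_t$, and rearrange the regret identity to write
\[
R_T(\w) = \sum_{t=1}^T g_t\cdot w_t - \sum_{t=1}^T g_t\cdot\w = \epsilon - \wealth_T - \sum_{t=1}^T g_t\cdot\w .
\]
This rewrites the regret in a form where the only algorithm-dependent quantity is $\wealth_T$, which the hypothesis lower-bounds.

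Next I would apply the hypothesis $\wealth_T \ge f\bigl(-\sum_{t=1}^T g_t\cdot\w/\|\w\|\bigr)$ to upper bound $-\wealth_T$ and introduce the auxiliary variable $X := -\sum_{t=1}^T g_t\cdot\w/\|\w\|$. By homogeneity of the inner product, $-\sum_{t=1}^T g_t \cdot \w = X\|\w\|$, so
\[
R_T(\w) \le \epsilon + X\|\w\| - f(X) \le \epsilon + \sup_{Y\in\R}\bigl(Y\|\w\| - f(Y)\bigr) = \epsilon + f^\star(\|\w\|),
\]
using the definition of the Fenchel conjugate in the final step.

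There is essentially no obstacle here; the only subtlety is the case $\w = 0$, where the substitution $X = -\sum_{t=1}^T g_t\cdot\w/\|\w\|$ is undefined. This case is trivial, however, because $R_T(0) = \epsilon - \wealth_T \le \epsilon - f(0) \le \epsilon + f^\star(0)$ (since $f^\star(0) = \sup_Y -f(Y) \ge -f(0)$), so the bound still holds. Thus the lemma follows directly from rearranging the regret, applying the hypothesis, and invoking the definition of the Fenchel conjugate.
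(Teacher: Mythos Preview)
Your proposal is correct and follows exactly the same argument as the paper: rewrite the regret as $\epsilon - \wealth_T - \sum_t g_t\cdot\w$, apply the wealth lower bound, and recognize the resulting expression as bounded by the Fenchel conjugate. Your explicit treatment of the $\w=0$ edge case is a small addition beyond what the paper spells out, but otherwise the arguments are identical.
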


One way to increase the wealth is to view the vectors $w_t$ as some kind of ``bet'' and $g_t$ as some kind of outcome (e.g. imagine that $w_t$ is a portfolio and $-g_t$ is a vector of returns). Then the amount of ``money'' you win at time $t$ is $-g_t\cdot w_t$ and so $\wealth_T$ is the total amount of money you have at time $T$, assuming you started out with $\epsilon$ units of currency.

In order to leverage this metaphor, we make a critical assumption: $\|g_t\|_\star\le 1$ for all $t$.
Here $\|\cdot\|_\star$ is the dual norm, $\|g\|_\star=\sup_{\|w\|\le 1} g\cdot w$ (e.g. when $\|\cdot\|$ is the 2-norm, $\|\cdot\|_\star$ is also the 2-norm, and when $\|\cdot\|$ is the infinity-norm, $\|\cdot\|_\star$ is the 1-norm). There is nothing special about $1$ here; we may choose any constant, but use $1$ for simplicity.

Under this assumption, guaranteeing $R_T(0)\le \epsilon$ is equivalent to never going into debt (i.e. $\wealth_T < 0$). We assure this by never betting more than we have: $\|w_t\|\le \wealth_{t-1}$. In fact, in order to simplify subsequent calculations, we will ask for a somewhat stronger restriction:
\begin{equation}
\|w_t\|< \frac{1}{2}\wealth_{t-1}\label{eqn:wlessthanwealth}
\end{equation}

Given (\ref{eqn:wlessthanwealth}), we can also write
\[
w_t = v_t \wealth_{t-1}
\]
where $v_t$ is a vector with $\|v_t\|<1/2$, which we call the ``betting fraction''. $v_t$ is a kind of ``learning rate'' analogue. However, in the $d$-dimensional setting $v_t$ is only a $d$-dimensional vector, while previous full-matrix algorithms use a learning rate analogue that is a $d\times d$ matrix.

\subsection{Constant $v_t$}
To understand the potential of this approach, consider the case of a fixed betting fraction $v_t=\v$. Using the inequality $\log(1-x)\ge -x-x^2$ for all $x\le 1/2$, we proceed:

\begin{align}
    \wealth_T&=\epsilon\prod_{t=1}^T(1-g_t\cdot \v)\nonumber\\
    \log(\wealth_T) &= \log(\epsilon)+ \sum_{t=1}^T \log(1-g_t\cdot \v)\label{eqn:logfixedwealth}\\
    &\ge\log(\epsilon) + \sum_{t=1}^T -g_t\cdot \v - (g_t\cdot \v)^2\label{eqn:logfixedwealthbound}
\end{align}
Now if we set 
\begin{align*}
    \v=-\frac{\w}{\|\w\|}\frac{\sum_{t=1}^T g_t\cdot \frac{\w}{\|\w\|}}{2\left|\sum_{t=1}^T g_t\cdot \frac{\w}{\|\w\|}\right| + 2\sum_{t=1}^T ( g_t\cdot \frac{\w}{\|\w\|})^2}
\end{align*}
we obtain:
\begin{align*}
 \wealth_T&\ge \epsilon\exp\left[\frac{\left(\sum_{t=1}^T g_t\cdot \frac{\w}{\|\w\|}\right)^2}{4\left|\sum_{t=1}^T g_t\cdot \frac{\w}{\|\w\|}\right| + 4\sum_{t=1}^T ( g_t\cdot \frac{\w}{\|\w\|})^2}\right]\\
 &= f\left(-\sum_{t=1}^T g_t\cdot \frac{\w}{\|\w\|}\right)
\end{align*}

Where $f(x) = \epsilon\exp\left[\tfrac{x^2}{4|x| + 4\sum_{t=1}^T ( g_t\cdot \tfrac{\w}{\|\w\|})^2}\right]$.
Finally, bound $f^\star$ by Lemma 19 of \cite{cutkosky2018black}:
\[
R_T(\w) \le \epsilon + f^\star(\|\w\|) \le \tilde O\left[\epsilon+\sqrt{\sum_{t=1}^T (g_t\cdot \w)^2}\right]
\]

This is actually a factor of up to $\sqrt{d}$ better than the full-matrix guarantee (\ref{eqn:fullmatrixcoin}) and, more importantly, there are no matrices in this algorithm! Instead, the role of the preconditioner is played by the vector $\v$, which corresponds to a kind of ``optimal direction''.



\subsection{Varying $v_t$}
Now that we know that there exists a good fixed betting fraction $v$ given oracle tuning, we turn to the problem of using varying $v_t$. To do this we use the reduction developed by \citet{cutkosky2018black} for recasting the problem of choosing $v_t$ as itself an online learning problem. The first step is to calculate the wealth with changing $v_t$:

\begin{align}
    \log(\wealth_T) = \log(\epsilon) + \sum_{t=1}^T \log(1-g_t\cdot v_t)\label{eqn:logwealth}
\end{align}

Next, denote the wealth of the algorithm that uses a fixed fraction $\v$ as $\wealth_T(\v)$ and then subtract (\ref{eqn:logwealth}) from (\ref{eqn:logfixedwealth}):
\begin{align*}
    &\log(\wealth_T(\v))-\log(\wealth_T)\\
    &\quad=\sum_{t=1}^T \log(1-g_t\cdot \v) - \log(1-g_t\cdot v_t)\\
    &\quad=\sum_{t=1}^T \ell_t(v_t) -\ell_t(\v)
\end{align*}
where we define $\ell_t(x) = -\log(1-g_t\cdot x)$. Notice that $\ell_t$ is convex, so we can try to find $\v$ by using an online convex optimization algorithm that outputs $v_t$ in response to the loss $\ell_t$. Let $R^v_T$ be the regret of this algorithm. Then by definition of regret, for any $\v$:
\begin{align*}
    \log(\wealth_T(\v)) - \log(\wealth_T)= R^v_T(\v)
\end{align*}
Combining the above with inequality (\ref{eqn:logfixedwealthbound}) we have
\begin{align}
    \log(\wealth_T) &=  \log(\wealth_T(\v)) -R^v_T(\v)\nonumber\\
    &\mkern-54mu= \log(\epsilon)+\sum_{t=1}^T -g_t\cdot \v - (g_t\cdot \v)^2 - R^v_T(\v)\label{eqn:regretrecursion}
\end{align}
So now need to find a $\v$ that maximizes this expression.

Our analysis diverges from prior work at this point. Previously, \citep{cutkosky2018black} observed that $\ell_t$ is exp-concave, and so by using the Online Newton Step \citep{hazan2007logarithmic} algorithm one can make $R^v_T(\v)=O(\log(T))$ and obtain regret
\begin{align}
R_T(\w) \le O\left(\|\w\|\sqrt{\log(\tfrac{\|\w\|T^{4.5}}{\epsilon})\sum_{t=1}^T \|g_t\|^2}\right) \label{eqn:ons}
\end{align}

Instead, we take a different strategy by using \emph{recursion}. The idea is simple: we can apply the exact same reduction we have just outlined to design an ``inner'' coin-betting strategy for choosing $v_t$ and minimizing $R^v_T$. The major subtlety that needs to be addressed is the restriction $\|v_t\|\le 1/2$. Fortunately, \cite{cutkosky2018black} also provides a black-box reduction that converts any unconstrained optimization algorithm into a constrained algorithm without modifying the regret bound, and so we can essentially ignore the constraint on $v_t$ in our analysis.

\section{Recursive Betting Algorithm}\label{sec:recursive}

The key advantage of using a recursive strategy to choose $v_t$ is that the regret $R^v_T(\v)$ may depend strongly on $\|\v\|$. Since in many cases $\|\v\|$ is small, this results in better overall performance than if we were to directly apply the Online Newton Step algorithm. We formalize this strategy and intuition in Algorithm \ref{alg:recurse} and Theorem \ref{thm:recursiveregret}.

\begin{algorithm}
\caption{Recursive Optimizer}\label{alg:recurse}
\begin{algorithmic}[1]
\Procedure{RecursiveOptimizer}{$\epsilon$}
\State $\wealth_0\gets \epsilon$.
\State Initialize \textsc{InnerOptimizer}.
\For{$t=1\dots T$}
\State Let $v_t$ be the $t$th output of \textsc{InnerOptimizer}.
\State $w_t\gets \wealth_{t-1}v_t$.
\State Output $w_t$, receive $g_t$.
\State $\wealth_t\gets \wealth_{t-1} - g_t\cdot w_t$.
\State $z_t\gets \frac{g_t}{1-g_t\cdot v_t}=\frac{d}{d v_t}-\log(1-g_t\cdot v_t)$.
\State Send $z_{t}$ as $t^{\text{th}}$ gradient to \textsc{InnerOptimizer}.
\EndFor
\EndProcedure
\end{algorithmic}
\end{algorithm}

\begin{theorem}\label{thm:recursiveregret}
Suppose $\|g_t\|_\star \le 1$ for some norm $\|\cdot\|$ for all $t$. Further suppose that \textsc{InnerOptimizer} satisfies $\|v_t\|\le 1/2$ and guarantees regret nearly linear in $\|\v\|$:
\begin{align*}
R^v_T(\v)=\sum_{t=1}^T z_t\cdot v_t-z_t\cdot \v  &\le \epsilon + \|\v\| G_T(\v/\|\v\|)
\end{align*}
for some function $G_T(\v/\|\v\|)$ for any $\v$ with $\|\v\|\le 1/2$.
Then if $-\sum_{t=1}^T g_t\cdot \frac{\w}{\|\w\|}\ge 2G_T(\w/\|\w\|)$, Algorithm \ref{alg:recurse} obtains
\[
R_T(\w) \le \tilde O\left(\epsilon + \sqrt{\sum_{t=1}^T (g_t\cdot \w)^2}\right)
\]
and otherwise
\[
R_T(\w) \le \epsilon + 2\|\w\|G_T(\w/\|\w\|)
\]
\end{theorem}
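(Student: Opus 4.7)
The proof naturally splits along the two cases in the theorem statement. Write $\sigma := -\sum_{t=1}^T g_t\cdot \w/\|\w\|$, $V := \sum_{t=1}^T (g_t\cdot \w/\|\w\|)^2$, and $H := G_T(\w/\|\w\|)$, so that the dichotomy is simply $\sigma < 2H$ versus $\sigma \ge 2H$.

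For the unfavorable case $\sigma < 2H$ I would use a trivial bound on the wealth. The betting restriction $\|v_t\| < 1/2$ combined with $\|g_t\|_\star \le 1$ gives $\wealth_t = \wealth_{t-1}(1 - g_t\cdot v_t) \ge \wealth_{t-1}/2 > 0$, so $\wealth_T > 0$. Plugging into the wealth--regret identity,
\begin{align*}
R_T(\w) = \epsilon - \wealth_T + \|\w\|\sigma < \epsilon + 2\|\w\| H,
\end{align*}
which matches the claim.

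For the favorable case $\sigma \ge 2H$ the plan is to imitate the fixed-$\v$ derivation of Section 2.1, but paying a penalty from the inner regret. By convexity of $\ell_t(x) = -\log(1 - g_t\cdot x)$ with gradient $z_t$, the $\ell_t$-regret is at most the linearized $z_t$-regret, which is what \textsc{InnerOptimizer} controls by hypothesis. Starting from (\ref{eqn:regretrecursion}), substituting the inner regret bound, and setting $\v = \alpha\w/\|\w\|$ for $\alpha \in [0, 1/2]$ yields
\begin{align*}
\log(\wealth_T) \ge \log(\epsilon) - \epsilon + \alpha(\sigma - H) - \alpha^2 V.
\end{align*}
Since $\sigma - H \ge \sigma/2$, maximizing in $\alpha$ subject to $\alpha \le 1/2$ produces a bound of the form $\log(\wealth_T) \ge \log(\epsilon) - \epsilon + \Omega(\min(\sigma^2/V,\, \sigma))$. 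Viewing the right-hand side as $f(\sigma)$, where $f$ depends on $\w/\|\w\|$ only through $V$ and $H$, applying Lemma \ref{thm:duality}, and bounding $f^\star(\|\w\|)$ via Lemma 19 of \cite{cutkosky2018black} exactly as in the fixed-$\v$ case gives $R_T(\w) \le \tilde{O}(\epsilon + \sqrt{V}\,\|\w\|) = \tilde{O}(\epsilon + \sqrt{\sum_{t=1}^T (g_t\cdot \w)^2})$.

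The main obstacle will be the $\alpha \le 1/2$ constraint: the unconstrained maximizer $\alpha^\star = (\sigma - H)/(2V)$ exceeds $1/2$ when $\sigma$ is very large compared with $V$, forcing a piecewise $f$ with a linear-in-$\sigma$ branch, and one must verify that the Fenchel conjugate behaves well on both branches so that the final rate really is $\tilde{O}(\sqrt{\sum_t (g_t\cdot \w)^2})$ rather than something larger. A secondary bookkeeping point is that the additive $-\epsilon$ appearing inside the logarithm (from the inner optimizer's additive $\epsilon$) must be absorbed into the leading $\epsilon$ of the final bound; since $\log(\wealth_T) \ge \log(\epsilon) - \epsilon$ is equivalent to $\wealth_T \ge \epsilon e^{-\epsilon}$, this only costs a constant factor and poses no real difficulty.
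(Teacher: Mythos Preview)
Your proposal is correct and follows essentially the same route as the paper. The unfavorable case $\sigma<2H$ matches the paper exactly. For the favorable case, the only difference is the order of two operations: the paper's detailed proof (Theorem~\ref{thm:tighterrecursiveregret} in the appendix) keeps $c$ as a free parameter, defines $f_c(X)=\epsilon\exp(-\epsilon-c^2Z+\tfrac{c}{2}X)$, computes $f_c^\star$ explicitly via Lemma~\ref{thm:conjugate} (the conjugate of a pure exponential), and only afterwards optimizes over $c\in[0,1/2]$ using the dedicated one-variable Lemma~\ref{thm:balancelog}. You instead maximize over $\alpha$ first to produce a single piecewise $f(\sigma)$ and then bound $f^\star$ via Lemma~19 of \cite{cutkosky2018black}, as in Section~\ref{sec:betting}. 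Both orderings are valid; the paper's has the minor advantage that the ``main obstacle'' you identify (the constraint $\alpha\le 1/2$ forcing a piecewise $f$) is pushed into an elementary scalar optimization rather than into the shape of the function whose conjugate you need.
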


Let us unpack the condition $-\sum_{t=1}^T g_t\cdot \frac{\w}{\|\w\|}\ge 2G_T(\w/\|\w\|)$. First we consider the LHS. Observe that $-\sum_{t=1}^T \frac{\w}{\|\w\|}\cdot g_t$ is the regret at $\w/\|\w\|$ of an algorithm that always predicts $0$. In a classic adversarial problem we should expect this value to grow as $\Omega(T)$. Even in the case that each $g_t$ is an i.i.d.\ mean-zero random variable, we should expect growth of at least $\Omega(\sqrt{T})$. For the RHS, observe that so long as \textsc{InnerOptimizer} obtains the optimal $T$-dependence in its regret bound, we should expect $G_T=\tilde O(\sqrt{T})$ - for example the algorithm of \citep{cutkosky2018black} obtains $G_T(\v/\|\v\|) = O\left(\sqrt{\sum_{t=1}^T \|g_t\|_\star^2 \log(T^{4.5}/\epsilon)}\right)$ for any $\|\v\|\le 1/2$. Therefore the condition $-\sum_{t=1}^T g_t\cdot \frac{\w}{\|\w\|}\ge 2G_T(\w/\|\w\|)$ can be viewed as saying that the $g_t$ in some sense violate standard concentration inequalities and so are clearly not mean-zero random variables: intuitively, there is some amount of signal in the gradients.

As a simple concrete example, suppose the $g_t$ are i.i.d.\ random vectors with covariance $\Sigma$ and mean $-\sqrt{\epsilon}x$, where $x$ is the eigenvector of $\Sigma$ with smallest eigenvalue. Then $\sum_{t=1}^T g_t\cdot x$ will grow as $\Theta(T\sqrt{\epsilon})$, and so for sufficiently large $T$ we will obtain the full-matrix regret bound where $R_T(x)$ grows with $\sum_{t=1}^T (g_t\cdot x)^2$. This has expectation $x^T \Sigma xT+\epsilon T = (\lambda_d+\epsilon)T$, where $\lambda_d$ is the smallest eigenvalue of $\Sigma$. In contrast, a standard regret bound may depend on $\sum_{t=1}^T\|g_t\|_2^2$. This has expectation $\text{Trace}(\Sigma)T+\epsilon T$, which is a factor of $d$ larger for small $\epsilon$, and even more if $\Sigma$ is poorly conditioned.

Next, let us consider the second case in which the regret bound is $O(\epsilon + \|\w\|G_T)$. This bound is also actually a subtle improvement on prior guarantees. For example, if \textsc{InnerOptimizer} guarantees regret $R^v_T(\v)\le \epsilon+\|\v\| \sqrt{\log(\|\v\|T/\epsilon)\sum_{t=1}^T \|g_t\|_\star^2}$, we can use the fact that $\|\v\|\le 1/2$ to bound $G_T$ by $\sqrt{\log(T/\epsilon)\sum_{t=1}^T \|g_t\|_\star^2}$. Thus the bound $\|\w\|G_T$ is better than previous regret bounds like (\ref{eqn:ons}) due to removing the $\|\w\|$ from inside the log.

In summary, we improve prior art in two important ways:
\begin{enumerate}
    \item When the sum of the gradients is greater than $\tilde \Omega(\sqrt{T})$, we obtain the optimal full-matrix regret bound.
    \item When the sum of the gradients is smaller, our regret bound grows only linearly with $\|\w\|$, without any $\sqrt{\log(\|\w\|)}$ factor.
\end{enumerate}

Both of these improvements appear to contradict lower bounds. First, \citep{luo2016efficient} suggests that the factor $\sqrt{d}$ is necessary in a full-matrix regret bound, which seems to rule out improvement 1. Second, \citep{mcmahan2014unconstrained, orabona2013dimension} state that a $\sqrt{\log(\|\w\|)}$ factor is required when $\|\w\|$ is unknown, appearing to rule out improvement 2. We are consistent with these results because of the condition $-\sum_{t=1}^T g_t\cdot \frac{\w}{\|\w\|}\ge 2G_T(\w/\|\w\|)$. Both lower bounds use $g_t$ whose coordinates are random $\pm1$. However, the bound of \citep{luo2016efficient} involves a ``typical sequence'', which concentrates appropriately about zero and does not satisfy the condition to have our improved full-matrix bound. In contrast, the bounds of \citep{mcmahan2014unconstrained, orabona2013dimension} are stated for 1-dimensional problems and rely on \emph{anti-concentration}, so that the adversarial sequence is very atypical and does satisfy the condition, yielding our full-matrix bound that does include the log factor.

In Section \ref{sec:inner} we propose a diagonal algorithm for use as \textsc{InnerOptimizer}. This will enable Algorithm \ref{alg:recurse} to interpolate between a diagonal regret bound and the full-matrix guarantee. At first glance, this phenomenon is somewhat curious: how can an algorithm that keeps only per-coordinate state manage to adapt to the covariance between pairs of coordinates? The answer lies in the gradients supplied to the \textsc{InnerOptimizer}: $\tfrac{g_t}{1-g_t\cdot v_t}$. The denominator of this expression actually contains information from all coordinates, and so even when \textsc{InnerOptimizer} is a diagonal algorithm it still has access to interactions between coordinates.

Now we sketch a proof of Theorem \ref{thm:recursiveregret}. We will drop constants, logs and $\epsilon$ and leave full details to Appendix \ref{sec:proofrecursive}.
\begin{proof}[Proof Sketch of Theorem \ref{thm:recursiveregret}]

We start from (\ref{eqn:regretrecursion}) and use our assumption on the regret bound of \textsc{InnerOptimizer}: 
\begin{align*}
    \log(\wealth_T) &\ge \sum_{t=1}^T -g_t\cdot \v - (g_t\cdot \v)^2 - \|\v\|G_T(\v/\|\v\|)
\end{align*}
for all $\v$. So now we choose $\v$ to optimizes the bound.

Let us suppose that $\v$ is of the form $\v= x\frac{\w}{\|\w\|}$ for some $x$ so that $\v/\|\v\|=\w/\|\w\|$. We consider two cases: either $-\sum_{t=1}^T g_t\cdot \tfrac{\w}{\|\w\|} \ge 2G_T(\w/\|\w\|)$ or not.

\noindent\textbf{Case 1 $-\sum_{t=1}^T g_t\cdot \w/\|\w\| \ge 2 G_T(\w/\|\w\|)$:}

In this case we have
\begin{align*}
-\sum_{t=1}^T g_t\cdot \v - \|\v\|G_T(\v/\|\v\|) \ge -\frac{1}{2}\sum_{t=1}^T g_t\cdot \v
\end{align*}
Therefore we have
\begin{align*}
    \log(\wealth_T) &\ge \sum_{t=1}^T -\frac{1}{2}g_t\cdot \v - (g_t\cdot \v)^2
\end{align*}

So now using essentially the same argument as in the fixed $v$ case, we end up with a full-matrix regret bound:
\[
R_T(\w) =\tilde O\left(\epsilon + \sqrt{\sum_{t=1}^T (g_t\cdot \w)^2}\right)
\]
\noindent\textbf{Case 2 $-\sum_{t=1}^T g_t\cdot \w/\|\w\| < 2 G_T(\w/\|\w\|)$}

In this case, observe that since we guarantee $\wealth_T>0$ no matter what strategy is used to pick $v_t$, we have
\begin{align*}
R_T(\w) &= \epsilon -\wealth_T -\sum_{t=1}^T g_t\cdot \w\\
&\le \epsilon + 2 \|\w\|G_T(\w/\|\w\|)
\end{align*}
And so we are done.
\end{proof}

\section{Diagonal \textsc{InnerOptimizer}}\label{sec:inner}

As a specific example of an algorithm that can be used as \textsc{InnerOptimizer}, we provide Algorithm \ref{alg:diag}. This algorithm will achieve a regret bound similar to (\ref{eqn:diagparamfree}). Here we use $\clip(x,a,b)$ to indicate truncating $x$ to the interval $[a,b]$. Algorithm \ref{alg:diag} works by simply applying a separate 1-dimensional optimizer on each coordinate 
of the problem. Each 1-dimensional optimizer is itself a coin-betting algorithm that uses Follow-the-Regularized leader \cite{hazan2016introduction} to choose the betting fractions $v_t$. There are also two important modifications at lines 7 and 12-15 that implement the unconstrained-to-constrained reduction.

\begin{algorithm}
\caption{Diagonal Betting Algorithm}\label{alg:diag}
\begin{algorithmic}[1]
\Procedure{DiagOptimizer}{$\epsilon$, $\eta$}
\State $\wealth_{0,i}\gets \epsilon$ for $i\in\{1,\dots,d\}$.
\State $A_{0,i}\gets 5$ and $v_{1,i}\gets 0$  for $i\in\{1,\dots,d\}$.
\For{$t=1\dots T$}
\For{$i=1\dots d$}
\State $x_{t,i}\gets v_{t,i}\wealth_{t-1,i}$.
\State Set $w_{t,i}=\clip(x_{t,i}, -1/2, 1/2)$.
\EndFor
\State Output $w_t = (w_{t,1},\dots,w_{t,d})$.
\State Receive $g_t$ with $g_{t,i} \in[-1,1]$ for all $i$.
\For{$i=1\dots d$}
\State $\tilde g_{t,i}\gets g_{t,i}$
\If{$g_{t,i} (x_{t,i}-w_{t,i})<0$}
\State $\tilde g_{t,i}\gets 0$.
\EndIf
\State $\wealth_{t,i}\gets \wealth_{t-1,i}-x_{t,i}\tilde g_{t,i}$.
\State $z_{t,i}\gets \frac{d}{dv_{t,i}} -\log(1-\tilde g_{t,i}v_{t,i})=\frac{\tilde g_{t,i}}{1-\tilde g_{t,i}v_{t,i}}$.
\State $A_{t,i}\gets A_{t-1,i} + z_{t,i}^2$.
\State $v_{t,i}\gets \clip\left(\frac{-2\eta \sum_{t'=1}^t z_{t',i}}{A_{t,i}},-1/2,1/2\right)$
\EndFor
\EndFor
\EndProcedure
\end{algorithmic}
\end{algorithm}
Before analyzing \textsc{DiagOptimizer}, we perform a second analysis of \textsc{RecursiveOptimizer} that makes no restrictions on \textsc{InnerOptimizer}. We will eventually see that Algorithm \ref{alg:diag} is essentially an instance of \textsc{RecursiveOptimizer} and so this Lemma will be key in our analysis:
\begin{lemma}\label{thm:generalrecursiveregret}
Suppose $\|g_t\|_\star \le 1$ for all $t$. Suppose \textsc{InnerOptimizer} satisfies $\|v_t\|\le 1/2$ and has regret $R^v_T(\v)$. Then \textsc{RecursiveOptimizer} obtains regret
\begin{align*}
    R_T(\w)&\le \inf_{c\in [0,1/2]}\epsilon + \frac{\|\w\|}{c}\left(\log\left(\frac{\|\w\|}{c\epsilon}\right)-1\right)\\
    &\quad\quad+ \|\w\|cZ + \frac{\|\w\|}{c}R^v_T\left(c\frac{\w}{\|\w\|}\right)
\end{align*}
where $Z=\sum_{t=1}^T \left(g_t\cdot \frac{\w}{\|\w\|}\right)^2$.
\end{lemma}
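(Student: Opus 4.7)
The plan is to reuse the recursion already established in (\ref{eqn:regretrecursion}), specialize the comparator to $\v = c\tfrac{\w}{\|\w\|}$ for $c\in(0,1/2]$, and then eliminate the dependence on the data-dependent quantity $S := -\sum_{t=1}^T g_t\cdot\tfrac{\w}{\|\w\|}$ by a one-dimensional Fenchel-conjugate calculation against the exponential. This is the same recipe as in the proof sketch of Theorem \ref{thm:recursiveregret}, but instead of splitting into cases based on whether $S$ exceeds $2G_T$ I would apply a single uniform-in-$S$ bound; this is precisely what produces the $\log(\|\w\|/(c\epsilon))$ term in the conclusion.

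First, I would specialize (\ref{eqn:regretrecursion}) to $\v = c\tfrac{\w}{\|\w\|}$. Since $\|\v\| = c\le 1/2$, the inequality $\log(1-x)\ge -x-x^2$ used to derive (\ref{eqn:regretrecursion}) is legitimate, and this yields
\[
\log\wealth_T \;\ge\; \log\epsilon + cS - c^2 Z - R^v_T\!\left(c\tfrac{\w}{\|\w\|}\right),
\]
with $Z$ as defined in the statement. Exponentiating and combining with the identity $R_T(\w) = \epsilon - \wealth_T + \|\w\|S$ (immediate from the definitions of regret and wealth) gives
\[
R_T(\w) \;\le\; \epsilon + \|\w\|S - \epsilon\exp\!\Bigl(cS - c^2 Z - R^v_T(c\tfrac{\w}{\|\w\|})\Bigr).
\]

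The key step is to bound this expression uniformly in $S$. I would invoke the elementary identity
\[
aS - Ke^{cS} \;\le\; \frac{a}{c}\Bigl(\log\tfrac{a}{Kc} - 1\Bigr) \quad \text{for all } S\in\R,
\]
valid for any $a,K,c>0$ (attained at $S^\star = c^{-1}\log(a/(Kc))$), applied with $a = \|\w\|$ and $K = \epsilon\exp(-c^2 Z - R^v_T(c\tfrac{\w}{\|\w\|}))$. Expanding the resulting logarithm as $\log(a/(Kc)) = \log(\|\w\|/(c\epsilon)) + c^2 Z + R^v_T(c\tfrac{\w}{\|\w\|})$ and simplifying produces exactly
\[
R_T(\w) \;\le\; \epsilon + \frac{\|\w\|}{c}\Bigl(\log\tfrac{\|\w\|}{c\epsilon} - 1\Bigr) + \|\w\|cZ + \frac{\|\w\|}{c}R^v_T\!\left(c\tfrac{\w}{\|\w\|}\right).
\]
Since $c\in(0,1/2]$ was arbitrary, taking the infimum closes the lemma.

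I do not anticipate any substantive obstacle: the only checks are (i) the comparator $\v = c\tfrac{\w}{\|\w\|}$ satisfies $\|\v\|\le 1/2$ so that the Taylor-type bound behind (\ref{eqn:regretrecursion}) applies, and (ii) the Fenchel step is applied to a concave function of $S$, so the pointwise bound is valid even though $S$ is in fact determined by the gradient sequence rather than being a free variable. Beyond these, only routine arithmetic remains.
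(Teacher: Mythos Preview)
Your proposal is correct and follows essentially the same route as the paper. The paper lower-bounds $\wealth_T$ by $f_c(X)=\epsilon\exp(cX-c^2Z-R^v_T(c))$ via (\ref{eqn:regretrecursion}), then invokes Lemma~\ref{thm:duality} and the conjugate computation of Lemma~\ref{thm:conjugate}; your inline identity $aS-Ke^{cS}\le (a/c)(\log(a/(Kc))-1)$ is exactly that Fenchel conjugate unpacked, so the two arguments coincide line by line.
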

In words, we have written the regret of \textsc{RecursiveOptimizer} as a kind of tradeoff between $Z$, which is proportional to the quantity inside the square root of a full-matrix bound, and the regret of the \textsc{InnerOptimizer}. This makes it easier to compute the regret when \textsc{InnerOptimizer}'s regret bound does not satisfy the conditions of Theorem \ref{thm:recursiveregret}.

\begin{theorem}\label{thm:diaganalysis}
Suppose $\|g_t\|_\infty \le 1$ for all $t$. Then Algorithm \ref{alg:diag} guarantees regret $R_T(\w)$ at most:
\begin{align*}
    &d\epsilon +O\left(\sum_{i=1}^d|\w_i|\max\left\{\sqrt{\frac{G_i}{\eta}\log\left(\frac{|\w_i|G_i^{\eta}\sqrt{\frac{G_i}{\eta}}}{\epsilon}\right)}, \right.\right.\\
    &\quad\quad\quad\quad\left.\left.\log\left(\frac{|\w_i|G_i^{\eta}\sqrt{G_i/\eta}}{\epsilon}\right)\right\}\right)
\end{align*}
for all $\w$ with $\|\w\|_\infty \le 1/2$, where $G_i = \sum_{t=1}^T g_{t,i}^2$. Further, by using $\epsilon/d$ instead of $\epsilon$ and setting $\eta=1/2$, we can also re-write this as:  
\begin{align*}
    R_T(\w) & \le \epsilon + \|\w\|_\infty G_T(\w/\|\w\|_\infty),
\end{align*}
where 
\begin{align*}
    G_T(x)&= O\left(\sum_{i=1}^d|x_i|\max\left\{\sqrt{G_i\log\left(\frac{dZ_i}{\epsilon}\right)}, \right.\right.\\
    &\quad\quad\quad\quad\left.\left.\log\left(\frac{dG_i}{\epsilon}\right)\right\}\right)
\end{align*}
\end{theorem}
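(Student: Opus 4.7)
The plan is to view Algorithm \ref{alg:diag} as $d$ independent one-dimensional copies of \textsc{RecursiveOptimizer}, one per coordinate, and then to invoke Lemma \ref{thm:generalrecursiveregret} coordinate-wise and sum the results.

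First I would dispose of lines~7 and 12--15 via the unconstrained-to-constrained reduction of \citep{cutkosky2018black}. A short case analysis on the sign of $g_{t,i}(x_{t,i}-w_{t,i})$ shows that, for every $\w$ with $|\w_i|\le 1/2$,
\[
\sum_{t=1}^T g_{t,i}(w_{t,i}-\w_i)\le \sum_{t=1}^T \tilde g_{t,i}(x_{t,i}-\w_i),
\]
so it suffices to bound the regret of the unclipped iterate $x_{t,i}$ against the modified gradients $\tilde g_{t,i}$, which still satisfy $|\tilde g_{t,i}|\le 1$. After this reduction the per-coordinate dynamics are literally \textsc{RecursiveOptimizer} in one dimension with \textsc{InnerOptimizer} an adaptive FTRL (regularizer $A_{t,i}v^2/(4\eta)$, convex loss $\ell_t(v)=-\log(1-\tilde g_{t,i}v)$); the extra clip of $v_{t,i}$ to $[-1/2,1/2]$ is harmless for comparators of absolute value at most $1/2$.

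Next I would bound the inner FTRL regret. Linearizing $\ell_t$ to $z_{t,i}v$ by convexity, the standard adaptive-FTRL calculation (using $A_{t,i}-A_{t-1,i}=z_{t,i}^2$ and the telescoping inequality $\sum_t z_{t,i}^2/A_{t,i}\le \log(A_{T,i}/5)$) yields
\[
R^v_T(\v)\le \frac{A_{T,i}\,\v^2}{4\eta}+O\!\bigl(\eta\log(A_{T,i}/5)\bigr).
\]
Since $|v_{t,i}|\le 1/2$ and $|\tilde g_{t,i}|\le 1$ force $|z_{t,i}|\le 2$, we have $A_{T,i}\le 5+4G_i$, hence $R^v_T(c\,\sign(\w_i))\le O(c^2 G_i/\eta+\eta\log G_i)$ for every $c\in[0,1/2]$. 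Feeding this and $Z_i=\sum_t\tilde g_{t,i}^2\le G_i$ into Lemma \ref{thm:generalrecursiveregret} reduces the per-coordinate analysis to
\[
R_T^{(i)}(\w_i)\le \inf_{c\in[0,1/2]}\Bigl\{\epsilon+\tfrac{|\w_i|}{c}\log\tfrac{|\w_i|}{c\epsilon}+O\!\Bigl(\tfrac{|\w_i|cG_i}{\eta}+\tfrac{|\w_i|\eta\log G_i}{c}\Bigr)\Bigr\}.
\]

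The main obstacle will be executing the infimum cleanly. Balancing the $|\w_i|/c$ logarithmic terms against $|\w_i|cG_i/\eta$ points to $c\asymp\sqrt{\eta\log(\cdot)/G_i}$; when this lies in $(0,1/2]$ we land in the $\sqrt{(G_i/\eta)\log(\cdot)}$ branch of the max, and when it would exceed $1/2$ we clamp $c=1/2$ and land in the pure $\log(\cdot)$ branch. The fiddly point is amalgamating $\log(|\w_i|/c\epsilon)$ and $\eta\log G_i$ into the single log argument $|\w_i|G_i^\eta\sqrt{G_i/\eta}/\epsilon$ stated in the theorem: the $G_i^\eta$ factor comes from $\eta\log G_i=\log(G_i^\eta)$, and the $\sqrt{G_i/\eta}$ factor from the optimal $1/c$. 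Summing over $i$ yields the first stated bound; replacing $\epsilon$ by $\epsilon/d$ and fixing $\eta=1/2$ then gives the second form with $G_T(x)$ of the claimed shape.
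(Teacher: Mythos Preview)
Your proposal is correct and follows essentially the same route as the paper: decompose into $d$ one-dimensional problems, apply the unconstrained-to-constrained reduction to pass from $(w_{t,i},g_{t,i})$ to $(x_{t,i},\tilde g_{t,i})$, recognize each coordinate as an instance of \textsc{RecursiveOptimizer} with an adaptive FTRL inner optimizer, bound the FTRL regret by $O(c^2G_i/\eta+\eta\log G_i)$, plug into Lemma~\ref{thm:generalrecursiveregret}, and optimize over $c$. The only cosmetic differences are that the paper bounds the FTRL sum via the convexity inequality $\log(a)+b/(a+4)\le\log(a+b)$ (after shifting the denominator from $A_{t-1}$ to $1+\sum_{t'\le t}z_{t'}^2$ using $z_t^2\le 4$), whereas you invoke the telescoping bound $\sum_t z_t^2/A_t\le\log(A_T/5)$ directly; and the paper packages the $c$-optimization into a separate technical lemma (Lemma~\ref{thm:balancelog}) rather than balancing terms by hand.
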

Let us briefly unpack this bound. Ignoring log factors to gain intuition, the bound is $\sum_{i=1}^d |\w_i|\sqrt{\sum_{t=1}^T g_{t,i}^2}$. Note that this improves upon the diagonal Adagrad bound (\ref{eqn:diagada}) by virtue of depending on each $|\w_i|$ rather than the norm $\|\w\|_\infty$, and by Cauchy-Schwarz it is bounded by $\|\w\|_2\sqrt{\sum_{t=1}^T \|g_t\|_2^2}$, which matches classic ``dimension-free'' bounds. Note however that this bound is \emph{not} strictly dimension-free due to the $d\epsilon$ term. Even setting $\epsilon=1/d$ will incur a $\log(d)$ penalty due to the $\log(1/\epsilon)$ factor. Most importantly, however, \textsc{DiagOptimizer} satisfies the conditions on \textsc{InnerOptimizer} in Theorem \ref{thm:recursiveregret}.

Theorem \ref{thm:diaganalysis} is also notable for its logarithmic factor, which can be made $O(\log(|\w_i|G_i^{\eta+1/2}/\epsilon\sqrt{\eta}))$ for any $\eta$. This is an improvement over prior bounds such as \citep{cutkosky2018black} in that the power of the $O(T)$ term $G_i$ inside the logarithm is smaller
However, the optimal value for this exponent is $1/2$ \citep{mcmahan2014unconstrained}, which this bound cannot obtain. Instead, we show in Appendix \ref{sec:optlog} that a simple doubling-trick scheme does allow us to obtain the optimal rate. To our knowledge this is the first time such a rate has been achieved: prior works achieve the optimal log factor, but have worse adaptivity to the values of $g_t$, depending on $T$ or $\sum_{t=1}^T |g_t|$ instead of $\sum_{t=1}^T g_t^2$ \citep{mcmahan2014unconstrained, orabona2014simultaneous}.

\begin{proof}[Proof Sketch of Theorem \ref{thm:diaganalysis}]
First, we observe that Algorithm \ref{alg:diag} is running $d$ copies of a 1-dimensional optimizer. Because we have
\begin{align*}
R_T(\w) = \sum_{t=1}^T g_t\cdot(w_t-\w)=\sum_{i=1}^d \sum_{t=1}^T g_{t,i}(w_{t,i} - \w_i)
\end{align*}
we may analyze each dimension individually and then sum the regrets. So let us focus on a single 1-dimensional optimizer, and drop all $i$ subscripts for simplicity.

Next, we address the truncation of $w_t$ and modifications to $g_t$. This is a 1-dimensional specialization of the unconstrained-to-constrained reduction of \citep{cutkosky2018black}. Let $g_t$ be the (original, unmodified) gradient, and let $\tilde g_t$ be the modified gradient (so $\tilde g_t = g_t$ or $\tilde g_t = 0$). A little calculation shows that
\[
\tilde g_t(x_t - \w) \ge g_t(w_t -\w)
\]
for any $\w\in [-1/2,1/2]$. Therefore, the regret $\sum_{t=1}^T g_t(w_t-\w)$ is upper-bounded by $\sum_{t=1}^T \tilde g_t (x_t -\w)$. This quantity is simply the regret of an algorithm that uses gradients $\tilde g_t$ and outputs $x_t$. Now we interpret $x_t$ as the predictions of a coin-betting algorithm that uses betting fractions $v_t$ in response to the gradients $\tilde g_t$. Thus we may analyze the regret of the $x_t$ with respect to the $\tilde g_t$ using coin-betting machinery. To this end, observe that $A_t = 5+ \sum_{t'=1}^t z_{t'}^2$, so that
\[
v_{t+1} = \argmin_{v\in [-1/2,1/2]} \sum_{t'=1}^t z_{t'} v + \tfrac{v^2}{4\eta}\left(5+\sum_{t'=1}^t z_{t'}^2\right)
\]
Since $z_t$ is the derivative of $\log(1-\tilde g_tv)$ evaluated at $v_t$, we see that $v_t$ are the outputs of a Follow-the-Regularized-Leader (FTRL) algorithm with regularizers $v^2\tfrac{1}{4\eta}(5+\sum_{t=1}^t z_{t'}^2)$. That is, we are actually using Algorithm \ref{alg:recurse} with \textsc{InnerOptimizer} equal to this FTRL algorithm. Using the FTRL analysis of \citep{mcmahan2017survey}, we then have
\[
R^v_T(\v) \le \tfrac{\v^2}{4\eta}\left(5+\sum_{t=1}^T z_{t}^2\right)+\sum_{t=1}^T\frac{\eta z_t^2}{5+\sum_{t'=1}^{t-1} z_{t'}^2}
\]
Next, by convexity we have $\log(a) + b/(a+4)\le \log(a+b)$ for all $a>0$, $0<b<4$. Since $|w_t|\le 1/2$ and $|g_t|\le 1$, $z_t^2\le 4$. Therefore by induction we can show:
\[
\sum_{t=1}^T\tfrac{z_t^2}{5+\sum_{t'=1}^{t-1} z_{t'}^2}\le \log\left(1+\sum_{t=1}^T z_t^2\right)
\]
so that since $|z_t|\le 2|\tilde g_t|\le 2|g_t|$,
\begin{align*}
R^v_T(\v) &\le &\le \tfrac{\v^2}{4\eta} \left(5+4\sum_{t=1}^T g_{t}^2\right) + \eta \log\left(1+4\sum_{t=1}^T g_t^2\right)
\end{align*}

Therefore by Lemma \ref{thm:generalrecursiveregret} we have
\begin{align*}
    R_T(\w) & \le \epsilon + \frac{|\w|}{c}(\log(|\w|/c\epsilon) -1) + |\w|c Z \\
    &\mkern-18mu+ \frac{|\w|c}{\eta} \left(5+4\sum_{t=1}^T g_{t}^2\right) + \frac{|\w|\eta }{c} \log\left(1+4\sum_{t=1}^T g_t^2\right)\\
    &= \epsilon + O\left(\frac{|\w|}{c}\log\left(\frac{|\w|G^{\eta}}{c\epsilon}\right) + \frac{|\w|cG}{\eta}\right)
\end{align*}
for all $c\in[0,1/2]$, where we have observed that $Z=\sum_{t=1}^T g_t^2=G$ in one dimension, and dropped various constants for simplicity. Optimizing for $c$ we have
\begin{align*}
    R_T(\w) & \le \epsilon +O\left(|\w|\max\left\{\sqrt{\frac{G}{\eta}\log\left(\frac{|\w|G^{\eta}\sqrt{\frac{Z}{\eta}}}{\epsilon}\right)}, \right.\right.\\
    &\quad\quad\quad\quad\left.\left.\log\left(\frac{|\w|G^{\eta}\sqrt{G/\eta}}{\epsilon}\right)\right\}\right)
\end{align*}
The statement for $G_T$ comes from simply observing that $|\w|_\infty\le 1/2$.
\end{proof}

\section{Full Regret Bound}\label{sec:full_regret_bound}
Now we combine the \textsc{DiagOptimizer} of the previous section with \textsc{RecursiveOptimizer}. There are only a few details to address. First, since the analysis of \textsc{DiagOptimizer} is specific to the infinity-norm, we set $\|\cdot\|$ to be the infinity-norm in \textsc{RecursiveOptimizer} and Theorem \ref{thm:recursiveregret}, which has $\|\cdot\|_1$ as dual norm. Second, note that the gradients provided to \textsc{InnerOptimizer} satisfy $\|z_t\|_\infty = \|g_t/(1-g_t\cdot v_t)\|_\infty \le 2\|g_t\|_\infty\le 2$ since $1\ge \|g_t\|_1\ge \|g_t\|_\infty$. Since the analysis of \textsc{DiagOptimizer} requires gradients bounded by 1, we rescale the gradients by a factor of 2, which scales up the regret by the same constant factor of 2. Therefore, we see that \textsc{DiagOptimizer} satisfies the hypotheses of Theorem \ref{thm:recursiveregret} with
\begin{align*}
    G_T(v/\|v\|_\infty) &= O\left(\sum_{i=1}^d\frac{|v_i|}{\|v\|_\infty}\max\left\{\sqrt{G_i\log\left(\frac{dG_i}{\epsilon}\right)}, \right.\right.\\
    &\quad\quad\quad\quad\left.\left.\log\left(\frac{dG_i}{\epsilon}\right)\right\}\right)
\end{align*}
Thus by Theorem \ref{thm:recursiveregret}, in all cases we have the diagonal bound:
\begin{align}
    R_T(\w) &\le \tilde O\left(\epsilon+\sum_{i=1}^d|\w_i|\sqrt{\sum_{t=1}^Tg_{t,i}^2}\right)\label{eqn:simplediag}
\end{align}
and whenever $-\sum_{t=1}^T  g_t\cdot \tfrac{\w}{\|\w\|_\infty}\ge 2G_T(\w/\|\w\|)=\tilde O(\sqrt{T})$, we have
\begin{align*}
    R_T(\w)&\le \tilde O\left(\epsilon + \sqrt{\sum_{t=1}^T (g_t\cdot \w)^2}\right)
\end{align*}
Note that this may be even a factor of $\sqrt{d}$ better than the standard full-matrix regret bounds (\ref{eqn:fullmatrixada}), (\ref{eqn:fullmatrixcoin}).

We recall that by Cauchy-Schwarz, the bound (\ref{eqn:simplediag}) also implies:
\begin{align*}
    R_T(\w) & \le \tilde O\left(\|\w\|_2\sqrt{\sum_{t=1}^T \|g_t\|_2^2}\right)
\end{align*}
which is the standard non-diagonal adaptive regret bound. Note that this may be a factor of $\sqrt{d}$ better than the diagonal Adagrad bound (\ref{eqn:diagada}) when both $\w$ and the gradients are dense.


\section{Experiments}\label{sec:experiments}
We implemented \textsc{RecursiveOptimizer} in TensorFlow \citep{abadi2016tensorflow} and ran benchmarks on both synthetic data as well as several deep learning tasks (see Appendix \ref{sec:experimentsappendix} for full details).\footnote{Code available at: \url{https://github.com/google-research/google-research/tree/master/recursive_optimizer}} We found that using the recent algorithm \textsc{ScInOL} of \citep{kempka2019adaptive} as the inner optimizer instead of Algorithm \ref{alg:diag} provided qualitatively similar results on synthetic data but better empirical performance on the deep learning tasks, so we report results using \textsc{ScInOL} as the inner optimizer. We stress that since \textsc{ScInOL} has essentially the same regret bound as in Theorem \ref{thm:diaganalysis} (with slightly worse log factors), this substitution maintains our theoretical guarantees while allowing us to inherit the scale-invariance properties of \textsc{ScInOL}. This actually highlights another advantage of our reduction: we can take advantage of orthogonal advances in optimization.

Our synthetic data takes the form $\ell_t(w) = |x_t\cdot w - x_t\cdot \w|$ where $x_t$ is randomly generated from a 100-dimensional $\mathcal{N}(0,\Sigma)$ and $\w$ is some pre-selected optimal point. We generate $\Sigma$ to be a random matrix with exponentially decaying eigenvalues and condition number 750. We consider two cases: either $\w$ is the eigenvector with smallest eigenvalue of $\Sigma$, or the eigenvector with largest eigenvalue. We compared \textsc{RecursiveOptimizer} to diagonal \textsc{Adagrad}, both of which come with good theoretical guarantees. For full implementation details, see Appendix \ref{sec:experimentsappendix}. The performance on a holdout set is shown in Figure \ref{fig:synthetic}. \textsc{RecursiveOptimizer} enjoys an advantage in the poorly-conditioned regime while maintaining performance in the well-conditioned problem. 

The dynamics of \textsc{RecursiveOptimizer} in the poorly-conditioned problem bear some discussion. Recall that our full-matrix regret bounds actually do not appear until the sum of the gradients grows to a certain degree. It appears that this may provide a period of ``slow convergence'' during which the inner optimizer is presumably finding the optimal $\v$, which is hard on poorly-conditioned problems. Once this is located, the algorithm makes progress very quickly.
\begin{figure*}[ht!]
\centering
  \subfigure[][]{\includegraphics[width=.4\textwidth]{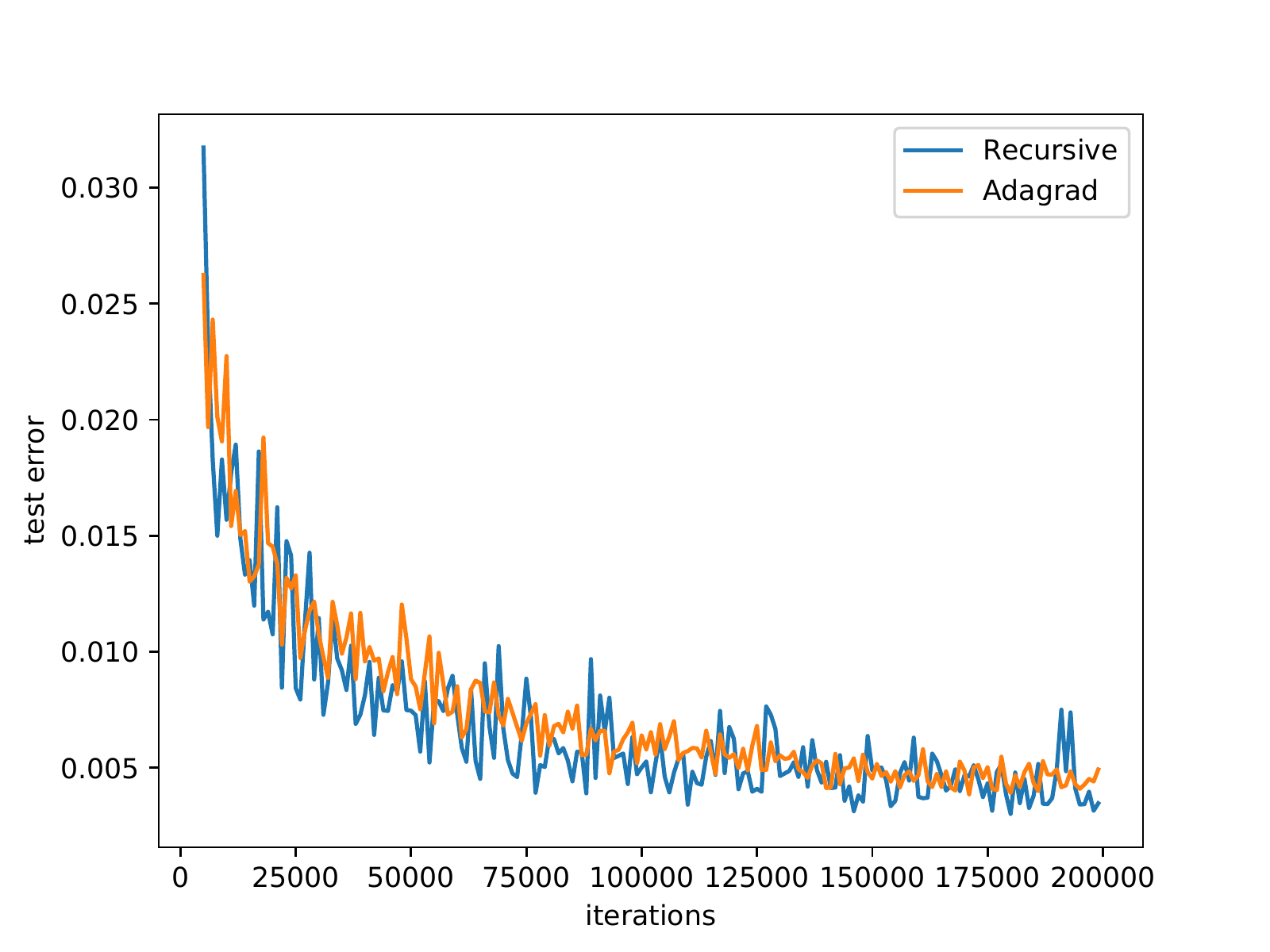}}
  \subfigure[][]{\includegraphics[width=.4\textwidth]{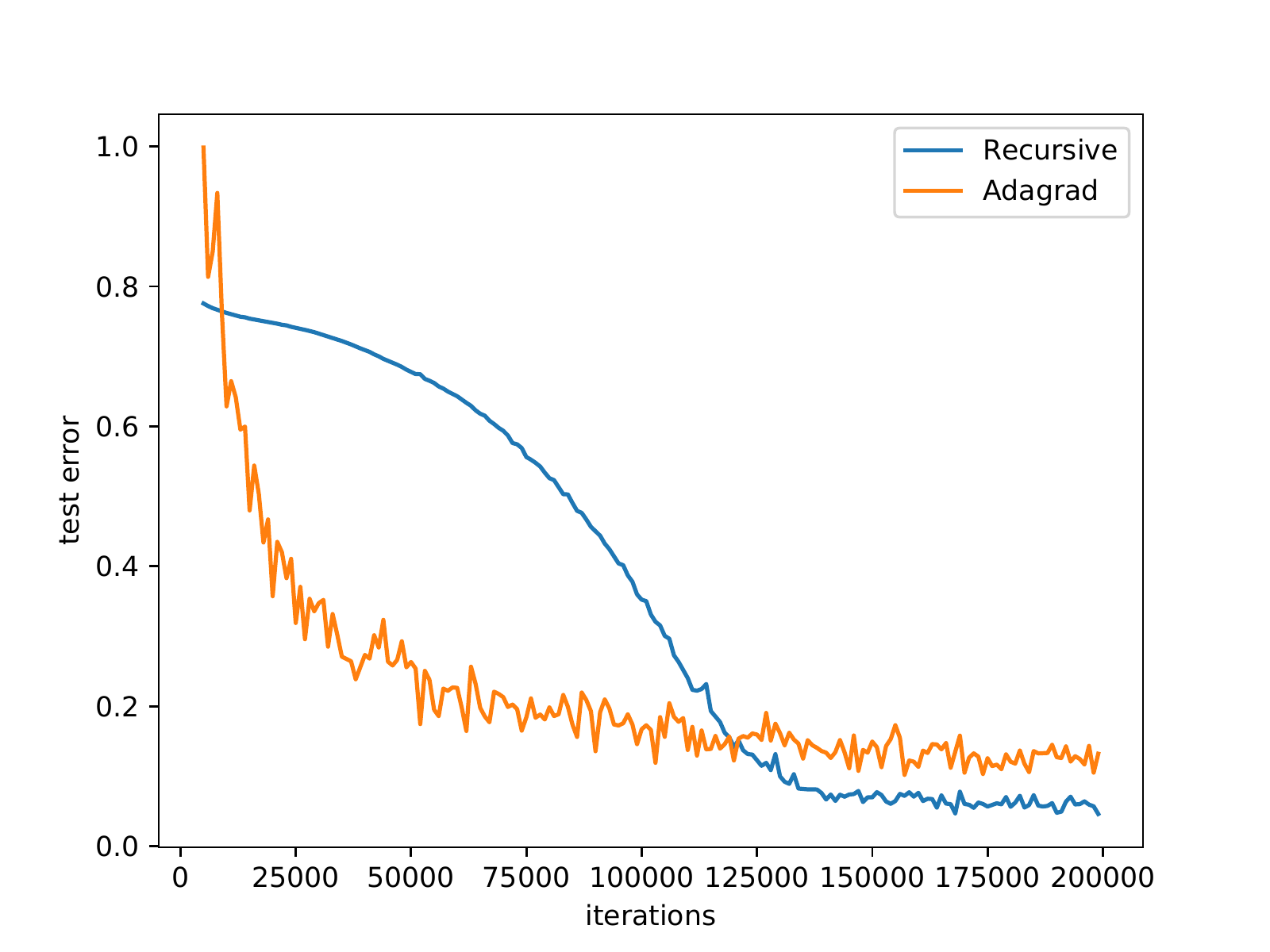}}
  \caption{Test Error in Synthetic Experiments. (a): $\w$ is eigenvector with maximum eigenvalue (well-conditioned). (b): $\w$ is eigenvector with minimum eigenvalue (poorly-conditioned).}
  \label{fig:synthetic}
\end{figure*}
\begin{figure*}[ht!]
  \centering
  \subfigure[][]{\includegraphics[width=.4\textwidth]{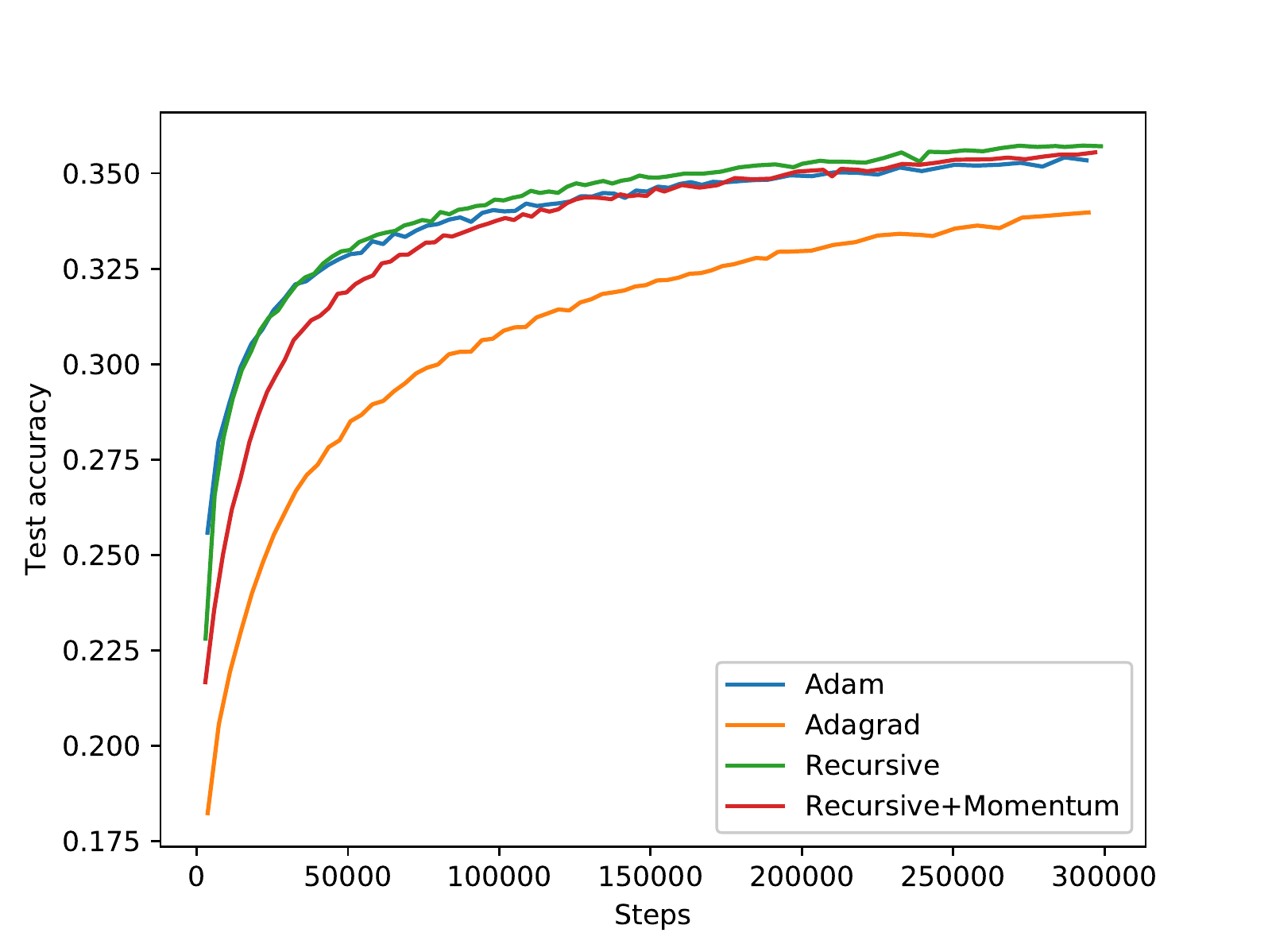}}
  \subfigure[][]{\includegraphics[width=.4\textwidth]{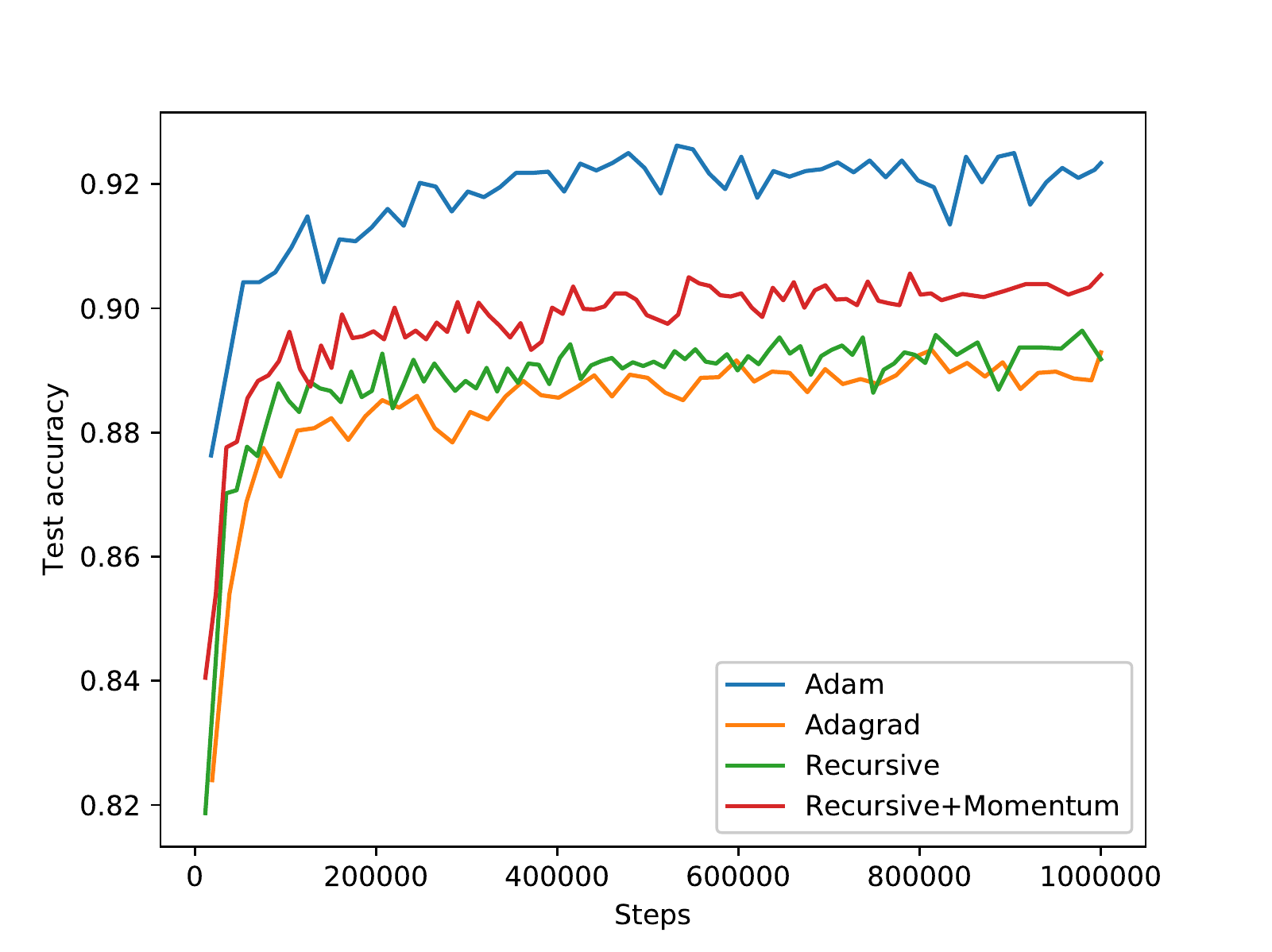}}
  \caption{Deep Learning Experiments (a): Transformer test accuracy on LM1B. (b). ResNet-32 test accuracy on CIFAR-10. See Appendix \ref{sec:momentum} for details on the momentum heuristic.}
  \label{fig:image_cifar10_languagemodel_lm1b32k-steps_text}
\end{figure*}
We also test \textsc{RecursiveOptimizer} on benchmark deep learning models. Specifically, we test performance with the ResNet-32~\citep{he2016deep} model on the CIFAR-10 image recognition dataset \citep{krizhevsky2009learning} and the Transformer model \cite{vaswani2017attention,tensor2tensor} on LM-1B~\citep{chelba2013one} and other textual datasets. We record train and test error, both as a function of number of iterations as well as a function of wall time. We compare performance to the commonly used Adam~\citep{kingma2014adam} and Adagrad optimizers (see Figure \ref{fig:image_cifar10_languagemodel_lm1b32k-steps_text}, and Appendix \ref{sec:experimentsappendix}). Even though our analysis relies heavily on duality and global properties of convexity, \textsc{RecursiveOptimizer} still performs well on these non-convex tasks: we are competitive in all benchmarks, and marginally the best in the Transformer tasks. This suggests an interesting line of future research: most popular optimizers used in deep learning operate in a proximal manner, producing each iterate as an offset from the previous iterate. This seems more appropriate to the non-convex setting as gradients provide only local information. It may therefore be valuable to develop a proximal version of \textsc{RecursiveOptimizer} that performs even better in the non-convex setting.

\section{Conclusion}
We have presented an algorithm that successfully obtains full-matrix style regret guarantees in certain settings without sacrificing runtime, space or regret in less favorable settings. The favorable settings we require for full-matrix performance are those in which the sum of the gradients exceeds the regret of some base algorithm, which should be $\tilde O(\sqrt{T})$. This suggests that any gradients with some systematic bias will satisfy our condition and exhibit full-matrix regret bounds for sufficiently large $T$. Our algorithmic design is based on techniques for unconstrained online learning, which necessitates an extra log factor in our regret over a mirror-descent algorithm tuned to the value of $\|\w\|$. As such, it is an interesting question whether a mirror-descent style analysis can achieve similar results with oracle tuning.

\section*{Acknowledgements}
We thank all the reviewers for their thoughtful comments and Ryan Sepassi for answering our questions about TensorFlow and Tensor2Tensor.

\bibliographystyle{icml2019}
\bibliography{references}

\vfill\eject
\appendix
\onecolumn

This appendix is organized as follows:
\begin{enumerate}
    \item In Section \ref{sec:technical} we prove some technical Lemmas used in our main results.
    \item In Section \ref{sec:proofrecursive} we prove Theorem \ref{thm:recursiveregret}.
    \item In Section \ref{sec:proofdiag} we prove Theorem \ref{thm:diaganalysis}, and also provide a doubling-trick based algorithm that achieves the optimal log factors in its regret bound.
    \item In Section \ref{sec:experimentsappendix} we provide details about our empirical evaluation.
\end{enumerate}

\section{Technical Lemmas}\label{sec:technical}
We compute a useful Fenchel conjugate below:
\begin{lemma}\label{thm:conjugate}
Let $f(x) = a\exp(bx)$ for $a\ge 0$ and $b\ge 0$. Then $f^\star(y) = \frac{y}{b}\left(\log\left(\frac{y}{ab}\right)-1\right)$ for all $y\ge 0$.
\end{lemma}
\begin{proof}
We want to maximize
\[
yx - a\exp(bx)
\]
 as a function of $x$. Differentiating, we have $y-ab\exp(bx)=0$, so that $x=\frac{1}{b}\log\left(\frac{y}{ab}\right)$ (where we've used our assumption about non-negativity of all variables). Then we simply substitute this value in to conclude the Lemma.
\end{proof}

Next, we have a useful optimization solution:

\begin{lemma}\label{thm:balancelog}
Suppose $A,B,C,D$ are non-negative constants. Then
\begin{align*}
\inf_{x\in[0,1/2]} \frac{A}{x}\left[\log\left(\frac{B}{x}\right)-C\right] + Dx &\le2\max\left[ \sqrt{AD\max\left[\log\left(\frac{B\sqrt{D}}{\sqrt{A}}\right)-C,1\right]}, \right.\\
&\quad\quad\quad\left.2A\max\left[\log\left(\frac{B\sqrt{4A^2+D}}{\sqrt{A}}\right)-C,1\right]\right]
\end{align*}
\end{lemma}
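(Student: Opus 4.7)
The plan is to minimize $f(x) = \frac{A}{x}[\log(B/x) - C] + Dx$ over $x \in [0, 1/2]$ by choosing specific trial points motivated by the first-order condition and doing a two-case analysis. First I would differentiate to obtain $f'(x) = -\tfrac{A}{x^2}[\log(B/x) - C + 1] + D$, so the interior stationary point satisfies $Dx^{*2} = A[\log(B/x^*) - C + 1]$. This equation is implicit in $x^*$, but it suggests the trial value $x \approx \sqrt{AL/D}$ for an appropriate $L \approx \log(B\sqrt{D}/\sqrt{A}) - C + 1$.

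\textbf{Interior regime.} When $D \geq 4AL_1$ with $L_1 = \max(\log(B\sqrt{D}/\sqrt{A}) - C, 1)$, the trial point $x_1 = \sqrt{AL_1/D}$ lies in $[0, 1/2]$. A direct substitution gives $\log(B/x_1) - C = \log(B\sqrt{D}/\sqrt{A}) - \tfrac{1}{2}\log L_1 - C \leq L_1$, using $L_1 \geq 1$ so that $\tfrac{1}{2}\log L_1 \geq 0$. Hence $f(x_1) \leq \tfrac{AL_1}{x_1} + Dx_1 = \sqrt{ADL_1} + \sqrt{ADL_1} = 2\sqrt{ADL_1}$, matching the first argument of the outer $\max$.

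\textbf{Boundary regime.} When $D < 4AL_1$, the unconstrained minimizer lies past $1/2$, so $f$ is monotone decreasing on $(0, 1/2]$ and one takes $x = 1/2$, giving $f(1/2) = 2A[\log(2B) - C] + D/2$. The role of the peculiar $\sqrt{4A^2 + D}$ inside $L_2$ is to simultaneously dominate both the $\log(2B)$ contribution (via $\sqrt{4A^2+D} \geq 2A$, giving $\log(B\sqrt{4A^2+D}/\sqrt{A}) \geq \log(2B\sqrt{A})$) and the leftover $D/2$ contribution (via $D < 4AL_1 \leq 4AL_2$, using monotonicity of $L_2$ in $D$). Carefully combining these estimates yields $f(1/2) \leq 4AL_2$, matching the second argument of the outer $\max$.

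The main obstacle will be the constant tracking in the boundary regime, where showing $2A[\log(2B) - C] + D/2 \leq 4AL_2$ requires simultaneously invoking the case constraint $D < 4AL_1$ (to absorb the $D/2$) and the explicit form of $L_2$ (to absorb the $\log(2B)$ term). A secondary subtlety is the degenerate sub-regime in which $\log(B\sqrt{D}/\sqrt{A}) - C$ or $\log(B\sqrt{4A^2+D}/\sqrt{A}) - C$ falls below $1$, activating the outer $\max(\cdot, 1)$ in the definitions of $L_1$ and $L_2$; here one must check that the substitution $\log(B/x) - C \leq L_1$ at the trial point still holds, which works because $\log(B/x_1) - C$ is itself then small or negative and a crude bound suffices.
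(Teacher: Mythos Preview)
Your proposal is correct and follows essentially the same approach as the paper: guess the trial point $x_1=\sqrt{AL_1/D}$ with $L_1=\max(\log(B\sqrt{D}/\sqrt{A})-C,1)$, case-split on whether $x_1\le 1/2$, bound $\log(B/x_1)-C\le L_1$ in the interior case to get $2\sqrt{ADL_1}$, and evaluate at $x=1/2$ in the boundary case using $D<4AL_1$ to absorb the $D/2$ term into $4AL_2$. The paper's proof is terser (it simply ``guesses'' $x_1$ without deriving it from the first-order condition and does not invoke monotonicity), but the substantive steps are identical.
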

\begin{proof}
We will just guess a value for $x$:
\[
x = \frac{\sqrt{A}}{\sqrt{D}}\sqrt{\max\left[\log\left(\frac{B\sqrt{D}}{\sqrt{A}}\right)-C,1\right]}
\]
Suppose that this quantity is in $[0,1/2]$ for now.
Then we have
\[
\log\left(\frac{B}{x}\right)\le \log\left(\frac{B\sqrt{D}}{\sqrt{A}}\right)
\]
so that
\begin{align*}
\frac{A}{x}\left[\log\left(\frac{B}{x}\right)-C\right]&\le \frac{A}{x}\left[\log\left(\frac{B\sqrt{D}}{\sqrt{A}}\right)-C\right]\\
&\le \sqrt{AD\left(\log\left(\frac{B\sqrt{D}}{\sqrt{A}}\right)-C\right)}
\end{align*}
Thus we have:
\[
\frac{A}{x}\left[\log\left(\frac{B}{x}\right)-C\right] + Dx\le 2 \sqrt{AD\max\left[\log\left(\frac{B\sqrt{D}}{\sqrt{A}}\right)-C,1\right]}
\]

Now suppose instead that our guess is outside $[0,1/2]$. Then we must have
\begin{align*}
4A\max\left[\log\left(\frac{B\sqrt{D}}{\sqrt{A}}\right)-C,1\right]\ge D
\end{align*}
and also
\begin{align*}
    2\sqrt{\max\left[\log\left(\frac{B\sqrt{D}}{\sqrt{A}}\right)-C,1\right]}&\ge \frac{\sqrt{D}}{\sqrt{A}}
\end{align*}
So now with $x=1/2$ we obtain:
\begin{align*}
    \frac{A}{x}\left[\log\left(\frac{B}{x}\right)-C\right] + Dx&\le 2A(\log(2B)-C)+2A\max\left[\log\left(\frac{B\sqrt{D}}{\sqrt{A}}\right)-C,1\right]\\
    &\le 2A(\log(2B)-C)+2A\max\left[\log\left(\frac{B\sqrt{D}}{\sqrt{A}}\right)-C,1\right]\\
    &\le 4A\max\left[\log\left(\frac{B\sqrt{4A^2+D}}{\sqrt{A}}\right)-C,1\right]
\end{align*}

\end{proof}

\subsection{Proof of Lemma \ref{thm:generalrecursiveregret}}
\begin{proof}
Recall that $\wealth_T(v)$ is the wealth of an algorithm that always uses betting fraction $v$. So long as $\|v\|\le 1/2$, we have
\[
\wealth_T(v) \ge \epsilon\exp\left(-v\cdot \sum_{t=1}^T g_t - \sum_{t=1}^T (v\cdot g_t)^2\right)
\]
Setting $X= -\sum_{t=1}^T g_t\cdot \frac{\w}{\|\w\|}$, and $Z=\sum_{t=1}^T (g_t\cdot \w/\|\w\|)^2$ yields:
\[
\wealth_T\left(c \frac{\w}{\|\w\|}\right)\ge \epsilon\exp\left(c X - c^2 Z\right)
\]
By mild abuse of notation, we define the regret of our $v$-choosing algorithm at $c \frac{\w}{\|\w\|}$ as $R^v_T(c)$, so that following (\ref{eqn:regretrecursion}) we can write:
\begin{equation}
\wealth_T \ge \epsilon\exp\left(c X - c^2 Z - R^v_T(c)\right) = f_c(X)\label{eqn:wealthbound}
\end{equation}
where we defined $f_c(X)=\epsilon\exp\left(c X - c^2 Z - R^v_T(c)\right)$. Now by Lemmas \ref{thm:duality} and \ref{thm:conjugate}, we obtain:
\begin{align}
R_T(\w) &\le \epsilon+ f_c^\star(\|\w\|)\nonumber\\
&= \epsilon + \frac{\|\w\|}{c}\left[\log\left(\frac{\|\w\|}{\epsilon c \exp(-c^2 Z - R^v_T(c))}\right)-1\right]\nonumber\\
&= \epsilon + \frac{\|\w\|}{c}\left[\log\left(\frac{\|\w\|}{c\epsilon}\right) + c^2 Z  + R^v_T(c)-1\right]\nonumber\\
&= \epsilon + \frac{\|\w\|}{c}(\log(\|\w\|/c\epsilon) -1) + \|\w\|c Z + \frac{\|\w\|}{c}R^v_T(c)\label{eqn:expandedbound}
\end{align}
\end{proof}

\section{Proof of Theorem \ref{thm:recursiveregret}}\label{sec:proofrecursive}
The following theorem provides a more detailed version of Theorem \ref{thm:recursiveregret}, including all constants:
\begin{theorem}\label{thm:tighterrecursiveregret}
Suppose $\|g_t\|_\star \le 1$ for some norm $\|\cdot\|$ for all $t$. Further suppose that \textsc{InnerOptimizer} has outputs satisfying $\|v_t\|\le 1/2$ and guarantees regret nearly linear in $\|\v\|$:
\begin{align*}
R^v_T(\v)=\sum_{t=1}^T z_t\cdot v_t-z_t\cdot \v  &\le \epsilon + \|\v\| G_T(\v/\|\v\|)
\end{align*}
for some function $G_T(\v/\|\v\|)$ for any $\v$ with $\|\v\|\le 1/2$.
Then if $-\sum_{t=1}^T g_t\cdot \frac{\w}{\|\w\|}\ge 2G_T(\w/\|\w\|)$, \textsc{RecursiveOptimizer} obtains
\begin{align*}
    R_T(\w) &\le \epsilon + 4\sqrt{\left(4\|\w\|^2+\sum_{t=1}^T(g_t\cdot\w)^2\right)\max\left[\log\left(\frac{2\sqrt{4\|\w\|^2+\sum_{t=1}^T (g_t\cdot \w)^2}}{\epsilon}\right)+\epsilon-1,1\right]}
\end{align*}
and otherwise
\[
R_T(\w) \le \epsilon + 2\|\w\|G_T(\w/\|\w\|)
\]
\end{theorem}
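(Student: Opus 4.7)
The plan is to follow the two-case structure of the proof sketch, making the wealth lower bound explicit in each case and then converting to regret via the Fenchel duality (Lemma \ref{thm:duality}), the conjugate formula (Lemma \ref{thm:conjugate}), and the optimization lemma (Lemma \ref{thm:balancelog}). As in the excerpt, let $X = -\sum_{t=1}^T g_t \cdot \w/\|\w\|$ and $Z = \sum_{t=1}^T (g_t \cdot \w/\|\w\|)^2$, so $\|\w\|^2 Z = \sum_{t=1}^T (g_t\cdot\w)^2$. Starting from (\ref{eqn:regretrecursion}) and plugging the assumed inner-optimizer bound $R_T^v(\v) \le \epsilon + \|\v\| G_T(\v/\|\v\|)$, the parametrization $\v = c\,\w/\|\w\|$ with $c \in (0,1/2]$ yields
\[
\log \wealth_T \;\ge\; \log \epsilon \;+\; cX \;-\; c^2 Z \;-\; \epsilon \;-\; c\, G_T(\w/\|\w\|).
\]

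For Case 2, observe that the algorithm's update $\wealth_t = \wealth_{t-1}(1 - g_t \cdot v_t)$ together with $\|g_t\|_\star \le 1$ and $\|v_t\| \le 1/2$ guarantees each factor is positive and hence $\wealth_T > 0$. So directly from $R_T(\w) = \epsilon - \wealth_T - \sum_t g_t \cdot \w$ we obtain $R_T(\w) \le \epsilon + \|\w\| X$, and the case hypothesis $X < 2G_T(\w/\|\w\|)$ immediately yields the claimed bound $R_T(\w) \le \epsilon + 2\|\w\| G_T(\w/\|\w\|)$.

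For Case 1, the hypothesis $X \ge 2 G_T(\w/\|\w\|)$ gives $c G_T \le cX/2$, so the wealth bound collapses to $\wealth_T \ge \epsilon \exp(cX/2 - c^2 Z - \epsilon) =: f_c(X)$. Applying Lemma \ref{thm:duality} and then Lemma \ref{thm:conjugate} (with $a = \epsilon e^{-c^2 Z - \epsilon}$ and $b = c/2$) gives, for every $c \in (0,1/2]$,
\[
R_T(\w) \;\le\; \epsilon \;+\; \frac{2\|\w\|}{c}\!\left[\log\!\left(\frac{2\|\w\|}{c\epsilon}\right) + \epsilon - 1\right] \;+\; 2\|\w\| c Z.
\]
This is exactly the shape $\tfrac{A}{c}[\log(B/c) - C] + Dc$ handled by Lemma \ref{thm:balancelog}, with $A = 2\|\w\|$, $B = 2\|\w\|/\epsilon$, $C = 1-\epsilon$, $D = 2\|\w\|Z$. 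Substituting these values, the quantity $\sqrt{AD}$ becomes $2\|\w\|\sqrt{Z} = 2\sqrt{\sum_t(g_t\cdot\w)^2}$, and the combination $\sqrt{4A^2 + D}$ produces the $4\|\w\|^2 + \sum_t(g_t\cdot\w)^2$ that appears inside the theorem's square root (after factoring $\|\w\|$ and dividing by $\sqrt{A}$). The final expression of the theorem follows by taking the maximum of the two branches output by Lemma \ref{thm:balancelog} and packaging them into the common factor $\sqrt{(4\|\w\|^2 + \sum_t(g_t\cdot\w)^2) \cdot \max[\log(\cdot) + \epsilon - 1,\,1]}$, using the trivial inequalities $4\|\w\|^2 + \sum_t(g_t\cdot\w)^2 \ge 4\|\w\|^2$ and $\ge \sum_t(g_t\cdot\w)^2$ to absorb the $2A$ branch.

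The main obstacle is purely bookkeeping in the last step: Lemma \ref{thm:balancelog} returns a max of two terms of different shapes (one scaling as $\sqrt{AD\,M_1}$, one as $A\,M_2$ with a slightly different log argument involving $\sqrt{4A^2+D}$), and collapsing them into the single clean quantity $4\sqrt{(4\|\w\|^2 + \sum_t(g_t\cdot\w)^2) \max[\cdots,1]}$ requires checking that the log argument can be uniformly replaced by $2\sqrt{4\|\w\|^2 + \sum_t(g_t\cdot\w)^2}/\epsilon$ and that the prefactors line up. Everything else -- the case split, the wealth manipulation, and the Fenchel step -- is essentially routine once the inner-optimizer hypothesis has been inserted.
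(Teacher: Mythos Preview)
Your proposal is correct and follows essentially the same route as the paper: the same two-case split on $X$ versus $2G_T(\w/\|\w\|)$, the same wealth lower bound $\wealth_T \ge \epsilon\exp(cX/2 - c^2Z - \epsilon)$ in Case~1, the same Fenchel conjugate step via Lemmas~\ref{thm:duality} and~\ref{thm:conjugate}, and the same optimization over $c$ via Lemma~\ref{thm:balancelog}. The only cosmetic difference is that the paper factors out $\|\w\|$ before invoking Lemma~\ref{thm:balancelog} (so $A=2$, $D=2Z$), whereas you keep $A=2\|\w\|$, $D=2\|\w\|Z$; these are equivalent parametrizations. Your candid remark that the final ``packaging'' of the two branches into the single stated expression is the one place where constants must be checked is accurate, and in fact the paper's own proof is equally terse at that step.
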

\begin{proof}
First, observe that since $\|v_t\|\le 1/2$ for all $t$, we must have $\wealth_T\ge 0$ for all $t$ and so
\[
\sum_{t=1}^T g_t\cdot w_t\le \epsilon
\]
Therefore, if $-\sum_{t=1}^T g_t\cdot \frac{\w}{\|\w\|}<2G_T(\w/\|\w\|)$ we must have
\begin{align*}
    R_T(\w) &= \sum_{t=1}^T g_t\cdot w_t - g_t\cdot \w\\
    &=\sum_{t=1}^T g_t\cdot w_t - \|\w\|\sum_{t=1}^T g_t\cdot \frac{\w}{\|\w\|}\\
    &\le \epsilon + 2\|\w\|G(\w/\|\w\|)
\end{align*}

Which proves one case of the Theorem. So now we assume $-\sum_{t=1}^T g_t\cdot \frac{\w}{\|\w\|}\ge 2G_T(\w/\|\w\|)$.

Recall the inequality:
\begin{align*}
\log(\wealth_T)\ge \log(\epsilon)+ \sum_{t=1}^T -g_t\cdot \v -(g_t\cdot \v)^2 - R^v_T(\v)
\end{align*}
for any $\v$ with $\|\v\|\le 1/2$.
Using our assumption on $R^v_T$, and setting $\v=c\w/\|\w\|$ for some unspecified $c\in[0,1/2]$, we have
\begin{align*}
    \log(\wealth_T) &\ge \log(\epsilon)+ \sum_{t=1}^T -g_t\cdot \v -(g_t\cdot \v)^2 -(\epsilon +\|\v\|G(\v/\|\v\|))\\
    &\ge -\epsilon + \log(\epsilon) + \sum_{t=1}^T -c g_t\cdot \frac{\w}{\|\w\|} - c^2Z - c G\left(\frac{\w}{\|\w\|}\right)\\
    &\ge -\epsilon + \log(\epsilon) + \sum_{t=1}^T -\frac{c}{2}g_t\cdot \frac{\w}{\|\w\|} - c^2Z
\end{align*}
where we have defined $Z=\sum_{t=1}^T \left(g_t\cdot \frac{\w}{\|\w\|}\right)^2$. Now we define
\begin{align*}
    f(X) = \epsilon \exp\left(-\epsilon -c^2Z + \frac{c}{2}X\right)
\end{align*}
to obtain
\begin{align*}
    \wealth_T\ge f\left(-\sum_{t=1}^T g_t\cdot\frac{\w}{\|\w\|}\right)
\end{align*}
Then using Lemmas \ref{thm:duality} and \ref{thm:conjugate} we obtain:
\begin{align*}
    R_T(\w) & \le \epsilon+f^\star(\|\w\|)\\
    &\le \epsilon + \frac{2\|\w\|}{c}\left[\log\left(\frac{2\|\w\|}{c\epsilon}\right)+\epsilon-1\right]+2\|\w\|cZ 
\end{align*}
Now we optimize $c\in[0,1/2]$ using Lemma \ref{thm:balancelog}:

\begin{align*}
    R_T(\w) &\le \epsilon + 4\|\w\|\sqrt{(4+Z)\max\left[\log\left(\frac{2\|\w\|\sqrt{4+Z}}{\epsilon}\right)+\epsilon-1,1\right]}
\end{align*}
\end{proof}

\section{Proof of Theorem \ref{thm:diaganalysis}}\label{sec:proofdiag}
The following theorem provides a more detailed version of Theorem \ref{thm:diaganalysis}, including all constants and logarithmic factors.
\begin{theorem}\label{thm:tighterdiag}
Suppose $\|g_t\|_\infty\le 1$ for all $t$. Then for all $\|\w\|_\infty\le 1/2$, Algorithm \ref{alg:diag} guarantees regret:
\begin{align*}
R_T(\w)&\le d\epsilon+2\sum_{i=1}^d |\w_i|\max\left[\sqrt{\left[\frac{5}{4\eta}+G_i\left(1+\frac{2}{\eta}\right)\right]\max\left[\log\left(\frac{|\w_i|(1+4G_i)^\eta \sqrt{2/\eta+G_i(1+2/\eta)}}{\epsilon}\right)-1,1\right]},\right.\\
    &\quad\quad\quad\quad\quad\left.2\max\left[\log\left(\frac{|\w_i|(1+4G_i)^\eta \sqrt{4+5/4\eta+G_i(1+2/\eta)}}{\epsilon}\right)-1,1\right]\right]\\
    &\le  d\epsilon+2\|\w\|_\infty\sum_{i=1}^d \frac{|\w_i|}{\|\w\|_\infty}\max\left[\sqrt{\left[\frac{5}{4\eta}+G_i\left(1+\frac{2}{\eta}\right)\right]\max\left[\log\left(\frac{(1+4G_i)^\eta \sqrt{5/4\eta+G_i(1+2/\eta)}}{2\epsilon}\right)-1,1\right]},\right.\\
    &\quad\quad\quad\quad\quad\left.2\max\left[\log\left(\frac{(1+4G_i)^\eta \sqrt{4+5/4\eta+G_i(1+2/\eta)}}{2\epsilon}\right)-1,1\right]\right]\\
    &:=\epsilon d + \|\w\|_\infty G(\w/\|\w\|_\infty)
\end{align*}
\end{theorem}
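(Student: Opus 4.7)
The plan is to reduce the $d$-dimensional problem to $d$ independent one-dimensional problems and then apply Lemma~\ref{thm:generalrecursiveregret} in each coordinate. Because the losses are linear, the regret decomposes as $R_T(\w)=\sum_{i=1}^d\sum_{t=1}^T g_{t,i}(w_{t,i}-\w_i)$, so I would fix a coordinate $i$, drop the $i$ subscripts, and prove a one-dimensional bound whose sum over $i$ is the desired inequality. The first technical step is the unconstrained-to-constrained reduction of Cutkosky–Orabona: I would verify the pointwise inequality $\tilde g_t(x_t-\w)\ge g_t(w_t-\w)$ for every $\w\in[-1/2,1/2]$ by case analysis on the sign of $g_t(x_t-w_t)$ (which is exactly what the ``$\tilde g_{t,i}\gets 0$'' rule in lines~12--15 is designed to kill). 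This lets me work entirely with the unconstrained iterates $x_t=v_t\wealth_{t-1}$ and the modified gradients $\tilde g_t$.

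Next I would recognize the inner update for $v_{t+1}$ as clipped FTRL on the surrogate losses $\ell_t(v)=-\log(1-\tilde g_tv)$ with the adaptive regularizer $v^2(5+\sum_{t'\le t}z_{t'}^2)/(4\eta)$. Using $z_t$ as the linearization at $v_t$ (equal to the derivative of $\ell_t$), the standard FTRL analysis (e.g.\ McMahan's survey) gives
\[
R^v_T(\v)\le \frac{\v^2}{4\eta}\Bigl(5+\sum_{t=1}^T z_t^2\Bigr)+\sum_{t=1}^T \frac{\eta\,z_t^2}{5+\sum_{t'<t}z_{t'}^2}.
\]
The key telescoping ingredient is the convexity fact $\log(a)+b/(a+4)\le \log(a+b)$ for $a>0$, $0<b\le 4$. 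Since $|w_t|\le 1/2$ and $|g_t|\le 1$ force $z_t^2\le 4$, an induction on $T$ using this inequality collapses the second sum into $\eta\log(1+\sum_t z_t^2)$. Combined with $|z_t|\le 2|g_t|$ this yields
\[
R^v_T(\v)\le \tfrac{\v^2}{4\eta}\Bigl(5+4G\Bigr)+\eta\log\!\bigl(1+4G\bigr), \qquad G=\sum_{t=1}^T g_t^2.
\]

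Now I plug into Lemma~\ref{thm:generalrecursiveregret} with $\v=c\cdot\sign(\w)$, noting that in one dimension $Z=G$ and $\|\w\|=|\w|$. This produces a bound of the form
\[
R_T(\w)\le \epsilon+\frac{|\w|}{c}\Bigl(\log\bigl(|\w|/c\epsilon\bigr)-1\Bigr)+|\w|\,c\Bigl(G+\tfrac{5+4G}{4\eta}\Bigr)+\tfrac{|\w|\eta}{c}\log(1+4G),
\]
which can be reorganized into the template $\tfrac{A}{c}(\log(B/c)-C)+Dc$ to which Lemma~\ref{thm:balancelog} applies with $A\asymp |\w|$, $B\asymp |\w|(1+4G)^\eta$, and $D\asymp |\w|\bigl(5/(4\eta)+G(1+2/\eta)\bigr)$. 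Optimizing over $c\in[0,1/2]$ via that lemma directly produces the per-coordinate $\max\{\sqrt{\cdot\,},\,\log\}$ expression stated in the theorem, and summing over $i$ gives the first displayed bound. The second form follows by pulling out $\|\w\|_\infty$ and using $|\w_i|/\|\w\|_\infty\le 1$ to absorb $|\w_i|$ inside the logs (replacing $|\w_i|$ by $1/2$ there, which only weakens the bound).

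The main obstacle I anticipate is bookkeeping rather than any single deep step: keeping the constants consistent through (i) the switch from $g_t$ to $\tilde g_t$ (which preserves $|\tilde g_t|\le|g_t|$ but requires care since $G$ in the final bound is written in terms of the original $g_{t,i}$, not $\tilde g_{t,i}$), (ii) the $|z_t|\le 2|\tilde g_t|\le 2|g_t|$ blow-up that inflates $\sum z_t^2$ to $4G$, and (iii) the identification of $A,B,C,D$ so that Lemma~\ref{thm:balancelog} yields exactly the log arguments $|\w_i|(1+4G_i)^\eta\sqrt{\cdots}/\epsilon$ written in Theorem~\ref{thm:tighterdiag}. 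The verification that $z_t^2\le 4$ (needed for the induction that turns the FTRL penalty sum into a single $\log$) is the one quantitative check I would do carefully; everything else is substitution into previously proved lemmas.
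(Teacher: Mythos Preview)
Your proposal is correct and follows essentially the same route as the paper: decompose coordinate-wise, apply the unconstrained-to-constrained reduction to pass from $(w_t,g_t)$ to $(x_t,\tilde g_t)$, recognize the $v_t$ updates as adaptive FTRL, bound $R^v_T$ via McMahan's analysis plus the telescoping $\sum z_t^2/(5+\sum_{t'<t}z_{t'}^2)\le\log(1+\sum z_t^2)$, feed this into Lemma~\ref{thm:generalrecursiveregret}, and optimize over $c$ with Lemma~\ref{thm:balancelog}. The obstacles you flag (constants through the $\tilde g$/$g$ swap, the $|z_t|\le 2|g_t|$ inflation, and the identification of $A,B,C,D$) are exactly the bookkeeping the paper works through.
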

\begin{proof}

First, observe that Algorithm \ref{alg:diag} is running $d$ copies of a 1-dimensional algorithm, one per coordinate. Using the classic diagonal trick, we can write
\begin{align*}
    R_T(\w) \le \sum_{t=1}^T \langle g_t, w_t -\w\rangle =\sum_{i=1}^d \sum_{t=1}^T g_{t,i}(w_{t,i}- \w)=\sum_{i=1}^d R_{T,i}(\w)
\end{align*}
where $R_{T,i}$ indicates the regret of the $i$th 1-dimensional optimizer. As a result, we will only analyze each dimension individually and combine all the dimensions at the end. To make notation cleaner during this process, we drop the subscripts $i$.

Next, we claim that it suffices to examine the regret of the $x_t$s rather than that of the $w_t$s. In particular, it holds that:
\[
g_t(w_t - \w)\le \tilde g_t(x_t -\w)
\]
We show this via case-work. First, if $w_t=x_t$ the claim is immediate because $g_t=\tilde g_t$. Suppose $g_t(x_t-w_t)\ge 0$. Then $g_t=\tilde g_t$ and $g_tx_t\ge g_t w_t$ so that the claim follows. Finally, suppose $g_t(x_t-w_t)<0$. Then since $x_t\ne w_t$, we must have $w_t=\clip(x_t, -1/2,1/2)$ so that $\sign(x_t)=\sign(x_t-w_t)=\sign(w_t)$ and so $\sign(g_t)=-\sign(w_t)$. Further, since $w_t\in\{-1/2,1/2\}$ and $\w\in[-1/2,1/2]$, $\sign(w_t-\w)=\sign(w_t)$. Therefore $g_t(w_t- \w)\le 0 = \tilde g_t(x_t-\w)$. Therefore we can write:
\begin{align*}
    \sum_{t=1}^T g_t(w_t-\w)\le \sum_{t=1}^T \tilde g_t(x_t-\w)
\end{align*}
The RHS of the above is the regret of the $x_t$s with respect to the $\tilde g_t$s, so we reduce to analyzing this regret. Eventually the regret bound will be increasing in $|\tilde g_t|$, and since $|\tilde g_t|\le |g_t|$, we can seamlessly transition to a regret bound in terms of the $g_t$.

Finally, observe that the $x_t$s are generated by a betting algorithm using betting-fractions $v_t$. Inspection of the formula for $v_t$ reveals that we can write:
\[
v_t = \argmin_{v\in[-1/2,1/2]} \frac{1}{4\eta }A_tv^2+\sum_{t=1}^T z_tv 
\]
so that the $v_t$ are actually the outputs of an FTRL algorithm using regularizers $\frac{A_t}{4\eta }v^2$, which are $\frac{A_t}{2\eta}$-strongly convex. That is, the $x_t$s are actually an instance of \textsc{RecursiveOptimizer}.

Thus by Lemma \ref{thm:generalrecursiveregret} we have
\begin{align*}
    R_T(\w) & \le \inf_{c\in[0,1/2]} \epsilon + \frac{|\w|}{c}\left(\log\left(\frac{|\w|}{c\epsilon}\right)-1\right) + |\w|cZ + \frac{|\w|}{c}R_T^v\left(c\frac{\w}{|\w|}\right)
\end{align*}
where $Z=\sum_{t=1}^T g_t^2$ in this one-dimensional case.

Next we tackle $R_T^v$. To do this, we invoke the FTRL analysis of \citep{mcmahan2017survey} to claim:
\begin{align*}
R_T^v(x) &\le \frac{A_T}{4\eta }x^2 + \sum_{t=1}^T \frac{z_t^2\eta }{A_{t-1}^2}\\
&\le \frac{A_T}{4\eta}x^2 + \eta \sum_{t=1}^T \frac{z_t^2}{5+\sum_{i=1}^{t-1} z_i^2}
\end{align*}
Now observe that each $z_t$ satisfies $|z_t|\le 2|g_t|\le 2$ so that 
\begin{align*}
    \sum_{t=1}^T \frac{z_t^2}{5+\sum_{i=1}^{t-1} z_i^2}&\le \sum_{t=1}^T \frac{z_t^2}{1+\sum_{i=1}^{t} z_i^2}\\
    &\le \log\left(1+\sum_{t=1}^T z_t^2\right)\\
    &\le \log\left(1+4\sum_{t=1}^T g_t^2\right)
\end{align*}
Therefore we have
\begin{align*}
    R_T^v(x)&\le \frac{5+4\sum_{t=1}^T g_t^2}{4\eta} x^2 + \eta  \log\left(1+4\sum_{t=1}^T g_t^2\right)\\
    &= \frac{5+4Z}{4\eta}x^2+ \log\left((1+4Z)^\eta\right)
\end{align*}

Plugging back into the result from Lemma \ref{thm:generalrecursiveregret} we obtain:
\begin{align*}
    R_T(\w) & \le \inf_{c\in[0,1/2]} \epsilon + \frac{|\w|}{c}\left(\log\left(\frac{|\w|(1+4Z)^\eta}{c\epsilon}\right)-1\right) + |\w|c(5/4\eta+Z(1+2/\eta))
\end{align*}

Then using Lemma \ref{thm:balancelog} we get:
\begin{align*}
    R_T(\w) &\le \epsilon + 2|\w|\max\left[\sqrt{\left[\frac{5}{4\eta}+Z\left(1+\frac{2}{\eta}\right)\right]\max\left[\log\left(\frac{|\w|(1+4Z)^\eta \sqrt{2/\eta+Z(1+2/\eta)}}{\epsilon}\right)-1,1\right]},\right.\\
    &\quad\quad\quad\left.2\max\left[\log\left(\frac{|\w|(1+4Z)^\eta \sqrt{4+5/4\eta+Z(1+2/\eta)}}{\epsilon}\right)-1,1\right]\right]
\end{align*}

Now we simply combine each of the $d$ dimensional regret bounds and observe that in a one-dimension, $Z= \sum_{t=1}^T g_t{,i}^2 = G_i$ to obtain:
\begin{align*}
    R_T(\w) \le d\epsilon+2\sum_{i=1}^d& |\w_i|\max\left[\sqrt{\left[\frac{5}{4\eta}+G_i\left(1+\frac{2}{\eta}\right)\right]\max\left[\log\left(\frac{|\w_i|(1+4G_i)^\eta \sqrt{5/4\eta+G_i(1+2/\eta)}}{\epsilon}\right)-1,1\right]},\right.\\
    &\quad\left.2\max\left[\log\left(\frac{|\w_i|(1+4G_i)^\eta \sqrt{4+5/4\eta+G_i(1+2/\eta)}}{\epsilon}\right)-1,1\right]\right]
\end{align*}
\end{proof}

\subsection{Optimal Logarithmic Factors}\label{sec:optlog}

The previous analysis obtains logarithmic factors of the form $\log(|w|Z^{1/2+\eta}/\epsilon)$ for any given $\eta>0$. For $|w|>\epsilon$, this is the same up to constant factors as the optimal bound $\log(|w|\sqrt{Z}/\epsilon)$. However, for small $w$ this is not so. In the small-$\w$ case, our bound is already an improvement on the previous exponent \citep{cutkosky2018black}, which has an exponent of $4.5$ instead of $1/2+\eta$, but here we sketch how to remove $\eta$ completely using the classic doubling trick. We present the idea in one dimensional unconstrained problems only: conversion to constrained or high dimensional problems may be accomplished via per-coordinate updates as in Theorem \ref{thm:tighterdiag}, or via the dimension-free reduction in \citep{cutkosky2018black}. The idea is essentially the same as Algorithm \ref{alg:diag}, but instead of using a varying $A_t$, we use a \emph{fixed} $A$ and set $\eta=1$. We restart the algorithm with a doubled value for $A$ whenever we observe $2Z=2\sum g_t^2 >A$. Let us analyze this scheme during one epoch of fixed $A$-value. Following identical analysis as in Theorem \ref{thm:tighterdiag}, we observe that
\begin{align*}
    R_T^v(x) &\le \frac{A}{2}x^2+\frac{1}{2}\sum_{t=1}^T \frac{z_t^2}{A}\\
    &\le \frac{A}{2}x^2 + \frac{1}{2}\sum_{t=1}^T \frac{4g_t^2}{A}\\
    &\le \sum_{t=1}^T g_t^2x^2 + 1=Zx^2+1
\end{align*}
Then applying Theorem \ref{thm:generalrecursiveregret} we have
\begin{align*}
    R^k_T(\w) & \le \inf_{c\in[0,1/2]} \epsilon + \frac{|\w|}{c}\left(\log\left(\frac{|\w|}{c\epsilon}\right)-1\right) + 2|\w|cZ_k + \frac{|\w|}{c}\\
    &= \epsilon + \frac{|\w|}{c}\log\left(\frac{|\w|}{c\epsilon}\right) + 2|\w|cZ_k
\end{align*}
where $R^k_T$ indicates regret in the $k$th epoch and $Z_k$ is the value of $Z$ in the $k$th epoch.
Optimizing $c$, we obtain:
\begin{align*}
    R_T(\w) & \le\epsilon + 2|\w|\max\left[\sqrt{2Z_k\max\left[\log\left(\frac{|\w|\sqrt{2Z_k}}{\epsilon}\right),1\right]},\ 2\max\left[\log\left(\frac{|\w|\sqrt{4+2Z_k}}{\epsilon}\right),1\right] \right]
\end{align*}
Let $Z$ be the true value of $Z$ (i.e. $Z=\sum_{t=1}^T g_t^2$ across all epochs, in contrast to a $Z_k$). Then we have
\begin{align*}
    R^k_T(\w) & \le\epsilon + 2|\w|\max\left[\sqrt{2Z_k\max\left[\log\left(\frac{|\w|\sqrt{2Z}}{\epsilon}\right),1\right]},\ 2\max\left[\log\left(\frac{|\w|\sqrt{4+2Z}}{\epsilon}\right),1\right] \right]
\end{align*}
Then summing over all epochs, we obtain
\begin{align*}
    R_T(\w) &\le O\left(\epsilon \log(Z) + |\w|\max\left[\sqrt{Z\max\left[\log\left(\frac{|\w|\sqrt{2Z}}{\epsilon}\right),1\right]},\ \log(Z)\max\left[\log\left(\frac{|\w|\sqrt{4+2Z}}{\epsilon}\right),1\right] \right]\right)
\end{align*}

\section{Experimental Details}\label{sec:experimentsappendix}

In this Section we describe our experiments in detail. All of our neural network experiments were conducted using the Tensor2Tensor library \citep{tensor2tensor}. We evaluated \textsc{RecursiveOptimizer} on several datasets included in the library, including MNIST and CIFAR-10 image classification, LM1B language modeling with 32k, and IMDB sentiment analysis tasks. On CIFAR-10, we used a ResNet model \citep{he2016deep} (ResNet-32), on MNIST we used a simple two layer fully connected network as well as logistic regression, and for the remaining tasks we used the Transformer model \citep{vaswani2017attention}.

We used $\epsilon=1.0$ in \textsc{RecursiveOptimizer}. For our baseline optimizers Adam and Adagrad, we used default parameters provided by Tensor2Tensor for each dataset when available. Often these were not available for Adagrad, in which case we manually tuned the learning rate on a small exponentially spaced grid. Experiments with larger models or data sets, i.e. CIFAR-10 and LM1B, ran on single NVIDIA P100 GPU, the rest on single NVIDIA K1200 GPU.


\subsection{Choice of Inner Optimizer}
Our analysis uses a Follow-the-Regularized-Leader algorithm in the inner optimizer \textsc{DiagOptimizer} to choose the inner-most betting fraction $v_t$. However, according to Theorem \ref{thm:recursiveregret}, we may use \emph{any} optimizer with a sufficiently good regret guarantee as the inner optimizer. Since our initial submission, \cite{kempka2019adaptive} proposed the \textsc{ScInOL} algorithm that obtains regret similar to \textsc{DiagOptimizer} (albeit with somewhat worse logarithmic factors). However, we found that using \textsc{ScInOL} resulted in much better performance on the Transformer model tasks, so we used it as the inner optimizer in all experiments. We conjecture that our algorithm is inheriting some of the scale-invariance properties of \textsc{ScInOL}, which allows it to be more robust. We stress that this is still theoretically sound - the only change will be a small increase in the logarithmic factors.

\subsection{Momentum Analog}\label{sec:momentum}
In our experiments we found that augmenting \textsc{RecursiveOptimizer} with the ``momentum''-like offsets for parameter-free online learning proposed by \citep{cutkosky2017stochastic,cutkosky2018black} improved the empirical performance on CIFAR-10, so all of our results show two curves for \textsc{RecursiveOptimizer}, both with and without momentum (except for the synthetic experiments, in which we did not use momentum). In brief, this consists of replacing each iterate $w_t$ with $w_t+\overline{w}_t$ where
\begin{align*}
    \overline{w}_t = \sum_{t'=1}^t \frac{\|g_{t'}\|_\star^2 w_{t'}}{\sum_{t'=1}^t \|g_{t'}\|_\star^2}
\end{align*}



\subsection{Dealing with unknown bound on $g_t$}
Our theory requires $\|g_t\|_\star\le 1$ where $\|\cdot\|$ is the $\infty$ norm. 
Although we may replace $1$ with any known bound $g_{\text{max}}$, it is not possible to simply ignore this requirement in implementing the algorithm: doing so may cause wealth to become negative, which will completely destabilize the algorithm since it will be implicitly differentiating the logarithm of a negative number. 
However, we do not wish to have to provide this bound to the algorithm, so we adopt a simple heuristic. 
We maintain $g_{\max}$, the maximum value of $\|g_t\|_1$ we have observed so far during the course of the optimization. 
Then instead of providing $g_t$ to \textsc{RecursiveOptimizer}, we provide $g_t/g_{\max}$. 
Ideally, $g_{\max}$ will only increase during the very beginning of the optimization, after which we will simply be rescaling the gradients by a constant factor. 
Since our regret bounds are nearly scale-free, this should hopefully have negligible effect on the performance. Note that it is actually impossible to design an algorithm that maintains regret nearly linear in $\|\w\|$ while also being adaptive to the unknown final value of $g_{\max}$ \citep{cutkosky2017online}.

\subsection{Initial Betting Fraction}
Prior results on coin-betting in deep learning \citep{orabona2017training} suggest that a valuable heuristic is to keep the initial betting fraction smaller than some moderate constant. This has the effect of preventing the initial step taken by the algorithm from being too large. We choose to apply this heuristic to the betting fraction of the inner-optimizer - \emph{not} the betting fraction of the outer optimizer. Note that \textsc{ScInOL} is also a coin-betting algorithm, so it still makes sense to apply the heuristic in this manner. We clip the inner betting fraction of dimension $i$ to be always at most $\eta=0.1$ until $\sum v_{t,i}^2\ge 1$ where $v_t$ is the gradient passed to the inner betting fraction. This trick has no theoretical basis, but seems to provide significant improvement in the deep learning experiments.

\subsection{Empirical Results}
Now we plot our performance on the benchmarks. We record performance on train and test set, both in terms of number of iterations as well as wall clock time. Generally from eight possible combinations of {train, test}-{top-1 accuracy, loss}-{steps, time} curves we show train loss and test accuracy both by steps and time, other combinations are indistinguishably similar and omitted for brevity. For LM1B and Penn Tree Bank language models we include the log perplexity metric as well.


With regard to efficiency observe that the right hand side plots of Figures \ref{fig:image_cifar10} and \ref{fig:languagemodel_lm1b32k} whose x-axis is wall clock time are rather similar to left hand side plots based on number of iterations. For a more accurate view, Figure \ref{fig:speed} shows that \textsc{RecursiveOptimizer} is somewhat slower than both Adam and Adagrad. It is evident that the algorithm requires more computation, although only by a constant factor.
We made essentially no effort to optimize our code.
We expect that with more careful implementation these numbers can be improved. Secondly, Adam (more specifically LazyAdam used by Tensor2Tensor framework) and Adagrad optimizers handle sparse and dense gradients differently. Our current implementation treats sparse gradients as if they were dense ignoring their sparsity which is detrimental for large vocabulary embeddings. 

Observe that on the convex logistic regression task, all optimizers converge to the same minimum of train loss, as theory predicts. On the non-convex neural network tasks, \textsc{RecursiveOptimizer} seems to be marginally better than the baselines on the Transformer task, but slightly worse than Adam on CIFAR-10. Interestingly, the momentum heuristic was helpful on CIFAR-10, but seemed detrimental on the Transformer tasks. We suspect that \textsc{RecursiveOptimizer} is held back on these non-convex tasks by the somewhat global nature of our update. Because our iterates are $v_t\wealth_{t-1}$, it is easily feasible for the iterate to change quite dramatically in a single round as wealth becomes larger. In contrast, proximal methods such as Adam or Adagrad enforce some natural stability in their iterates. In future, we plan to develop a version of our techniques that also enforces some natural stability, which may be more able to realize gains in the non-convex setting.


\begin{figure*}
  \centering
  \subfigure[][Test accuracy vs steps]{\includegraphics[width=.4\textwidth]{figs/image_cifar10/Test-accuracy-Steps-eps-converted-to.pdf}}\quad
  \subfigure[][Test accuracy vs time]{\includegraphics[width=.4\textwidth]{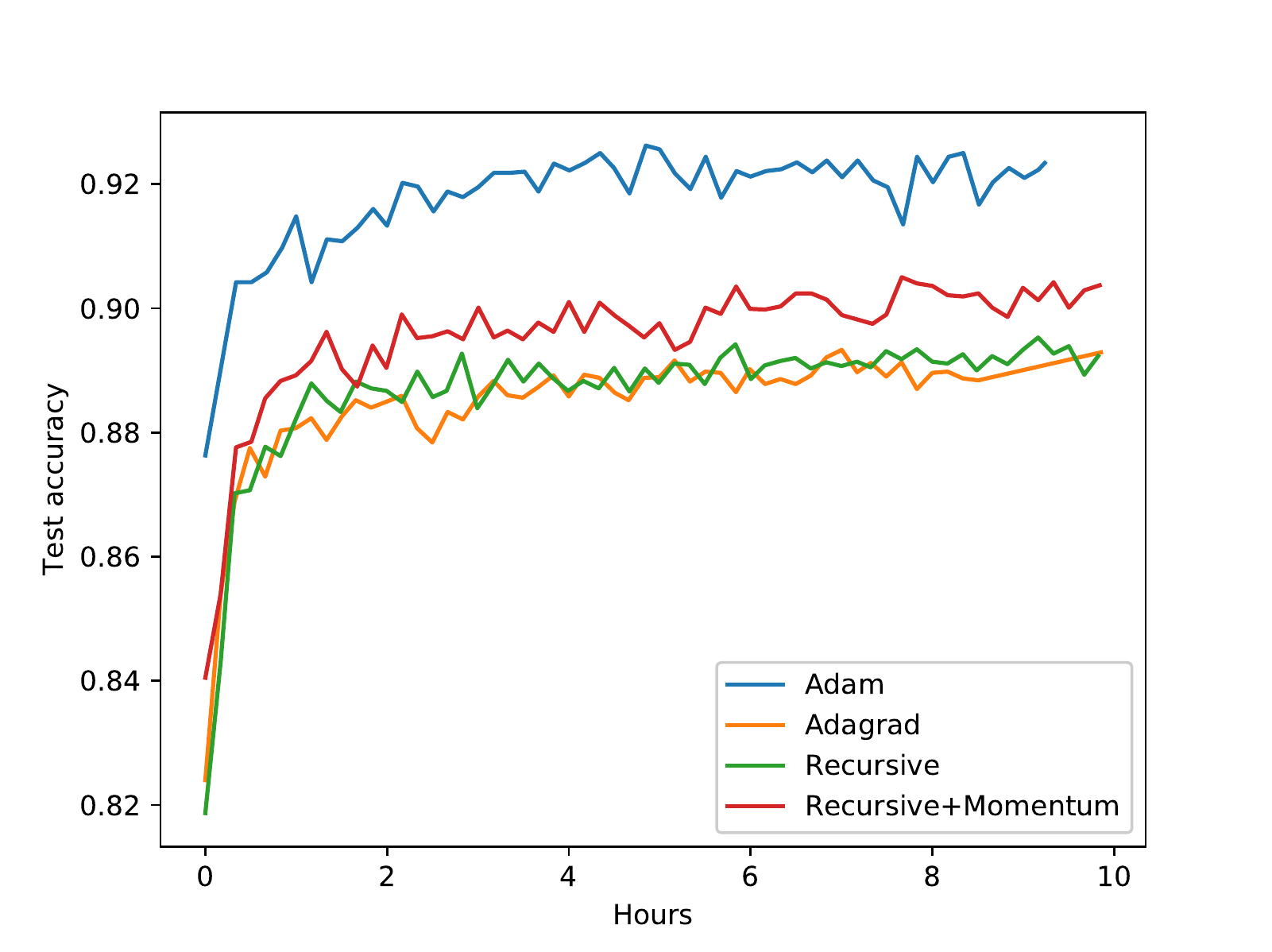}}\\
  \subfigure[][Train loss vs steps]{\includegraphics[width=.4\textwidth]{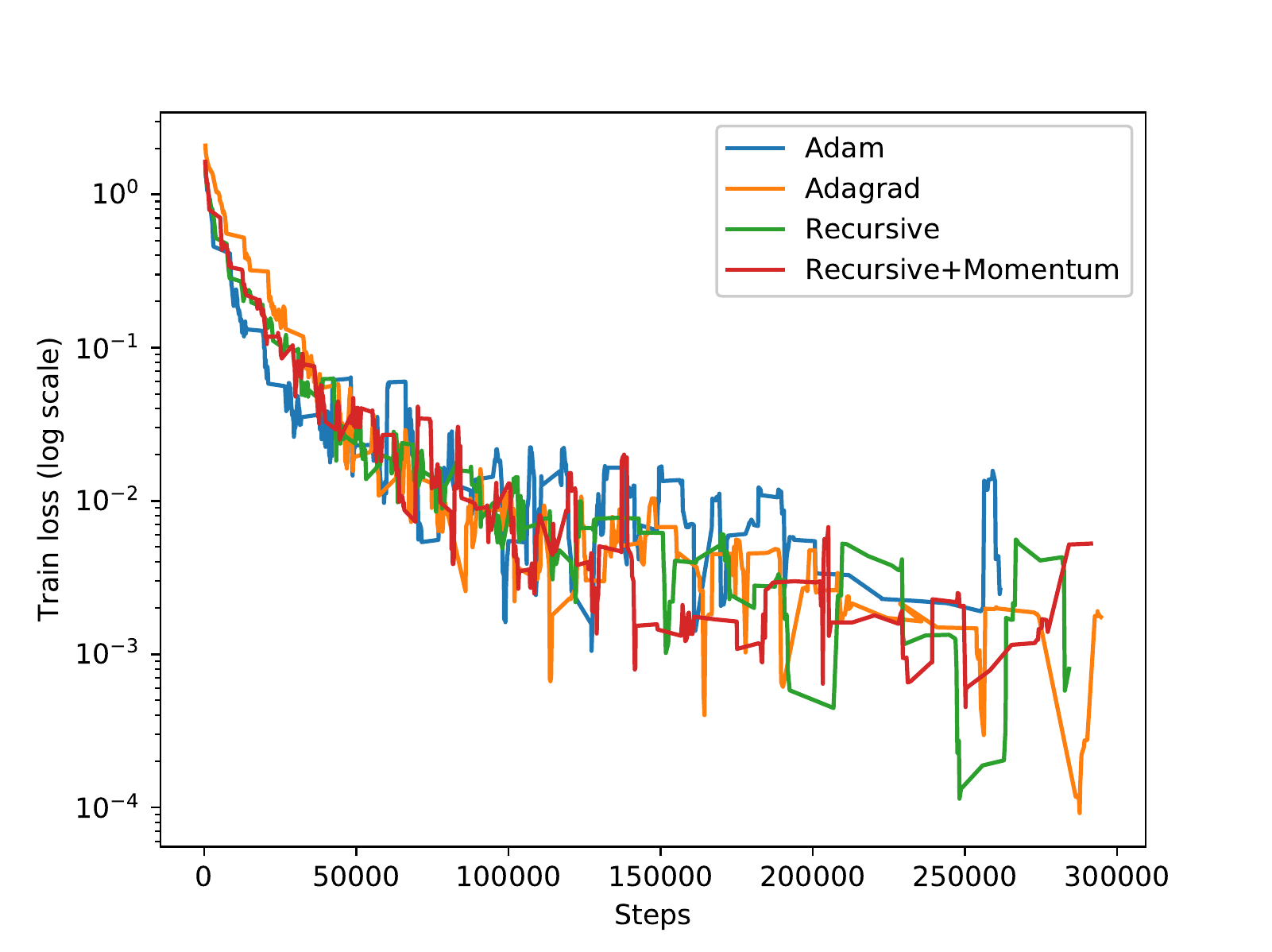}}\quad
  \subfigure[][Train loss vs time]{\includegraphics[width=.4\textwidth]{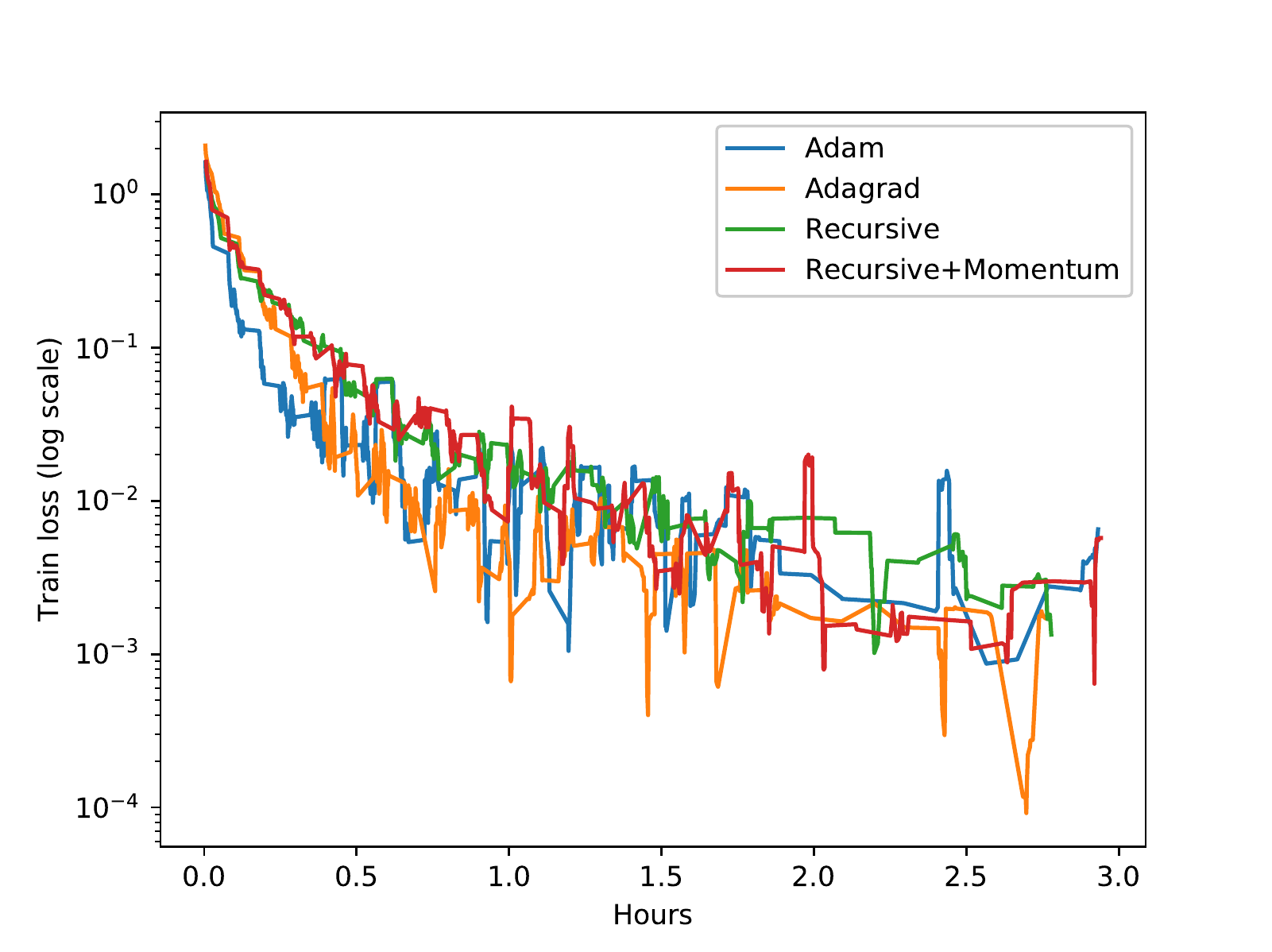}}
  \caption{CIFAR-10 with ResNet-32}
  \label{fig:image_cifar10}
\end{figure*}

\begin{figure*}
  \centering
  \subfigure[][Test accuracy vs steps]{\includegraphics[width=.4\textwidth]{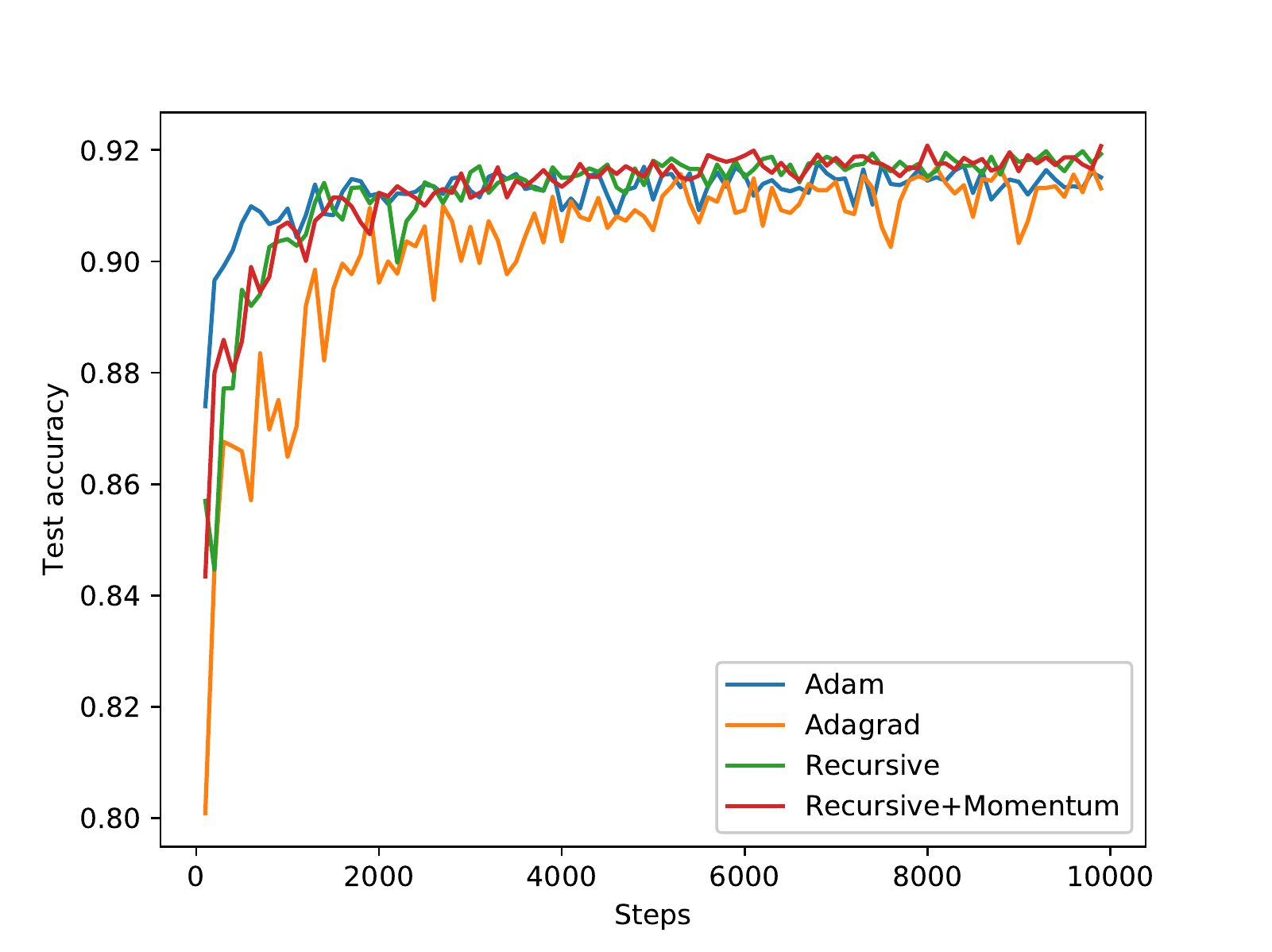}}\quad
  \subfigure[][Train loss vs steps]{\includegraphics[width=.4\textwidth]{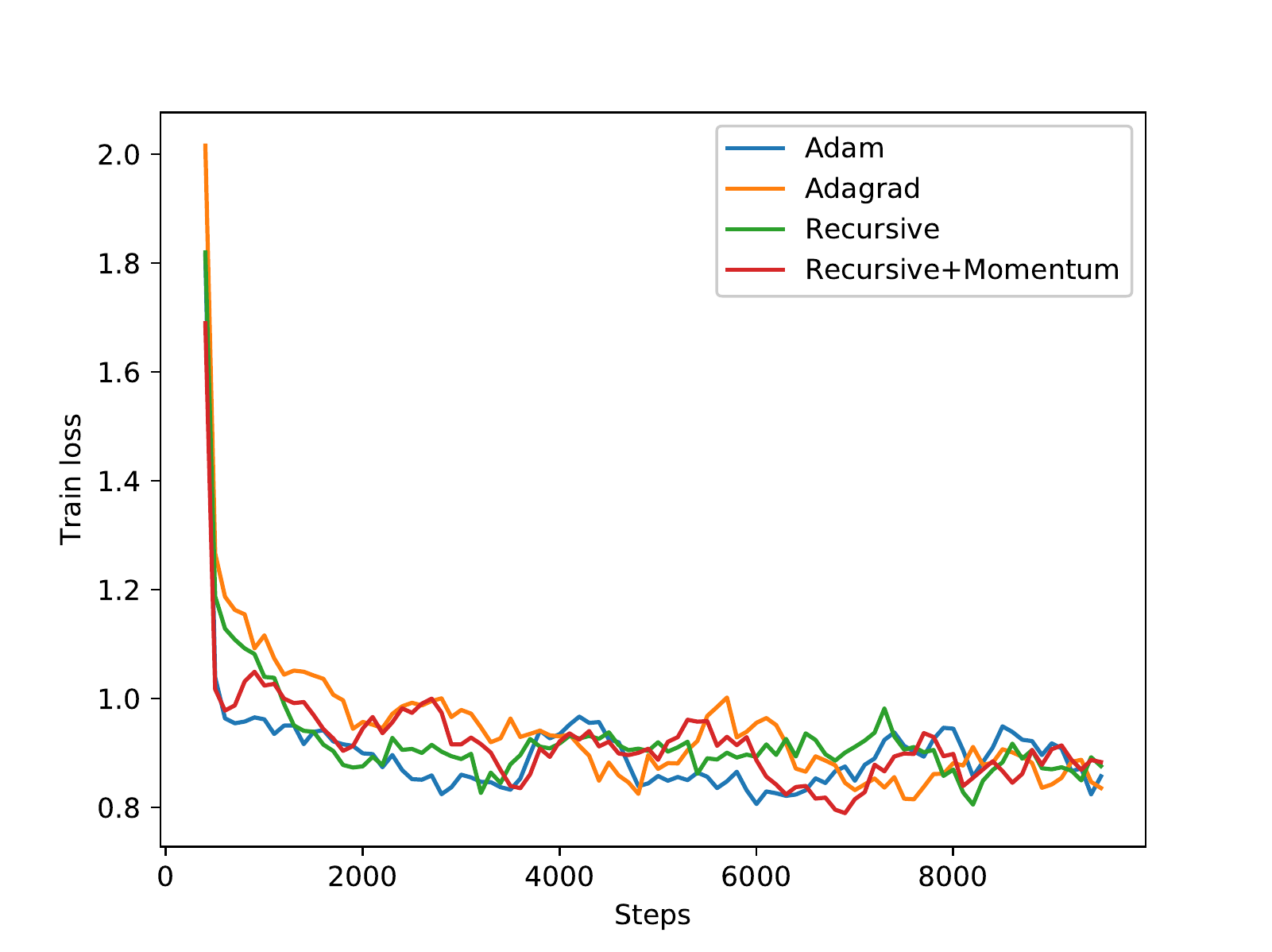}}
  \caption{MNIST with logistic regression}
  \label{fig:image_mnist-linear}
\end{figure*}

\begin{figure*}
  \centering
  \subfigure[][Test accuracy vs steps]{\includegraphics[width=.4\textwidth]{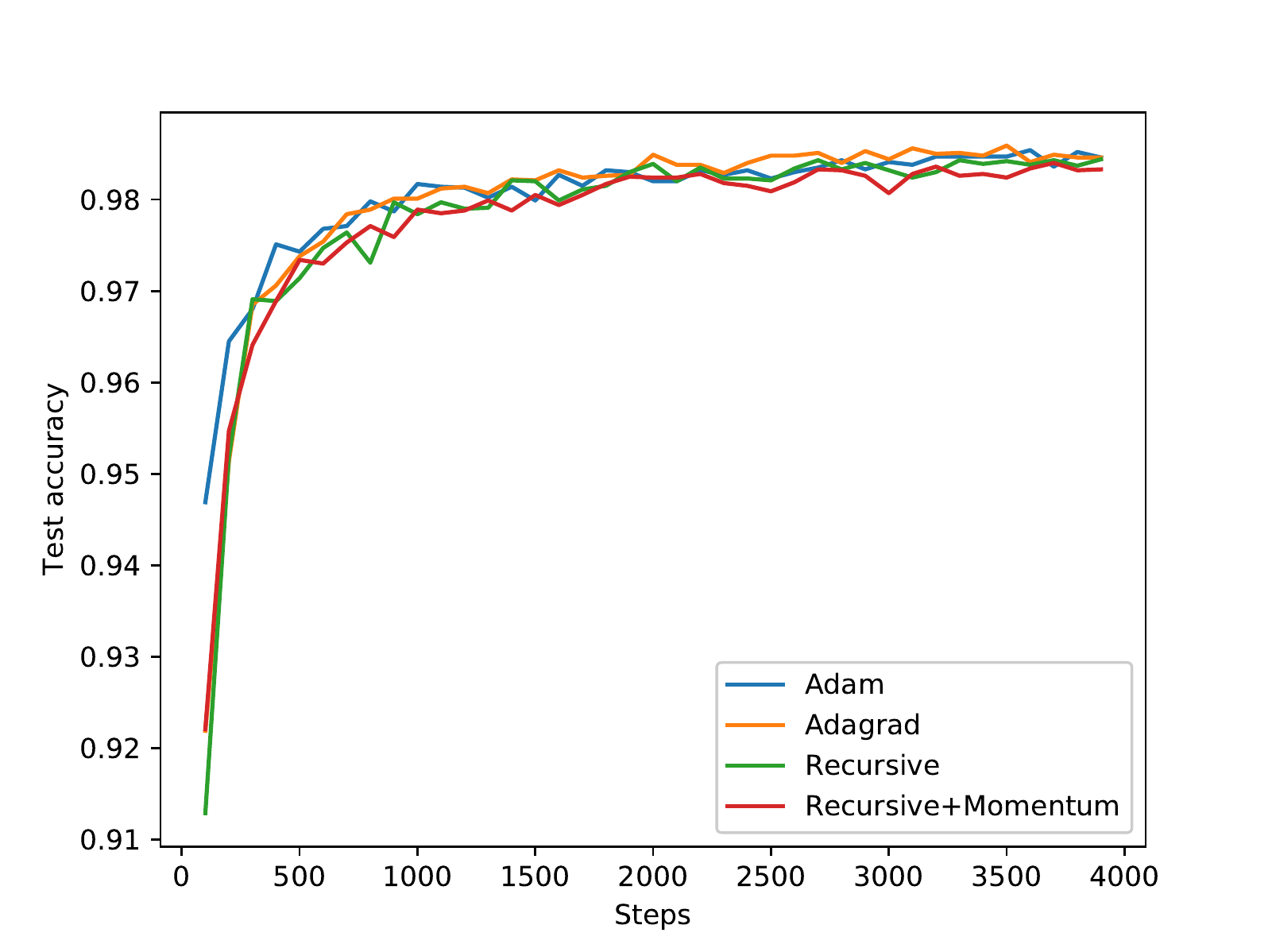}}\quad
  \subfigure[][Train loss vs steps]{\includegraphics[width=.4\textwidth]{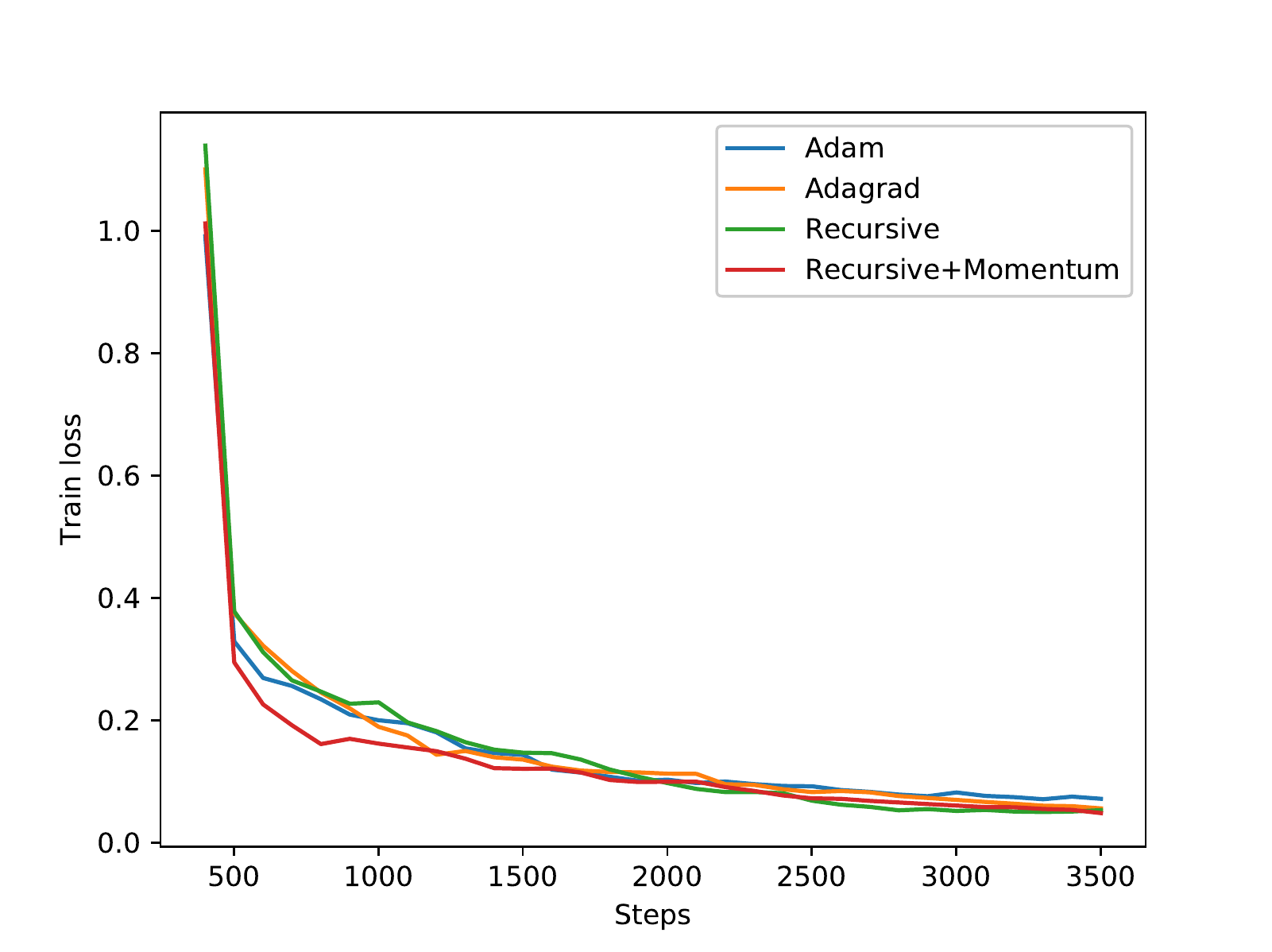}}
  \caption{MNIST with two layer fully connected network}
  \label{fig:image_mnist-fc}
\end{figure*}

\begin{figure*}
  \centering
    \subfigure[][Test accuracy vs steps]{\includegraphics[width=.4\textwidth]{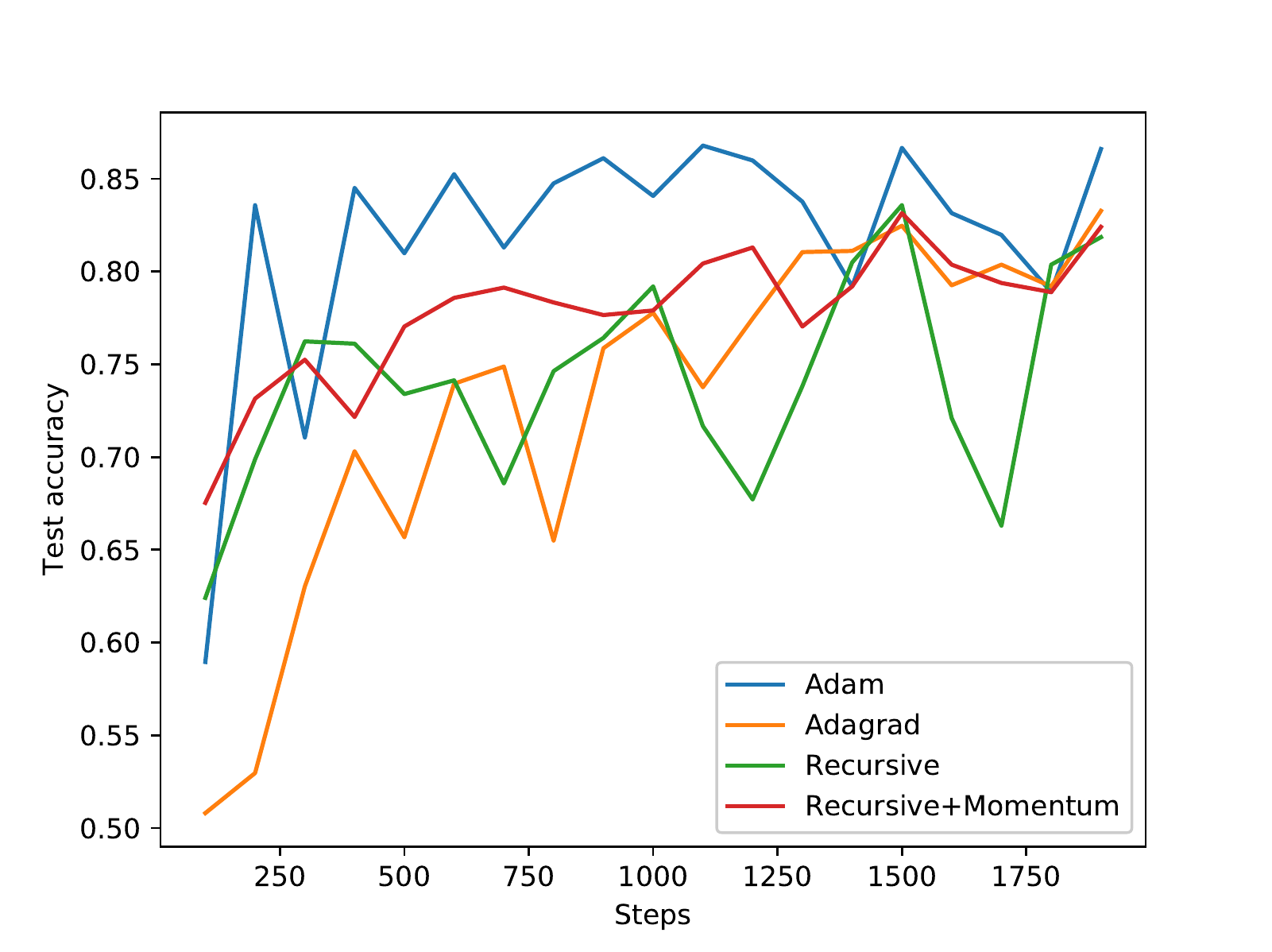}}\quad
  \subfigure[][Train loss vs steps]{\includegraphics[width=.4\textwidth]{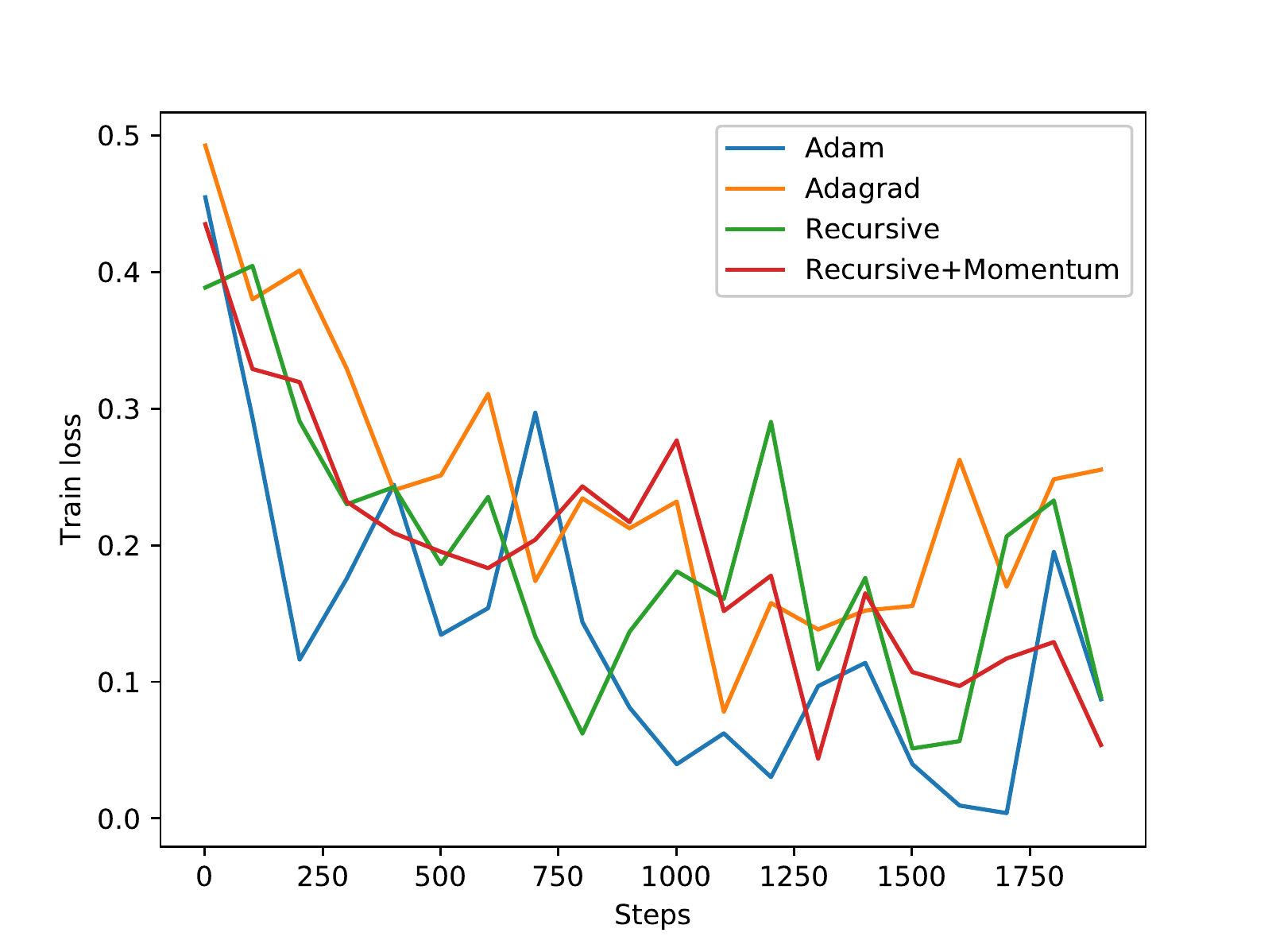}}
  \caption{IMDB sentiment classification with Transformer}
  \label{fig:sentiment_imdb}
\end{figure*}

\begin{figure*}
  \centering
  \subfigure[][Test accuracy vs steps]{\includegraphics[width=.4\textwidth]{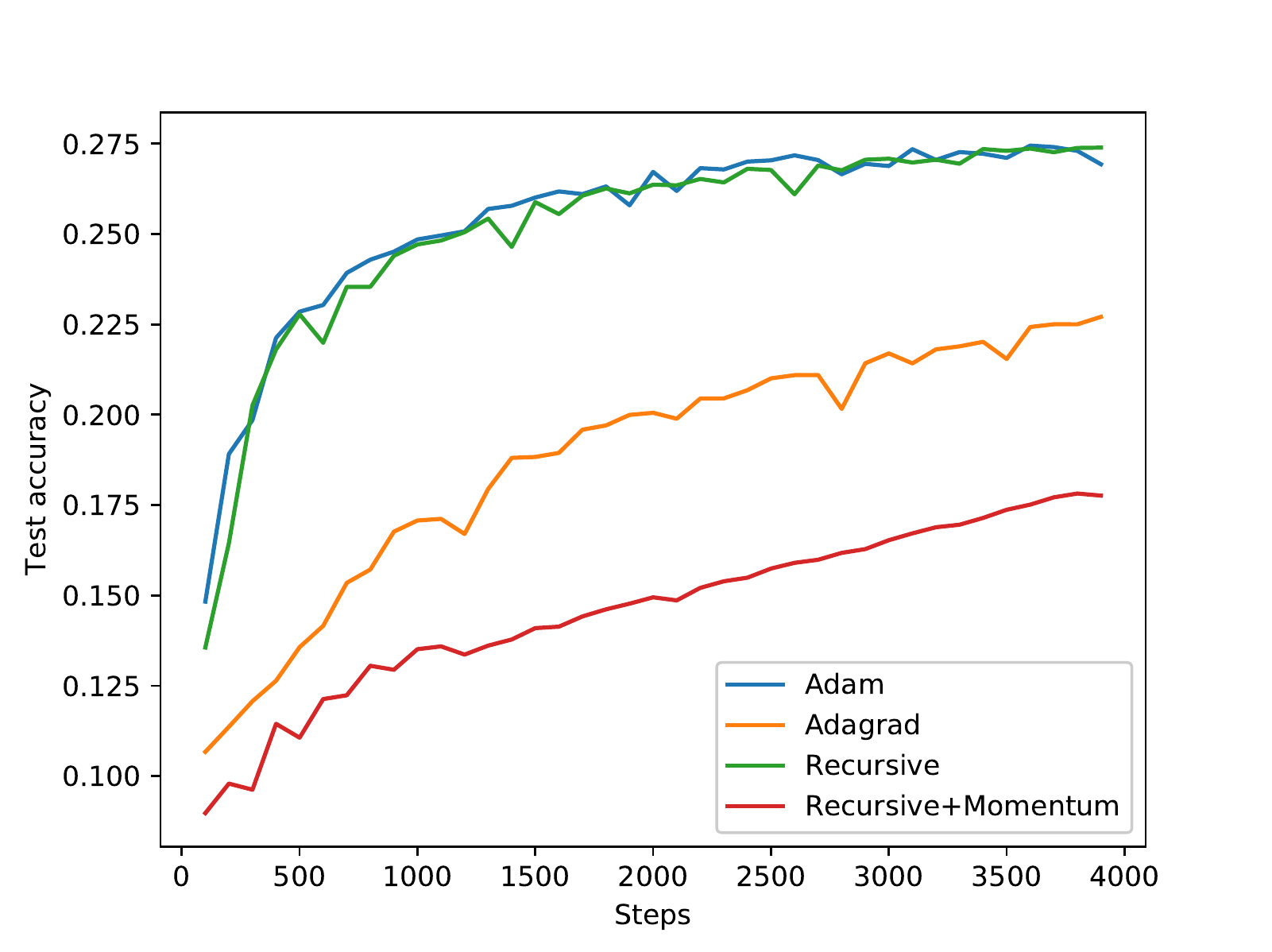}}\quad
  \subfigure[][Test accuracy vs time]{\includegraphics[width=.4\textwidth]{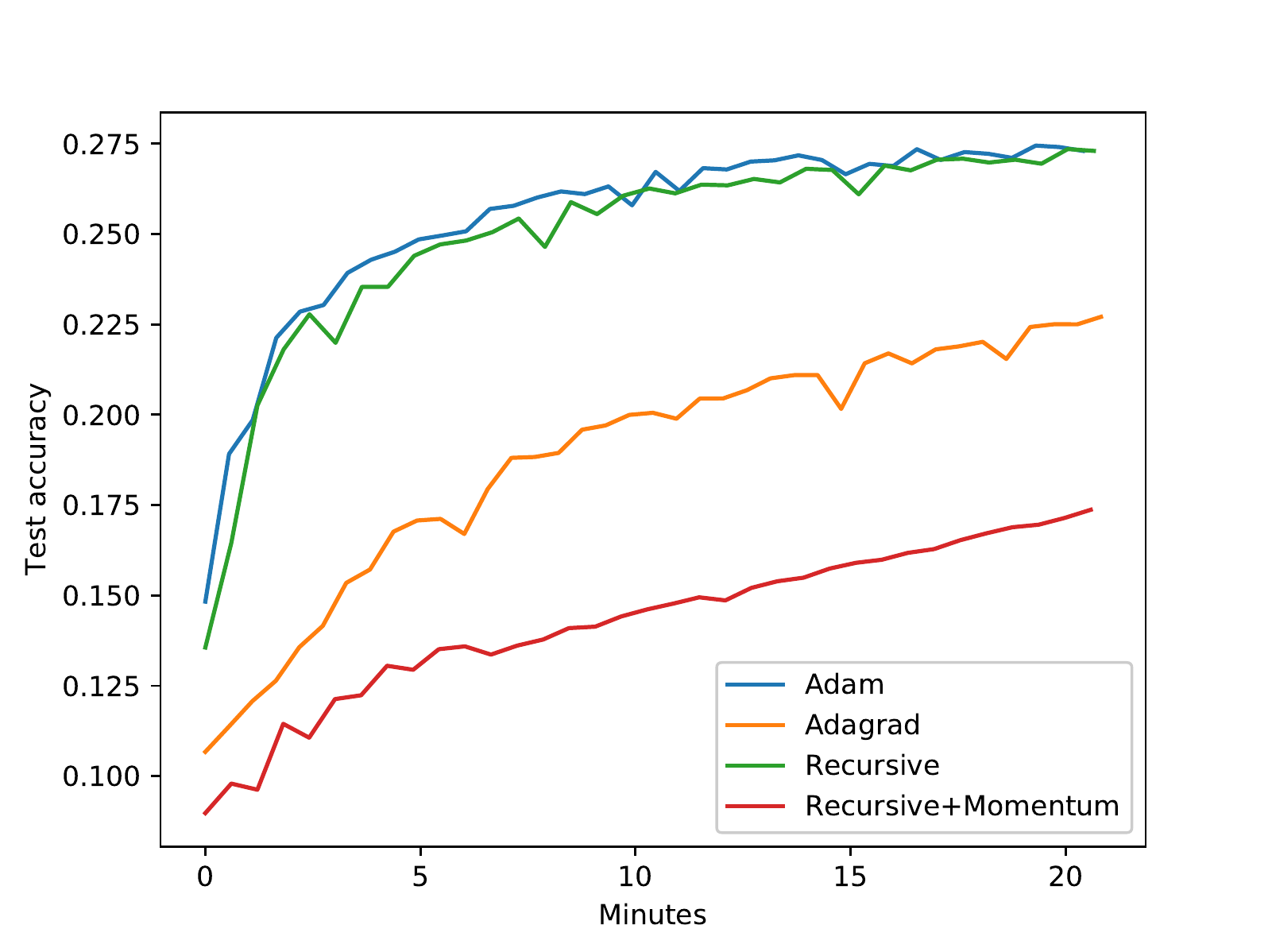}}\\
  \subfigure[][Test log perplexity vs steps]{\includegraphics[width=.4\textwidth]{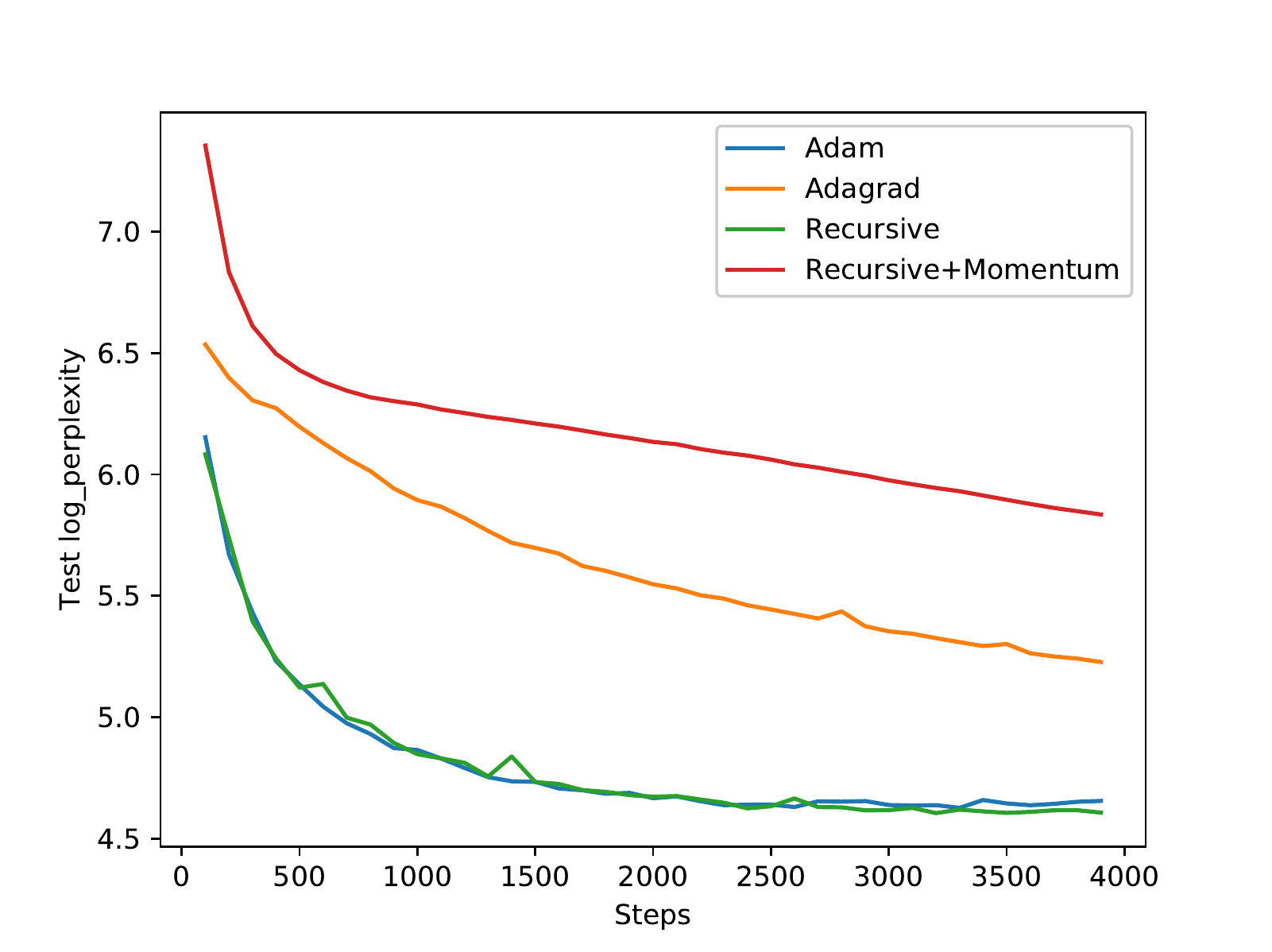}}\quad
  \subfigure[][Test log perplexity vs time]{\includegraphics[width=.4\textwidth]{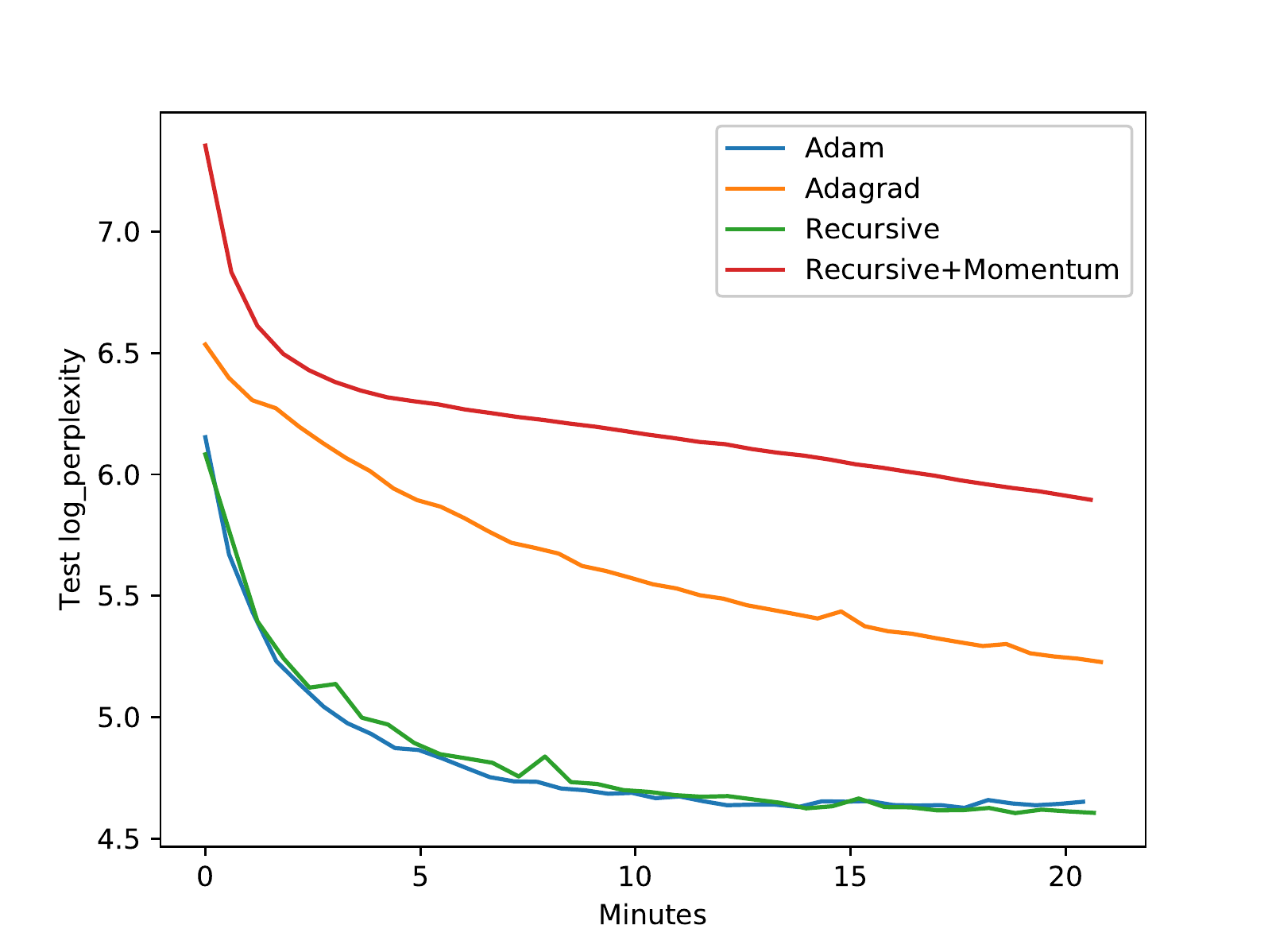}}\\
  \subfigure[][Train loss vs steps]{\includegraphics[width=.4\textwidth]{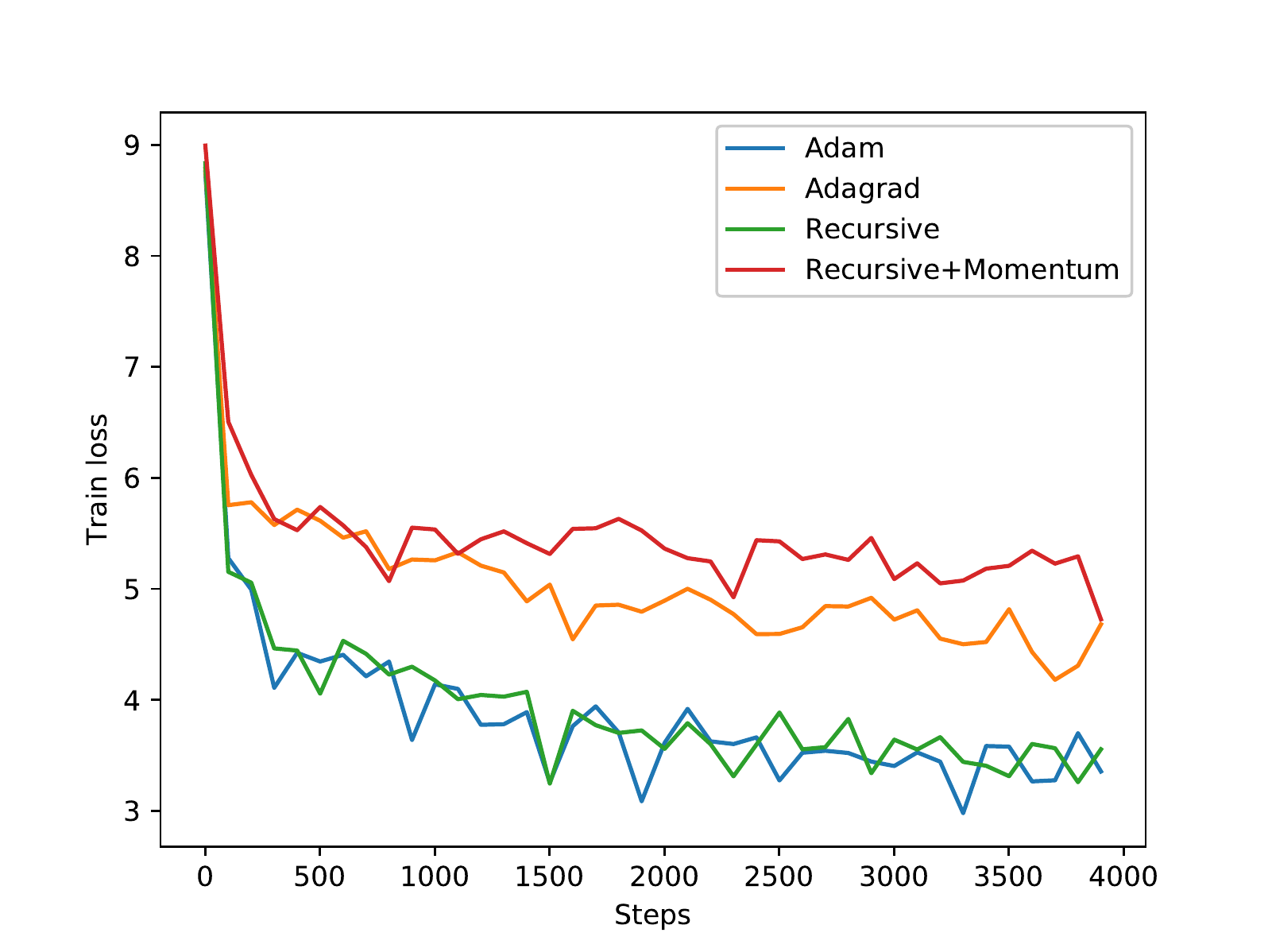}}\quad
  \subfigure[][Train loss vs time]{\includegraphics[width=.4\textwidth]{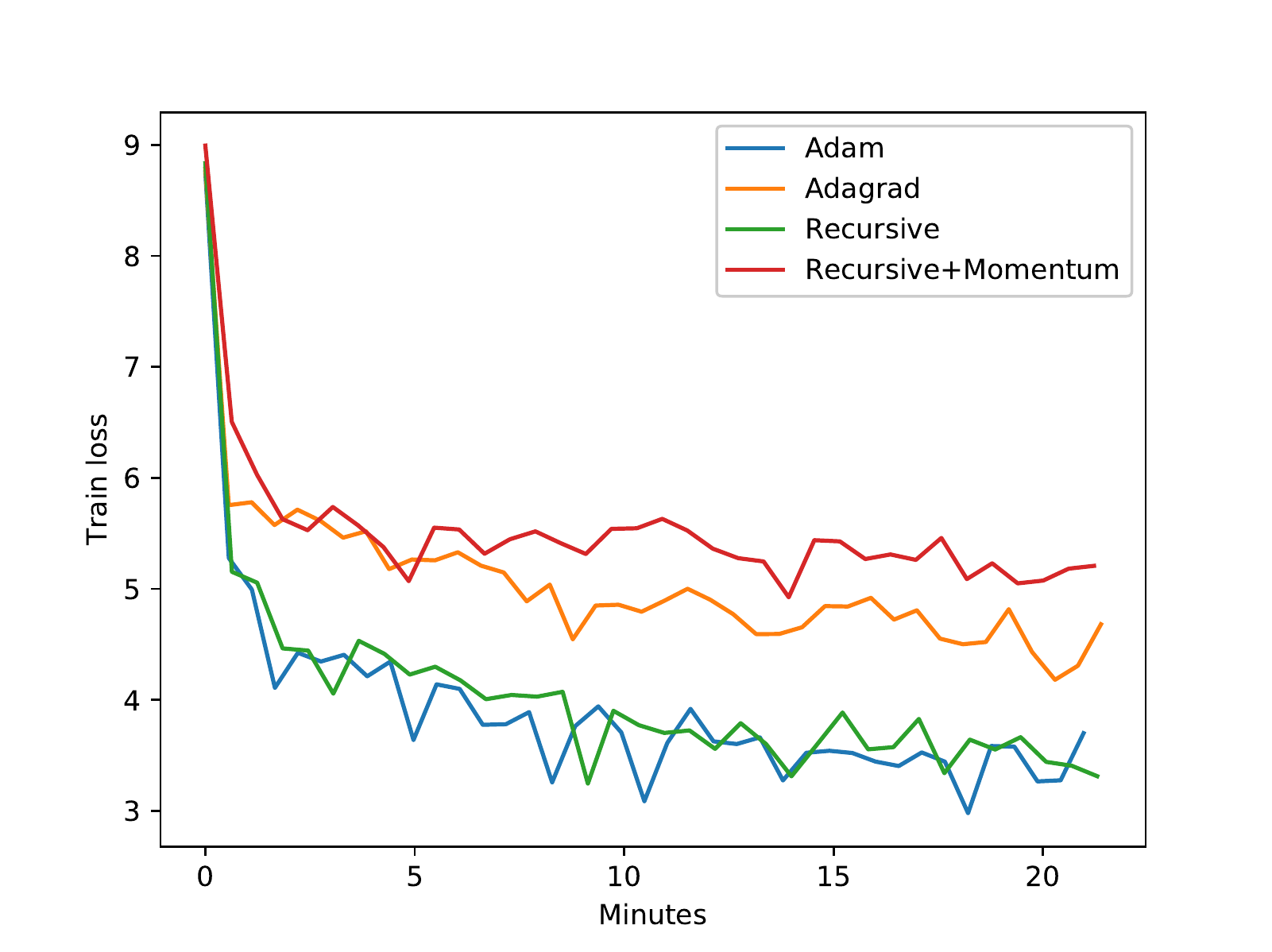}}
  \caption{Penn Tree Bank with Transformer}
  \label{fig:languagemodel_ptb10k}
\end{figure*}

\begin{figure*}
  \centering
  \subfigure[][Test accuracy vs steps]{\includegraphics[width=.4\textwidth]{figs/languagemodel_lm1b32k/Test-accuracy-Steps-eps-converted-to.pdf}}\quad
  \subfigure[][Test accuracy vs time]{\includegraphics[width=.4\textwidth]{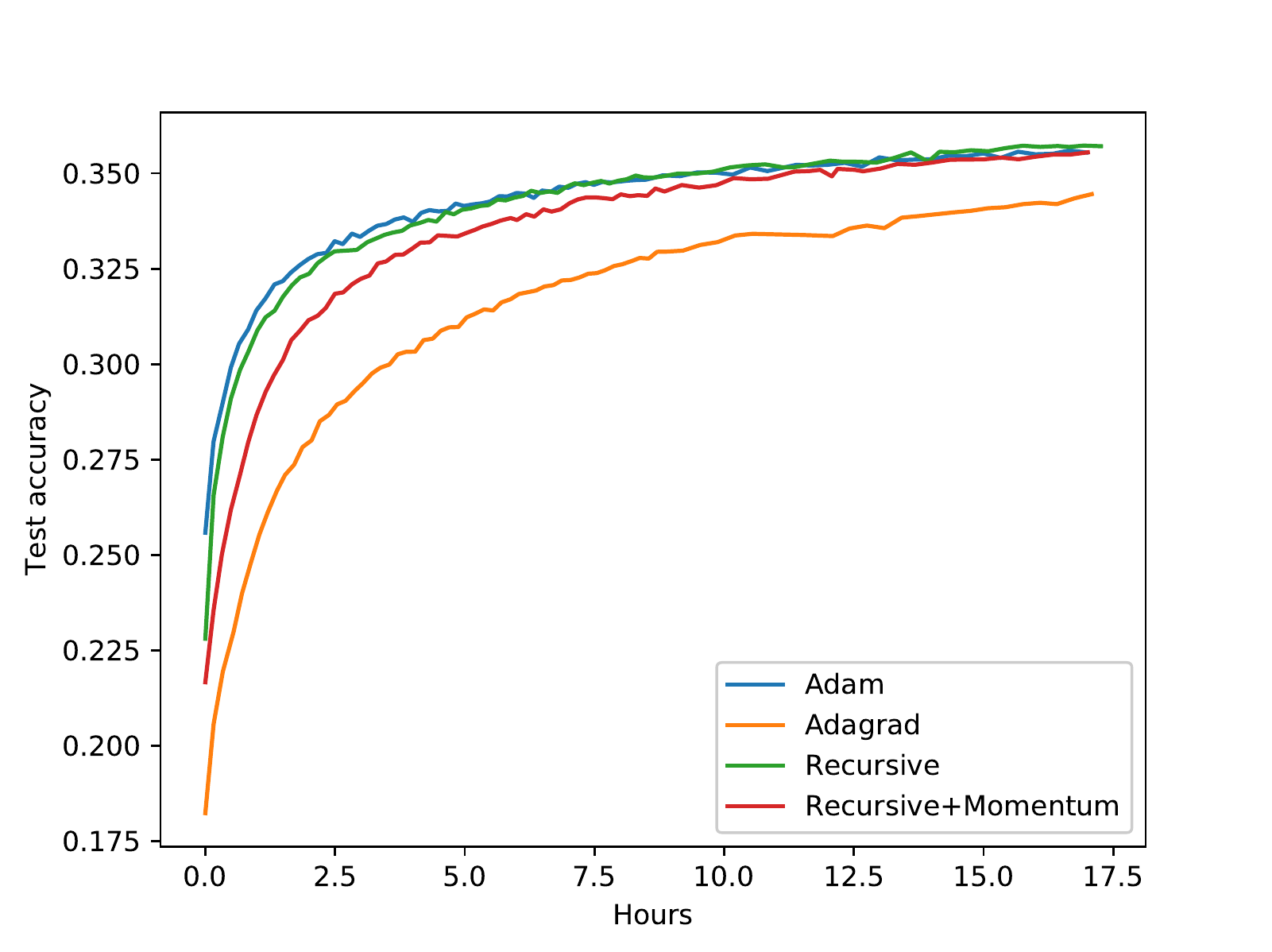}}\\
  \subfigure[][Test log perplexity vs steps]{\includegraphics[width=.4\textwidth]{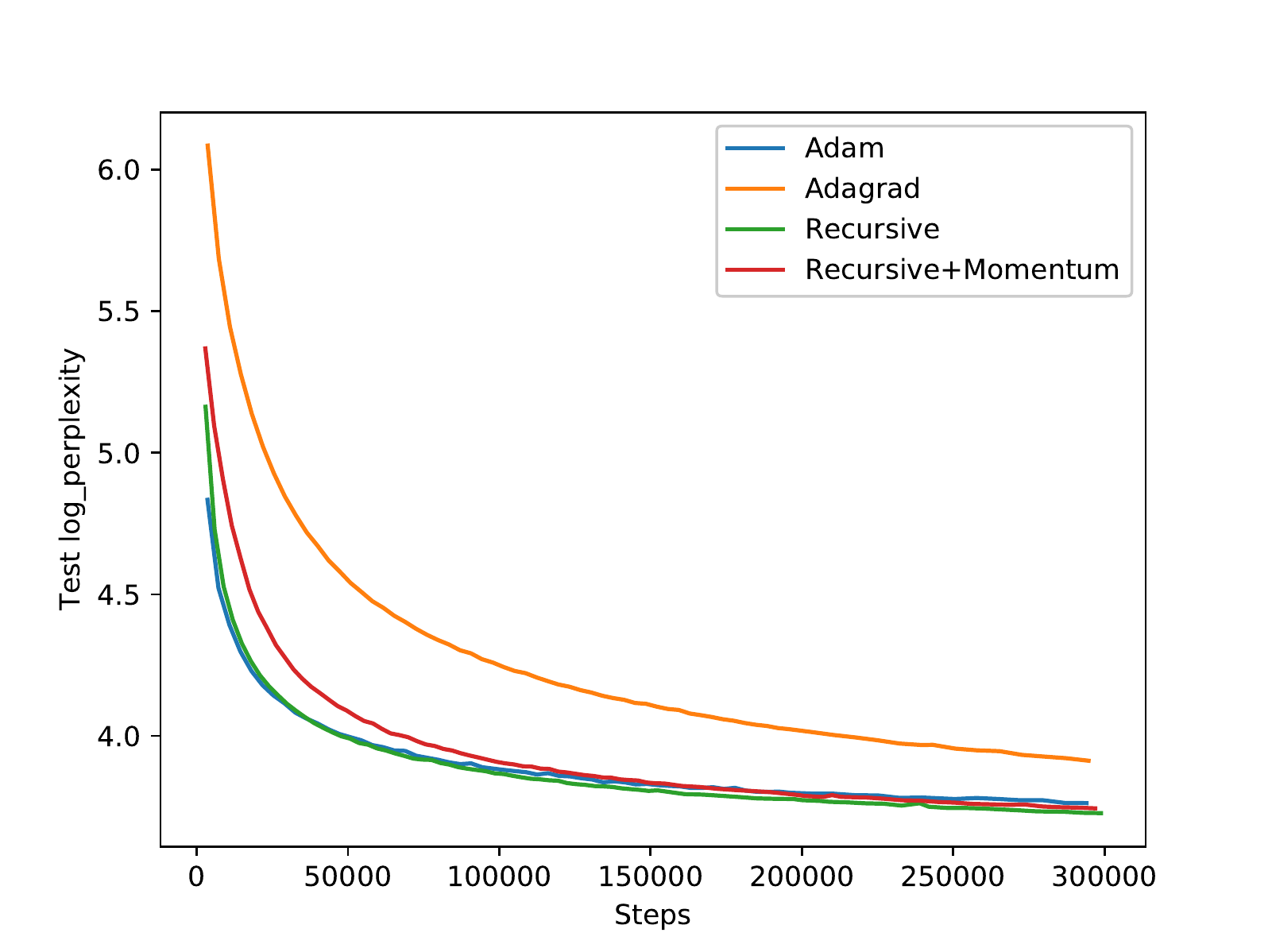}}\quad
  \subfigure[][Test log perplexity vs time]{\includegraphics[width=.4\textwidth]{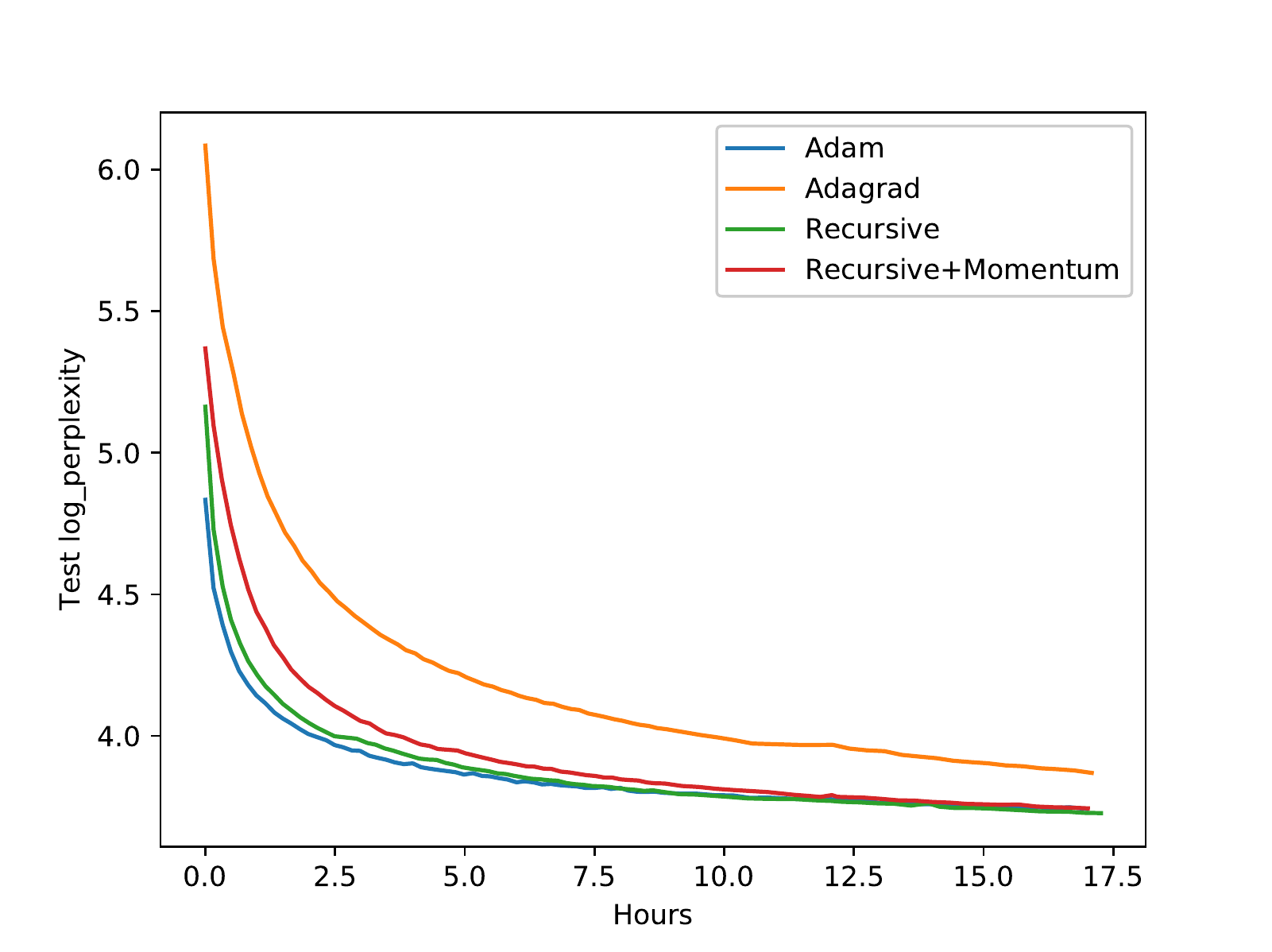}}\\
  \subfigure[][Train loss vs steps]{\includegraphics[width=.4\textwidth]{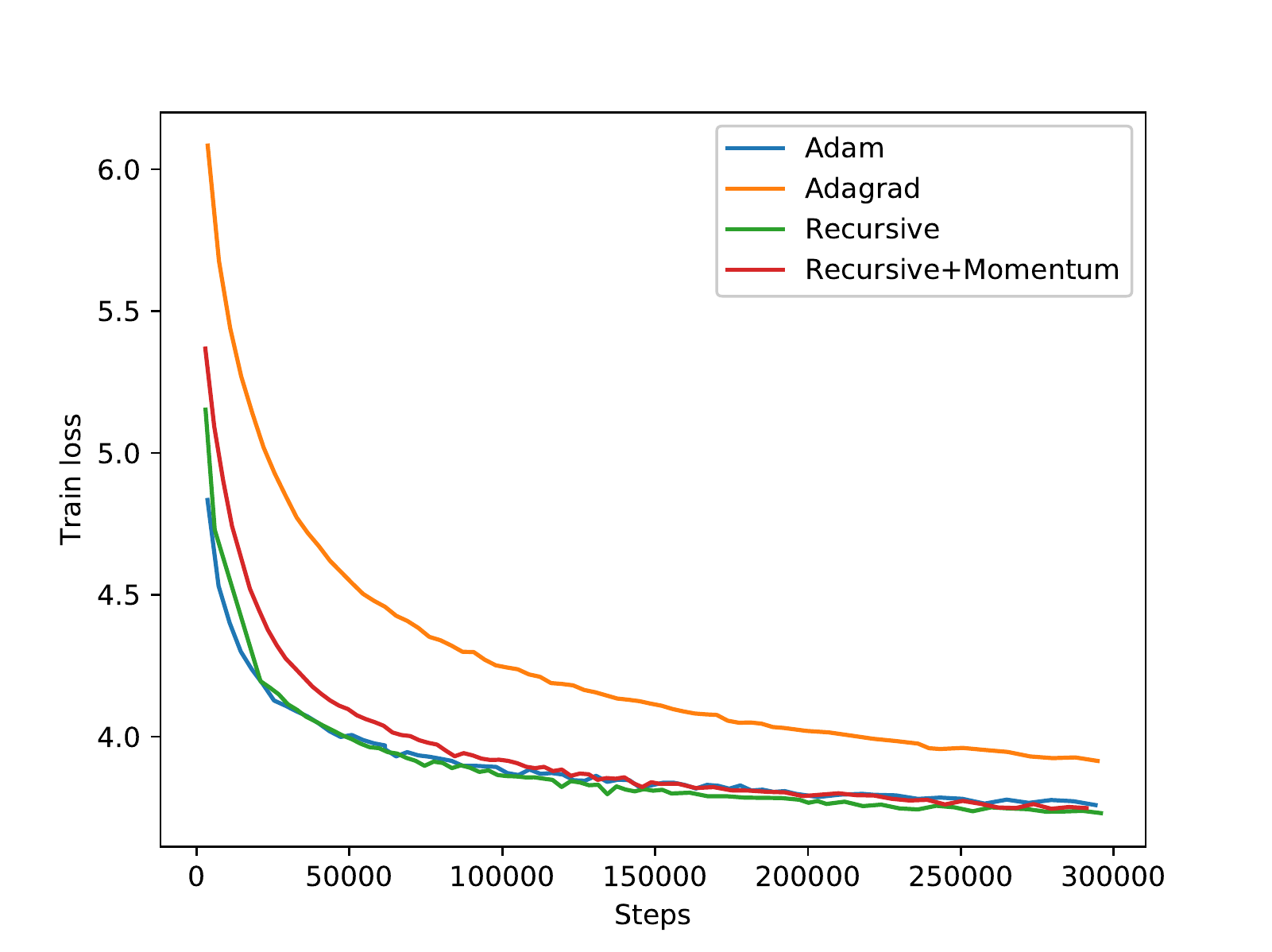}}\quad
  \subfigure[][Train loss vs time]{\includegraphics[width=.4\textwidth]{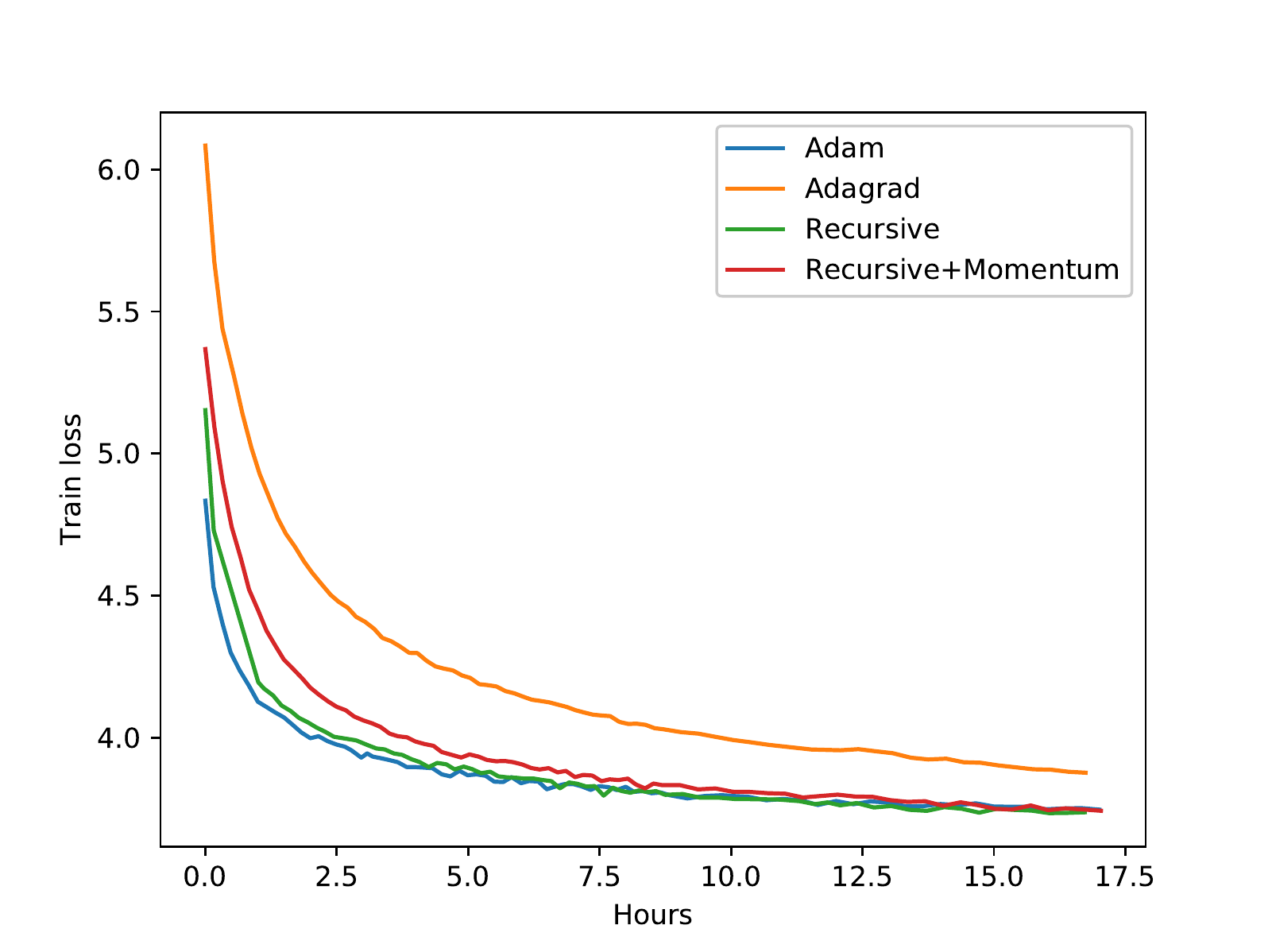}}
  \caption{LM1B with Transformer}
  \label{fig:languagemodel_lm1b32k}
\end{figure*}

\begin{figure*}
  \centering
  \subfigure[][CIFAR-10 with ResNe-32t]{\includegraphics[width=.4\textwidth]{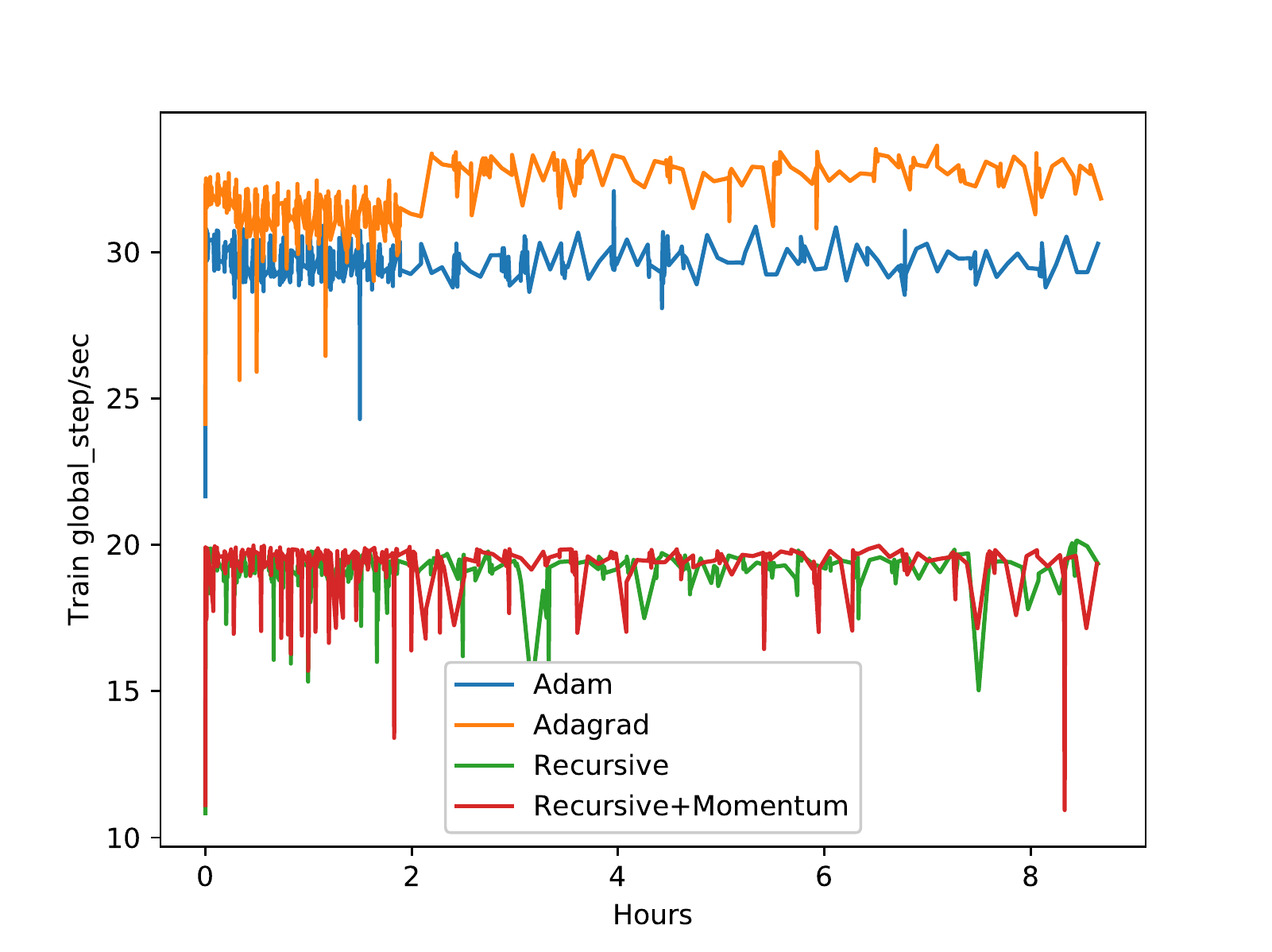}}\quad
  \subfigure[][IMBD with Transformer]{\includegraphics[width=.4\textwidth]{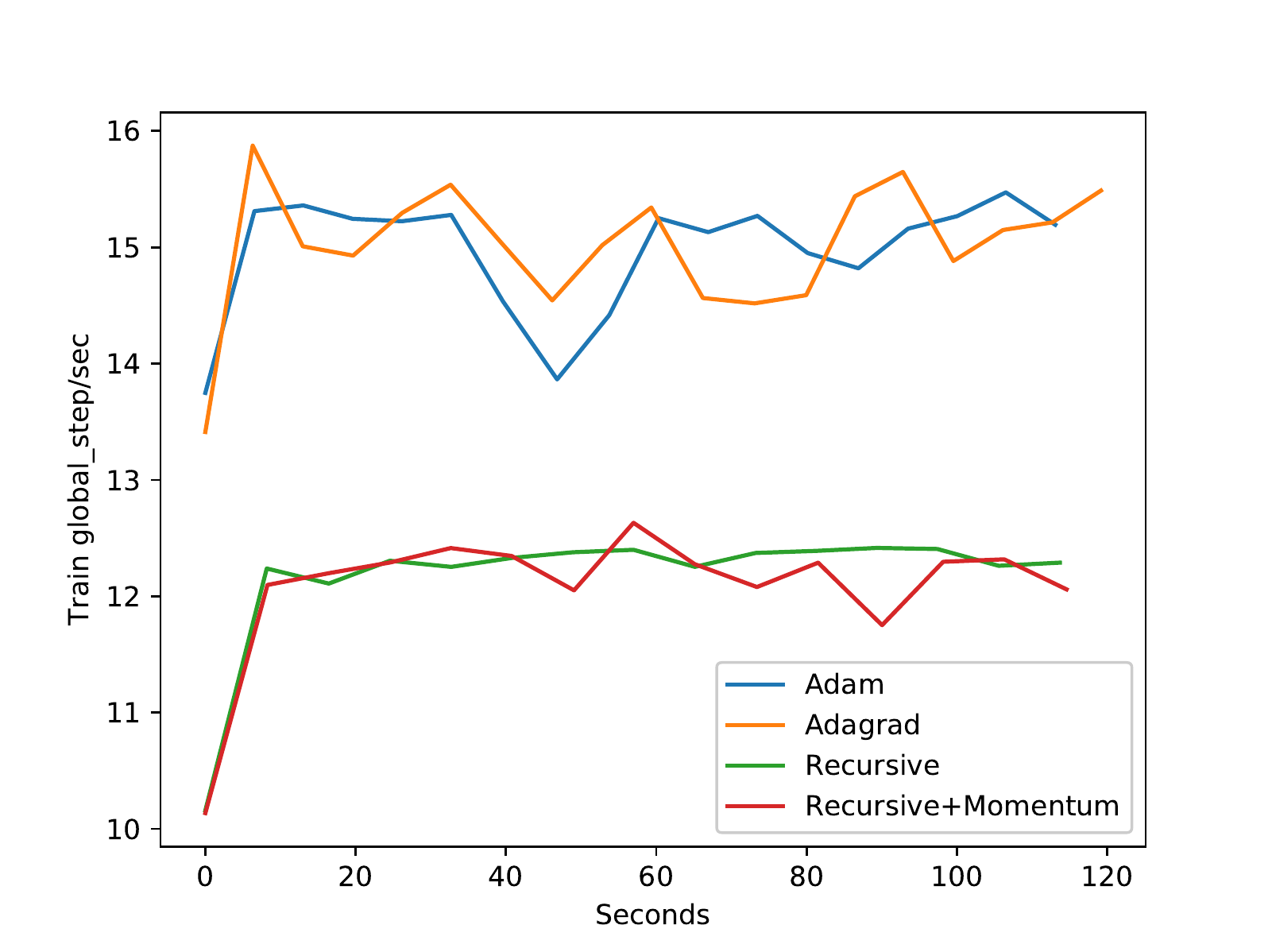}}\\
  \subfigure[][MNIST with logistic regression]{\includegraphics[width=.4\textwidth]{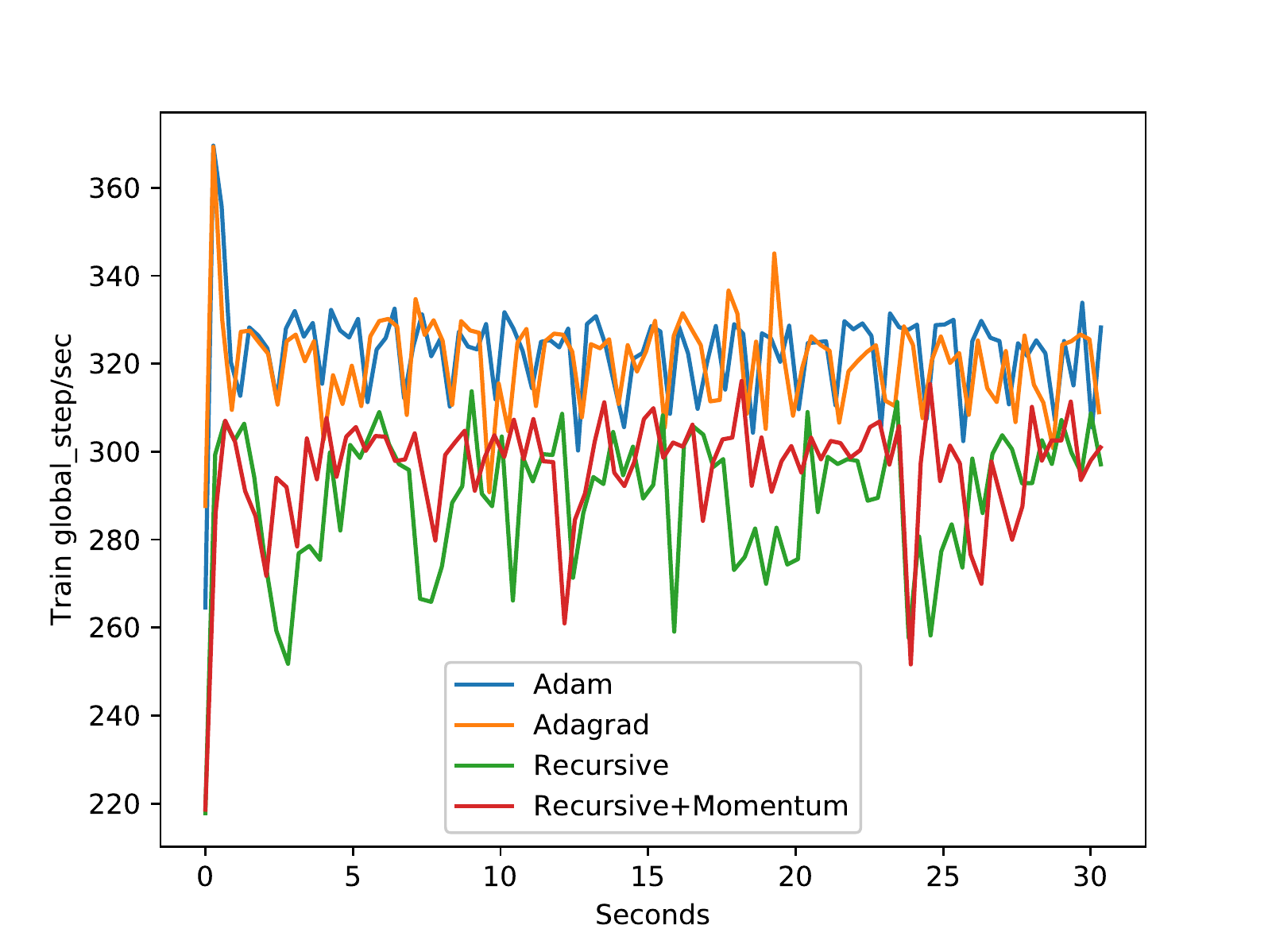}}\quad
  \subfigure[][MNIST with fully connected network]{\includegraphics[width=.4\textwidth]{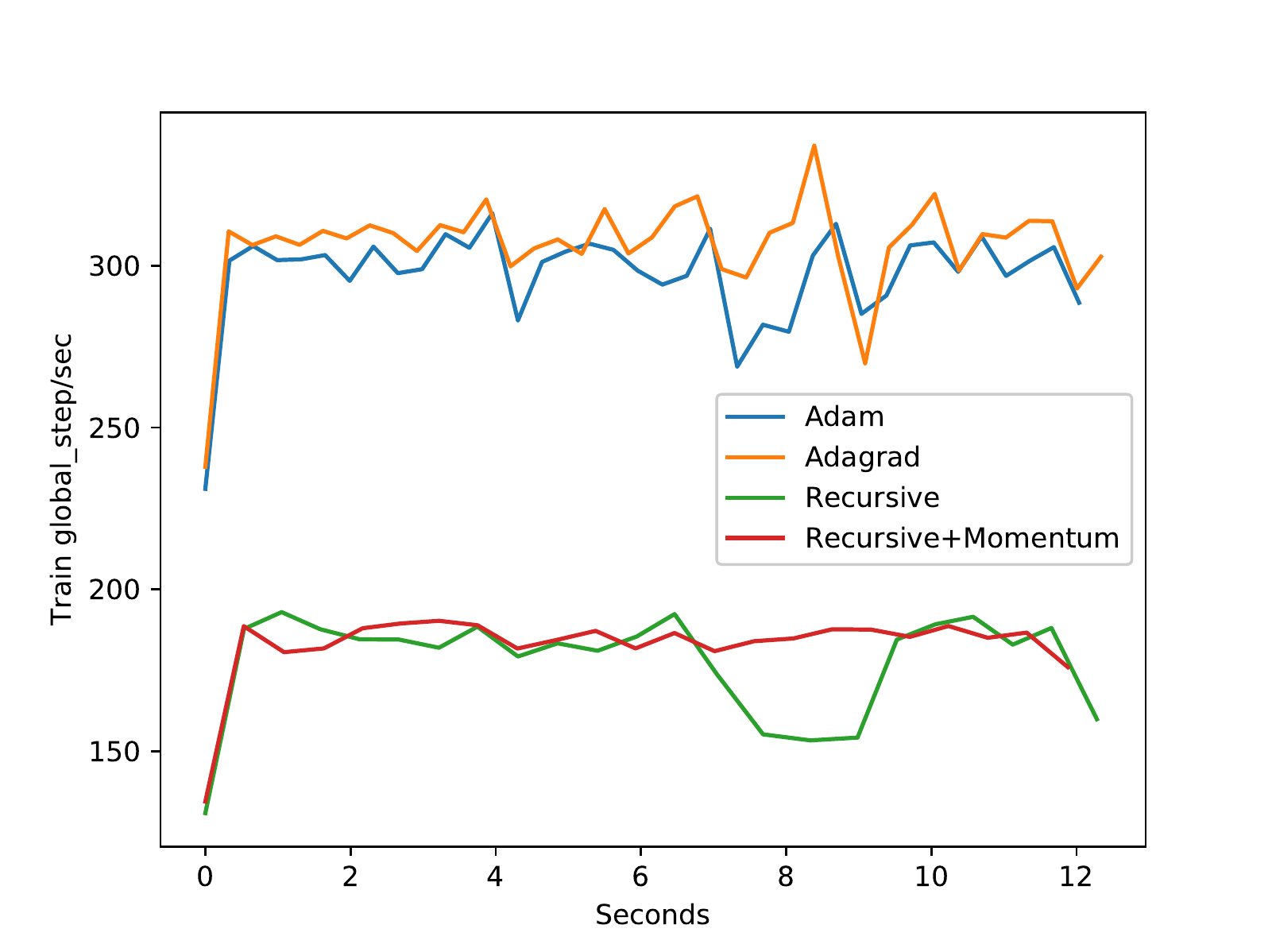}}\\
  \subfigure[][Penn Tree Bank with Transformer]{\includegraphics[width=.4\textwidth]{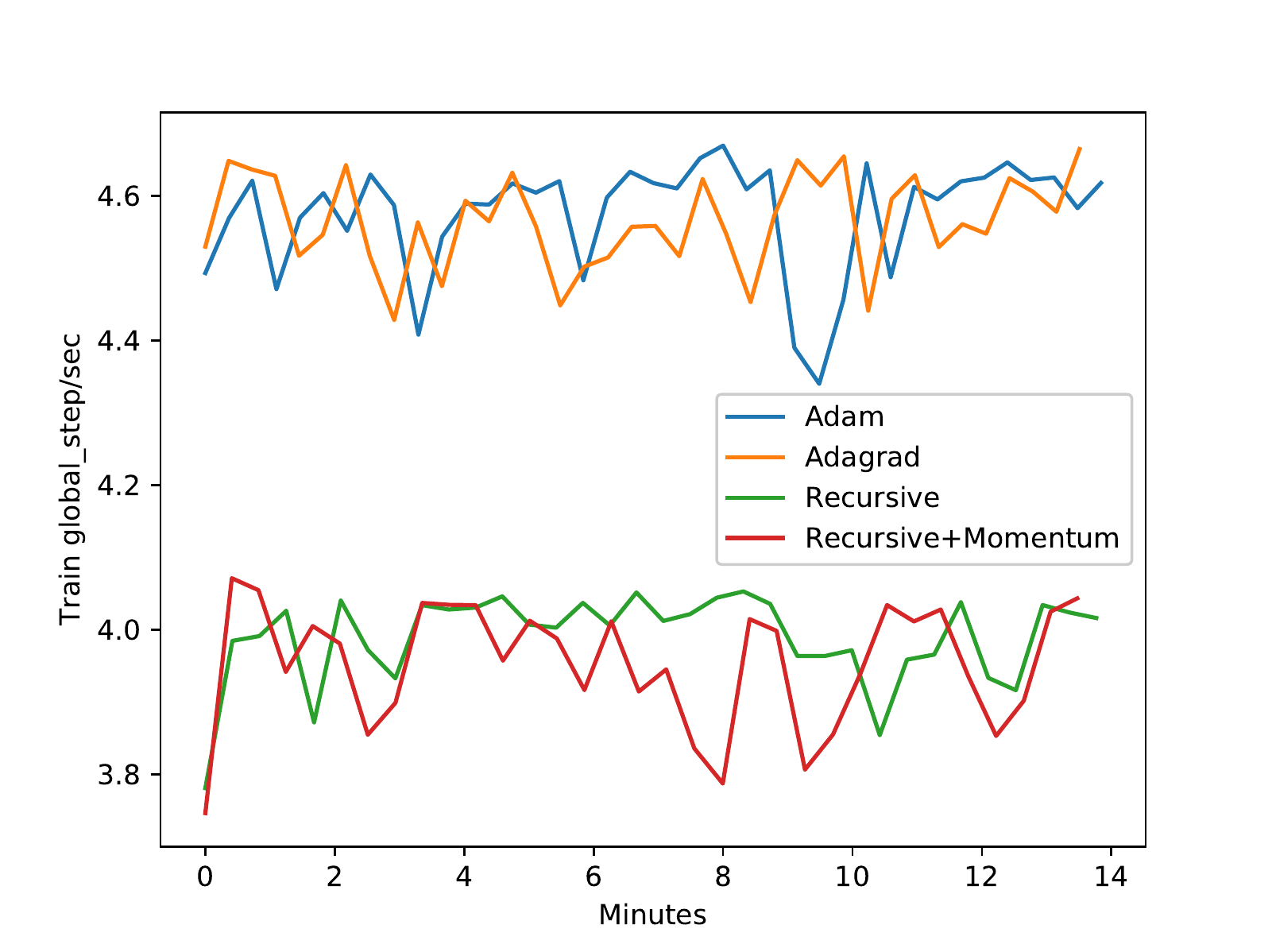}}\quad
  \subfigure[][LM1B with Transformer]{\includegraphics[width=.4\textwidth]{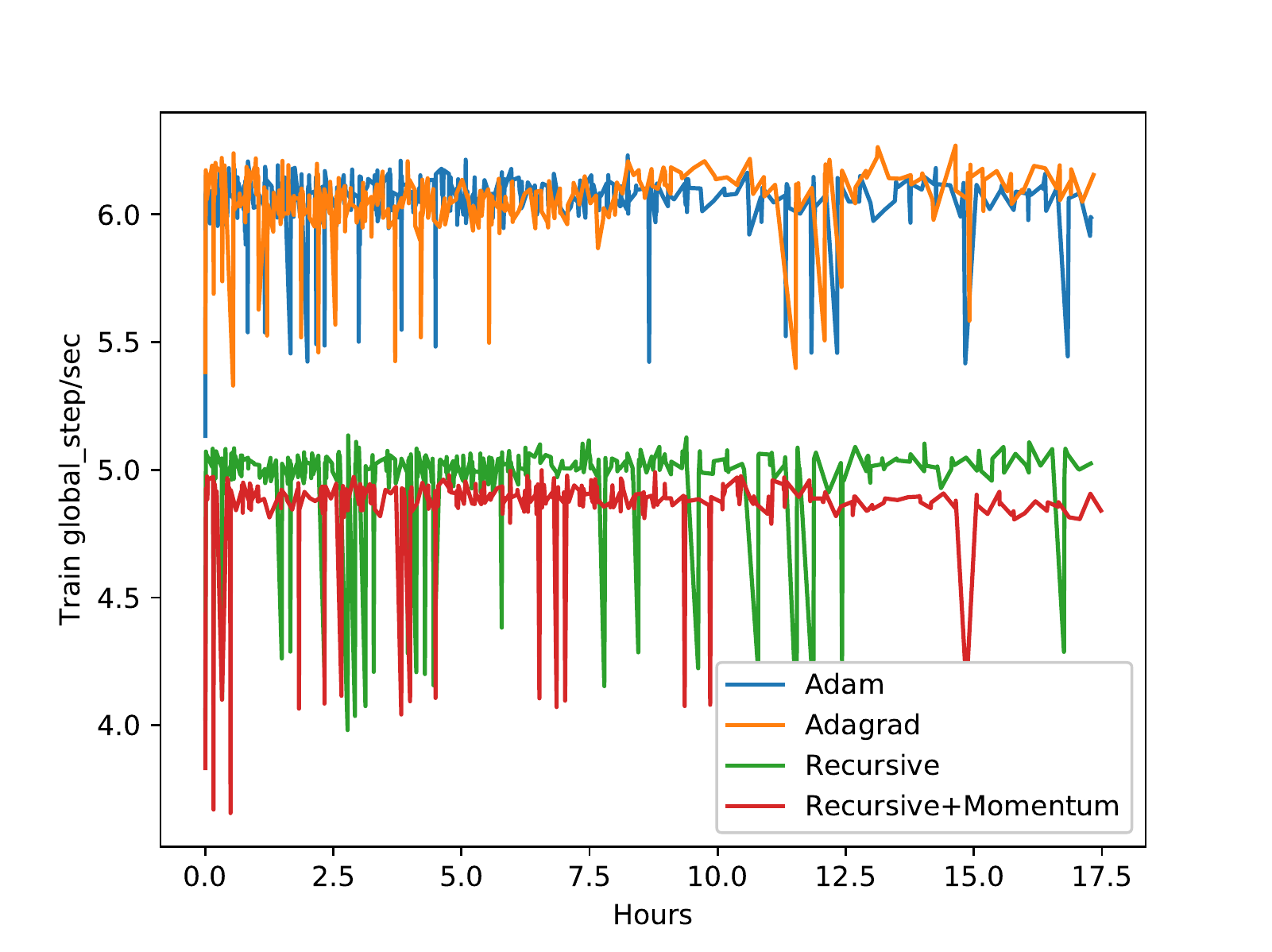}}
  \caption{Number of iterations per second}
  \label{fig:speed}
\end{figure*}
\end{document}